\newtheorem{thm}{Theorem}
\newtheorem{lem}[thm]{Lemma}
\newtheorem{prop}[thm]{Proposition}
\newtheorem{cor}[thm]{Corollary}
\newtheorem{rem}{Remark}
\newtheorem*{thm*}{Theorem}
\newtheorem*{lem*}{Lemma}
\newtheorem*{prop*}{Proposition}
\newtheorem*{cor*}{Corollary}
\newcommand{\Tab}[1]{Table~\ref{#1}}
\newcommand{\Eqn}[1]{Eq.~(\ref{#1})}
\newcommand{\Lem}[1]{Lemma~\ref{#1}}
\newcommand{\Thm}[1]{Theorem~\ref{#1}}
\newcommand{\Prop}[1]{Proposition~\ref{#1}}
\renewcommand{\hat}{\widehat}
\renewcommand{\tilde}{\widetilde}
\renewcommand{\>}{{\,\rightarrow\,}}
\renewcommand{\=}{\stackrel{\triangle}{=}}
\newcommand{\hy}{{\widehat y}}
\newcommand{\argmax}{\operatorname{argmax}}
\newcommand{\R}{{\mathbb R}}
\newcommand{\Z}{{\mathbb Z}}
\newcommand{\N}{{\mathbb N}}
\renewcommand{\P}{{\mathbf P}}
\newcommand{\E}{{\mathbf E}}
\newcommand{\1}{{\mathbf 1}}
\newcommand{\cA}{{\mathcal A}}
\newcommand{\A}{{\mathbf A}}
\newcommand{\cB}{{\mathcal B}}
\newcommand{\B}{{\mathbf B}}
\newcommand{\CC}{{\mathcal C}}
\renewcommand{\H}{{\mathcal H}}
\newcommand{\M}{{\mathbf M}}
\newcommand{\cP}{{\mathcal P}}
\newcommand{\V}{{\mathcal V}}
\newcommand{\X}{{\mathcal X}}
\newcommand{\Y}{{\mathcal Y}}
\newcommand{\Pf}{{\mathcal P}}
\newcommand{\e}{{\mathbf e}}
\newcommand{\f}{{\mathbf f}}
\newcommand{\g}{{\mathbf g}}
\newcommand{\h}{{\mathbf h}}
\newcommand{\p}{{\mathbf p}}
\renewcommand{\u}{{\mathbf u}}
\renewcommand{\v}{{\mathbf v}}
\newcommand{\x}{{\mathbf x}}
\newcommand{\C}{{\mathbf C}}
\newcommand{\G}{{\mathbf G}}
\newcommand{\boldeta}{{\boldsymbol \eta}}
\newcommand{\bmu}{{\boldsymbol \mu}}
\newcommand{\reg}{\textup{\textrm{regret}}}
\newcommand{\conf}{\textup{\textrm{conf}}}
\newcommand{\CPE}{\textup{\textrm{CPE}}}
\newcommand{\FW}{\textup{\textrm{FW}}}
\newcommand{\apx}{\textup{\textrm{apx}}}
\newcommand{\TPR}{\textrm{{TPR}}}
\newcommand{\vol}{\textrm{{vol}}}
\newcommand{\MM}{\textrm{{MM}}}
\newcommand{\Prec}{\textrm{{Prec}}}
\newcommand{\term}{\textup{\textrm{term}}}
\begin{document}

\title{Consistent Classification Algorithms for\\Multi-class Non-Decomposable Performance Metrics}

\author{Harish G. Ramaswamy \hspace{2em} Harikrishna Narasimhan \hspace{2em} Shivani Agarwal\\
Department of Computer Science and Automation\\
Indian Institute of Science, Bangalore -- 560012, India\\
\texttt{\{harish\_gurup, harikrishna, shivani\}@csa.iisc.ernet.in}}

\maketitle


\begin{abstract}
We study consistency of learning algorithms for a multi-class performance metric that is a non-decomposable function of the confusion matrix of a classifier and cannot be expressed as a sum of losses on individual data points; examples of such performance metrics include the micro and macro F-measure used widely in information retrieval and the multi-class G-mean metric popular in class-imbalanced problems. While there has been much work in recent years in understanding the consistency properties of learning algorithms for `binary' non-decomposable metrics  \cite{Ye+12, Narasimhan+14, NatarajanRavi14, Parambath+14, Kotlowski14}, little is known either about the form of the optimal classifier for a general multi-class non-decomposable metric, or about how these learning algorithms generalize to the multi-class case. In this paper, we provide a unified framework for analyzing a multi-class non-decomposable performance metric, where the problem of finding the optimal classifier for the performance metric is viewed as an optimization problem over the space of all confusion matrices achievable under the given distribution. Using this framework, we show that (under a continuous distribution) the optimal classifier for a multi-class performance metric can be obtained as the solution of a cost-sensitive classification problem, thus generalizing several previous results on specific binary non-decomposable metrics. We then design a consistent learning algorithm for concave multi-class performance metrics that proceeds via a sequence of cost-sensitive classification problems, and can be seen as applying the conditional gradient (CG) optimization method over the space of feasible confusion matrices. To our knowledge, this is the first efficient learning algorithm (whose running time is polynomial in the number of classes) that is provably consistent for a large family of multi-class non-decomposable metrics. Our consistency result makes use of a novel proof technique based on the convergence analysis of the CG method. 
\end{abstract}


\section{Introduction}
\label{sec:intro}
In many real-world classification tasks, the performance metric used to evaluate a multi-class classifier is often a non-decomposable function of the confusion matrix of a classifier and cannot be expressed as a sum or expectation of losses on individual data points; this includes for example, the micro and macro F-measure used widely in information retrieval and the multi-class G-mean metric popular in class-imbalanced problems (see Table \ref{tab:perf-metrics} for other examples). While there has been much work in recent years in understanding the consistency properties of plug-in or cost-sensitive risk minimization based learning algorithms for `binary' non-decomposable metrics \cite{Ye+12, Narasimhan+14, NatarajanRavi14, Parambath+14, Kotlowski14}, little is known about the form of the optimal classifier for a general multi-class non-decomposable metric, or about how these learning algorithms for binary performance metrics, which make use of a brute-force line search of a single threshold/cost parameter, generalize to the multi-class case, where the number of parameters needed to be tuned scales with the number of classes. 

In this paper, we provide a general framework for analysing a multi-class non-decomposable performance metric, where the problem of finding optimal classifier for the performance metric is viewed as an optimization problem over the space of all confusion matrices achievable under the given distribution. Using this framework, we show that, under a continuous distribution, the optimal classifier for any multi-class performance metric (that satisfies a mild condition) can be obtained by solving a cost-sensitive classification problem, where the costs are given by the gradient of the non-decomposable metric at the optimal confusion matrix. This result generalizes a previous result for binary non-decomposable metrics \cite{Narasimhan+14} and also recovers several previous results on the form of the optimal classifier for specific binary performance metrics \cite{Lipton+14,NatarajanRavi14,Kotlowski14}. 

A natural first-cut learning algorithm that arises from the above characterization is one that learns a plug-in classifier by applying an empirical weight matrix chosen by a brute-force search to a suitable class probability estimator. While this method can be shown to be statistically consistent with respect to the given performance metric (under a continuous distribution), it becomes computationally inefficient when the number of classes is large. As an alternative, we provide an efficient learning algorithm based on the conditional gradient (CG) optimization method (which we call the `BayesCG' algorithm) that avoids a brute-force search over costs and can be seen as instead running the CG method over the space of feasible confusion matrices; the resulting algorithm proceeds via a sequence of cost-sensitive classification problems, the solutions for which take the form of plug-in classifiers. We show that the BayesCG algorithm is consistent for performance metrics that are concave functions of the confusion matrix; to the best of our knowledge, this is the first efficient learning algorithm (whose running time is polynomial in the number of classes) that is provably consistent for a large family of multi-class non-decomposable metrics. Also, unlike the brute-force plug-in method, the BayesCG algorithm requires no assumptions on the form of the optimal classifier for the given performance metric and hence on the underlying distribution.

Our consistency result makes use of a novel proof technique based on the convergence analysis of the CG method \cite{Jaggi13}. More specifically, we show that the linear optimization step of the above CG method is solved approximately in the BayesCG algorithm and thus establish a regret bound for the algorithm for smooth concave performance metrics. For performance metrics that are non-smooth concave functions of the confusion matrix, we prescribe applying the BayesCG algorithm to a suitable smooth approximation of these performance metrics; we instantiate and show consistency of this approach for concave performance metrics such as the G-mean, H-mean and Q-mean.

\subsection{Related Work}
\label{subsec:related-work}
There have been several algorithms designed to optimize non-decomposable classification metrics, particularly in the binary classification setting; these include the binary plug-in algorithm that applies an empirical threshold to a class probability estimate \cite{Yang01, Ye+12, Narasimhan+14, NatarajanRavi14, Kotlowski14}, cost-sensitive risk minimization based approaches \cite{Musicant+03, NatarajanRavi14, Parambath+14}, methods that optimize convex and non-convex approximations to the given performance metric \cite{Gao+03, Jansche05, Joachims05, Liu+09, Chinta+13}, and decision-theoretic methods that learn a class probability estimate and compute predictions that maximize the expected value of the performance metric on a test set \cite{Lewis95, Chai05, Musicant+03}. Of these, the plug-in method is known to be consistent for any binary performance metric for which the optimal classifier is threshold-based \cite{Narasimhan+14}, while the cost-sensitive approach is shown to be consistent for the class of fractional-linear performance metrics \cite{NatarajanRavi14}. There have also been results characterizing the optimal classifier for several binary non-decomposable metrics \cite{Ye+12, Lipton+14, Narasimhan+14, NatarajanRavi14}, with the specific form of the classifier available in closed-form for fraction-linear metrics (i.e., metrics that are ratios of linear functions) \cite{NatarajanRavi14}. 

We would also like to point out that there has been some work on designing algorithms for optimizing the F-measure in multi-label classification settings \cite{Petterson+10, Petterson+11, Dembczynski+11, Parambath+14} and consistency results for these methods \cite{Dembczynski+11, Dembczynski+13}, but these results do not apply to the setting considered in this paper. In particular, while the multi-class performance metrics that we seek to optimize are non-decomposable/non-additive over data points, the standard performance metrics of interest in a multi-label setting can indeed be expressed as a sum of losses on individual examples, with each loss on an example potentially being a non-decomposable function of the labels on the example.

%
%
%

\vspace{6pt}
\noindent \textbf{Organization.} We start with some preliminaries and background on non-decomposable performance metrics in Section \ref{sec:prelims}. In Section \ref{sec:opt-classifier}, we give a general framework for analysing multi-class non-decomposable performance metrics and use this framework to derive the form of the optimal classifier for a non-decomposable performance metric. Based on this characterization, we consider a brute-force plug-in method for a multi-class non-decomposable metric in Section \ref{sec:plug-in-brute-force}, and show that this method is consistent. In Section \ref{sec:frank-wolfe}, we design an alternate efficient learning algorithm based on the conditional gradient optimization method, which we show is consistent for a large family of concave non-decomposable metrics. 
All proofs not in the main text are provided in the Appendix.

\section{Preliminaries and Background}
\label{sec:prelims}
\textbf{Notations.}
For any $n\in\Z_+$, we shall denote $[n] = \{1,\ldots,n\}$. For a predicate $\phi$, we shall denote by $\1(\phi)$ the indicator function that takes value 1 if $\phi$ is true and 0 otherwise. The probability simplex of dimension $n$ will be denoted by $\Delta_n=\{\p\in\R_+^n\,|\, \sum_{i=1}^n p_i=1\}$. For a matrix $\G\in\R^{n\times n}$, we will use $\g_y$ to denote the $y^{th}$ column of the matrix, and shall refer to $\|\G\|_1 = \sum_{i=1}^n\sum_{j=1}^n |G_{ij}|$ as the $\ell_1$ norm of $\G$ and to $\|\G\|_\infty = \max_{1\leq i < j \leq n} |G_{ij}|$ as the $\ell_\infty$ norm of $\G$; for any two matrices $\A,\B\in\R^{n\times n}$, we shall denote their component-wise inner product as $\langle A,B \rangle =\sum_{i=1}^n \sum_{j=1}^n A_{i,j}B_{i,j}$. For any set $\CC$, we denote its closure under an appropriate metric space by $\overline{\CC}$. For maximization over integral sets, the notation $\overline{\argmax}$ shall refer to ties being broken in favor of the larger number.


\vspace{6pt}

\noindent\textbf{Problem Setup.}
Let $\X$ be an instance space and $\Y = [n]$ be a set of class labels. We are given a training sample $S = ((x_1,y_1),\ldots,(x_m,y_m)) \,\in\, (\X \times [n])^m$ drawn i.i.d.\ according to an underlying (unknown) probability distribution $D$ over $\X\times [n]$, and the goal in a multi-class classification problem is to learn from these examples a prediction model $\widehat{h}_S:\X\>[n]$, which when given a new instance $x\in\X$, makes a prediction $\hat{y} = \widehat{h}_S(x) \,\in\, [n]$. We will be interested in the more general problem of learning from $S$, a randomized classifier $\widehat{\h}_S:\X\>\Delta_n$ that for each instance outputs a probability distribution over the labels in $[n]$; note that any deterministic classifier can be seen as randomized classifier whose output is always a vertex of the probability simplex $\Delta_n$. In particular, we will consider settings where the performance of $\widehat{\h}_S$ is evaluated using a non-decomposable performance metric $\Pf_D: \Delta_n^\X \> \R_+$ that cannot be expressed as a sum or expectation of losses on individual examples. We shall denote the marginal of $D$ over $\X$ as $D_\X$, the conditional class probabilities for an instance $x$ as $\eta_y(x) = \P(Y=y\,|\,X), \,\forall y \in [n]$, and the prior class probabilities as $\pi_y = \P(Y=y), \,\forall y \in [n]$; for a sample $S$, we shall use $D_S$ to denote the empirical distribution which has its mass uniformly on the instances in $S$.  
~\\
\vspace{-6pt}

\begin{table*}[t]
	\caption{Examples of performance metrics that are (continuous, bounded) functions of the confusion matrix. For any classifier $\h: \X \> \Delta_n$, we denote here $\TPR_y[\h] = \P(h(X) = y \,|\, Y = y)$, $\Prec_y[\h] = \P(Y = y \,|\, h(X) = y)$, and $\pi_y = \P(Y = y)$. Each performance metric here can be expressed as $\Pf_D[\h] = \psi(\text{conf}(\h, D) \equiv \C)$, where the form of $\psi: [0, 1]^{n\times n} \> \R_+$ for a performance metric is given in the fourth column; the last column provides important properties of $\psi$, all of which hold over the set of feasible confusion matrices $\CC_D$ (see \Eqn{eqn:feasible-confusion-matrices}). Note that for any $\C \in \CC_D$, $\pi_y = \sum_{\widehat{y}=1}^n C_{y,\widehat{y}}$.}
	\label{tab:perf-metrics}
	\vspace{3pt}
	\centering
	\small
{\scriptsize
\begin{tabular}{@{}lllll@{}}
		\hline
\vspace{-7pt}\\
		\textbf{metric} & \textbf{Definition} & \textbf{Ref.} &  \multicolumn{1}{c}{$\psi(\C)$} & \multicolumn{1}{c}{\textbf{Properties}} \\
		\hline
		\vspace{-7pt}
		\\
		Accuracy 	   & 	$\sum_{y=1}^n \pi_y\TPR_y$ &
				   &  $\sum_{y=1}^n C_{y,y}$ & Linear 
\\	[7pt]
		AM ($1-\text{BER}$) & $\frac{1}{n}\sum_{y=1}^n\TPR_y $ &
\cite{Cheng+02}
 &   $\frac{1}{n}\sum_{y=1}^n
\frac{C_{y,y}}{\sum_{\widehat{y} = 1}^n C_{y,\widehat{y}}}$	& Linear  
\\	[7pt]
		Binary $\text{F}_1$-metric &
$\frac{2}{\frac{1}{\Prec_1} + \frac{1}{\TPR_1}}$ & \cite{Manning08} &   $\frac{2C_{2,2}}{2C_{2,2} + C_{1,2} + C_{2,1}}$	& Non-concave, Pseudo-linear 
\\[7pt]		
		Jaccard Coefficient (JAC) & $\frac{\pi_2\TPR_2}{\pi_2 \,+\, \pi_1(1-\TPR_1)}$ & \cite{SokolovaLapalme09} & $\frac{C_{2,2}}{C_{2,2} + C_{2,1} + C_{1,2 }}$ & 
  Non-concave, Pseudo-linear  
\\[7pt]		
		AMS metric & {-} & \cite{Cowan+11} & {\tiny$\sqrt{2\Big((C_{12}+C_{22})\log\Big(1+\frac{C_{22}}{C_{12}}\Big)-C_{22}\Big)}$} 
 & Convex 
\\	[7pt]
		Micro $\text{F}_1$-metric & 
- & \cite{Kim+13} &   $\frac{2\sum_{y = 2}^n C_{yy}}{2\sum_{y = 2}^n C_{yy} \,+\, \sum_{y = 1}^n\sum_{\widehat{y} \ne y} C_{y,\widehat{y}}}$	& Non-concave, Pseudo-linear  
\\	[7pt]
		Macro $\text{F}_1$-metric &
$\frac{1}{n}\sum_{y=1}^n\frac{2}{\frac{1}{\Prec_y} + \frac{1}{\TPR_y}}$ & \cite{GopalYang13}  &   $\frac{1}{n}\sum_{y=1}^n\frac{2C_{y,y}}{\sum_{\widehat{y} = 1}^n C_{y,\widehat{y}} \,+\, \sum_{\hy=1}^nC_{\hy,y}}$	& Non-concave
\\[7pt]
		H-Mean (HM) & $\frac{n}{\sum_{y=1}^n \frac{1}{\TPR_y}}$ &
\cite{Kennedy:2009} &  $n\Big({\sum_{y=1}^n \frac{\sum_{\widehat{y} = 1}^n C_{y,\widehat{y}}}{C_{y,y}}}\Big)^{-1}$ & Concave, Non-smooth 
\\[7pt]
		Q-Mean (QM) & $1-\sqrt{\frac{1}{n}\sum_{y=1}^n(1-\TPR_y)^2}$ &
\cite{Lawrence:98} & 
$1-\sqrt{\frac{1}{n}\sum_{y=1}^n\Big(1-\frac{C_{y,y}}{\sum_{\widehat{y} = 1}^n C_{y,\widehat{y}}}\Big)^2}$ & Concave, Non-smooth 
\\	[7pt]
		G-Mean (GM) & $\big(\prod_{y=1}^n \TPR_y\big)^{1/n}$ &
\cite{KubatMa97,Daskalaki+06} &  $\Big(\prod_{y=1}^n
\frac{C_{y,y}}{\sum_{\widehat{y} = 1}^n C_{y,\widehat{y}}}\Big)^{1/n}$ & Concave, Non-smooth  
\\[7pt]		
		Min-Max metric & $\min_{y \in [n]}\TPR_y$ & \cite{PoorVincent94} & 
$\min_{y \in [n]}\frac{C_{y,y}}{\sum_{\widehat{y} = 1}^n C_{y,\widehat{y}}}$ & Concave, Non-differentiable 
\\[7pt]
		\hline
	\end{tabular} 
}	\vspace{-17pt}
\end{table*}

\noindent \textbf{Multi-class Non-decomposable Performance Metrics.} Let us first define for a deterministic classifier $h:\X\>[n]$ and distribution $D$, the confusion matrix $\conf(h,D) \in [0, 1]^{n\times n}$ as  
\[
\big[\conf(h,D)\big]_{i,j} = \E_{(X,Y)\sim D}\big[\1(Y=i, \,h(X)=j)\big], \hspace{1em} \forall i,j\,\in\,[n];
\vspace{-3pt}
\]
the corresponding confusion matrix for a randomized classifier $\h:\X\>[n]$ is given by 
\[
\big[\conf(\h,D)\big]_{i,j} = \E_{(X,Y)\sim D} \big[h_j(X)\cdot \1(Y=i) \big], \hspace{1em} \forall i,j\,\in\,[n].
\vspace{-3pt}
\]
In this paper, we shall be interested in non-decomposable performance metrics that can expressed as a continuous and bounded function $\psi:[0,1]^{n\times n}\>\R_+$ of the confusion matrix:
\begin{equation}
\label{eqn:general_eval}
\cP_D[\h] \,=\, \psi( \conf(\h,D)).
\vspace{-3pt}
\end{equation}
For example, the macro $\text{F}_1$-measure used widely in text retrieval can be expressed as a function $\psi_{\text{F}_1}(\C) \,=\, \frac{1}{n}\sum_{y=1}^n\frac{C_{y,y}}{\sum_{\widehat{y} = 1}^n C_{y,\widehat{y}} \,+\, \sum_{\hy=1}^nC_{\hy,y}}$ of the confusion matrix $\C \in [0, 1]^{n\times n}$. Table \ref{tab:perf-metrics} contains several examples of performance metrics that are functions of the confusion matrix.\footnote{For all performance metrics considered in this paper, higher values indicate better performance.}$^{,}$\footnote{In the setting considered here, the goal is to maximize a performance metric that can be expressed as a (non-decomposable) function of expectations; this is referred to by Ye et al. (2012) \cite{Ye+12} as the expected utility maximization setup and is different from the decision-theoretic setting that they consider, where one looks at the expectation of a non-decomposable performance metric on $m$ examples, and seeks to maximize its limiting value as $m \> \infty$.} Throughout this paper, we shall use the term performance metric to refer to both $\Pf$ and $\psi$.
~\\
\vspace{-6pt}

\noindent \textbf{$\psi$-consistency.} We now consider the optimal value of performance metric $\cP$ over all randomized classifiers:
\[
\cP^*_D \,=\, \sup_{\h : \X\>\Delta_n} \cP_D[\h],
\]
and shall refer to the classifier attaining the above value, if one exists, as the $\psi$-optimal classifier. One can then define the $\psi$-regret of classifier $\h$ as
\[
\reg^\psi_D[\h] \,=\, \cP_D[\h] \,-\, \cP^*_D.
\] 
A learning algorithm that takes a training sample $S$ drawn i.i.d. from $D^m$ and outputs a classifier $\widehat{\h}_S$ is said to be $\psi$-consistent if the $\psi$-regret of classifier $\widehat{\h}_S$ goes to zero in probability:
\[
\reg^\psi_D[\widehat{\h}_S] \,\xrightarrow{P}\, 0,
\]
where the convergence in probability is over the random draw of $S$ from $D^m$.\footnote{We say $\phi(S)$ converges in probability to $a \in \R$, written as $\phi(S) \xrightarrow{P} a$, if ${\forall \epsilon > 0}$, ${\P_{S \sim D^m}(|\phi(S) - a| \geq \epsilon) \rightarrow 0} \text{ as } m \rightarrow \infty$.}

\vspace{-6pt}
~\\
\noindent\textbf{Optimal Classifier for Decomposable Metrics.} 
While in general, it is not clear if there exists a classifier that attains the optimal value of a given performance metric $\Pf_D[\h] = \psi(\text{conf}[\h])$, it is well-known that when $\psi$ is a linear function (i.e., $\Pf_D$ can be expressed as an expectation of a loss on individual example), a $\psi$-optimal classifier always exists. In particular, if $\psi$ takes the form 
\[
\psi_\G(\C) = \sum_{y=1}^n \sum_{\hat y=1}^n G_{y,\hat y} C_{y,\hat y} =  \langle \C, \G \rangle,
\]
for some matrix $\G \in \R_+^{n \times n}$, then any classifier $\h^*:\X \> \Delta_n$ that satisfies the following condition is $\psi_\G$-optimal:
\begin{equation}
\label{eqn:Bayes-decomposable}
h^{*}_i(x) > 0  ~~\text{ only if }~~ i \in \argmax_{y \in [n]} \g_{y}^\top {\boldeta}(x)  ~.
\end{equation}
It is seen that there always exists a deterministic classifier that satisfies the above condition. Also, it is worth noting that maximizing the above performance metric is equivalent to solving a cost-sensitive classification problem, with the costs given by the the negative of the `gain' matrix $\G$. 

\begin{figure*}[t]
\begin{algorithm}[H]
\caption{
Plug-in Algorithm for Binary Non-decomposable Performance Metric.
}
\label{algo:plug-in-binary}
\begin{algorithmic}[1]
\STATE \textbf{Input:} $S=((x_1,y_1), \ldots, (x_m, y_m)) \in (\X\times [2])^m$,  $\psi: [0, 1]^{2\times 2} \> \R_+$
\STATE Split $S$ into two sets $S'$ and $S''$ with sizes $m_1 = \lfloor(1-\alpha)m\rfloor$ and $m_2 = \lceil\alpha m\rceil$. 
\STATE Learn $\hat{\eta}_{S'}=\CPE(S')$, where $\CPE: \cup_{m=1}^\infty (\X \times [2])^m \> [0,1]^\X$ is a suitable CPE algorithm 
\STATE $\widehat{t}_S \,\in\, \argmax_{t \in [0, 1]} \, {\Pf}_{D_{S''}}[\widehat{\h}_{t}]$, where $\widehat{\h}_{t}(x) = 
\begin{cases}
[1, 0]^\top & \text{if }\widehat{\eta}_{S'}(x) \leq  t\\
[0, 1]^\top & \text{otherwise}
\end{cases}$
\\[2pt]
\STATE \textbf{Output:} $\widehat{\h}_{\widehat{t}_S}$ 
\end{algorithmic}
\end{algorithm}	
\vspace{-0.7cm}
\end{figure*}

\vspace{-6pt}
~\\
\noindent\textbf{Plug-in Algorithm for Decomposable Metrics.} A standard approach for maximizing a decomposable metric (or equivalently solving a cost-sensitive classification problem) is the plug-in method, where one first obtains a class probability estimation (CPE) model $\widehat{\boldeta}_S: \X \> \Delta_n$ from the given training sample $S$ and constructs a classifier $\widehat{\h}_S(x) =  \overline{\argmax}_{y \in [n]}\, \g_{y}^\top \widehat{\boldeta}_{S}(x)$ for any instance $x$. This approach can be shown to be $\psi_\G$-consistent if the CPE algorithm used to learn $\widehat{\boldeta}_S$ is such that $\E_X \big[ \big\|\widehat{\boldeta}_{S'}(X) - \boldeta(X)\big\|_1\big]  \xrightarrow{P}  0$ \cite{Clemencon+11} (which is indeed the case for any algorithm that performs a regularized empirical risk minimization of a proper loss such as the logistic loss \cite{Agarwal13,Menon+13}).

\vspace{-6pt}
~\\
\noindent\textbf{Known Results for Binary Non-decomposable Performance Metrics.} We now summarize what is understood about the the optimal classifier for binary non-decomposable performance metrics and about the consistency properties of learning algorithms for these metrics. It is known that, under a continuous distribution, the optimal classifier for a binary monotonic non-decomposable metric is obtained by placing a suitable threshold on the posterior class probability function \cite{Narasimhan+14}. For certain specific performance metrics, such as those that are fractional-linear/ratio of linear functions (e.g., binary F-measure and JAC measure) \cite{Ye+12, Zhao+14, Lipton+14, NatarajanRavi14}, the geometric mean of precision and recall \cite{Narasimhan+14}, and the approximate median sign (AMS) metric \cite{Kotlowski14}, this characterization holds even without the continuity assumption on the distribution; for some of these metrics, the exact form of the threshold is also available in closed-form \cite{NatarajanRavi14, Kotlowski14}. It is also known that a plug-in algorithm that constructs a classifier by assigning an empirical threshold to a suitable class probability estimate (see Algorithm \ref{algo:plug-in-binary}) is statistically consistent with respect to any binary non-decomposable metric for which the optimal classifier is of the above thresholded form \cite{Narasimhan+14,NatarajanRavi14}; a similar result has also been shown for a cost-sensitive risk minimization based approach for fractional-linear metrics \cite{NatarajanRavi14}.

While there has been a lot of work on binary non-decomposable metrics as seen above, little is known about how these results extend to the multi-class case. In particular, what is the form of the optimal classifier for a general multi-class non-decomposable metric? How does the plug-in and cost-sensitive risk minimization based algorithms for binary performance metrics, which essentially need to tune a single parameter, generalize to the multi-class case, where the number of parameters needed to be tuned grows with the number of classes? In this paper, we address these questions.

\vspace{-6pt}
~\\
Before we proceed further, we will find it convenient to define for any given function $\bmu: \X \> \R^n$, the set of weighted argmax classifiers obtained by a gain matrix $\G \in \R_+^{n \times n}$ on $\mu$:
\[
{\H}_\bmu = \big\{\h:\X\>\Delta_n \,\,|\,\, \exists\, \G\in\R^{n\times n} \text{ s.t. } \forall x \in \X, \, h_i(x) = 1 \,\text{ if }\, i = \overline{\argmax}_{y\in[n]} [\G \bmu(x)]_y \big\}.
\]
\noindent Finally, a function $f: \R^{d \times d} \> \R$ is said to be $L$-Lipschitz w.r.t. the $\ell_1$ norm over $\mathcal{M} \subseteq \R^{d\times d}$, for some $L> 0$, if
\[
|f(\M_1) - f(\M_2)| \,\leq\, L\|\M_1 - \M_2\|_1, \,\,\forall\, \M_1, \,\M_2 \,\in\,\mathcal{M},
\]
and is $\beta$-smooth w.r.t. the $\ell_1$ norm over $\mathcal{M} \subseteq \R^{d\times d}$, for some $\beta > 0$, if
\[
\|\nabla f(\M_1) - \nabla f(\M_2)\|_\infty \,\leq\, \beta\|\M_1 - \M_2\|_1, \,\,\forall\, \M_1, \,\M_2 \,\in\, \mathcal{M}.
\]

\section{Characterization of the Optimal Classifier for a General Multi-class Performance Metric}
\label{sec:opt-classifier}

We start by providing a generic framework for studying a multi-class non-decomposable performance metric, where we view the problem of finding the optimal classifier for a non-decomposable metric as an optimization problem over the space of all confusion matrices that are attainable under the given distribution. Using this framework, we give a characterization of the optimal classifier for a non-decomposable metric; in particular, we show that under a continuous distribution, the optimal classifier for any multi-class non-decomposable performance metric (that satisfies a mild condition) can be obtained by maximizing a decomposable performance metric, whose gain matrix is given by the gradient of non-decomposable metric at the optimal confusion matrix. To our knowledge, this is the first such result for a general multi-class non-decomposable metric, generalizing a previous result for binary non-decomposable metrics \cite{Narasimhan+14} and in addition also recovering previous results on the form of the optimal classifier for several performance metrics \cite{Lipton+14,NatarajanRavi14,Kotlowski14}. 
~\\[-6pt]

\noindent \textbf{Feasible confusion matrices.} We begin by defining the set of feasible confusion matrices for a distribution $D$ as the set of all confusion matrices achievable by a randomized classifier under $D$:
\begin{eqnarray}
 \CC_D &=& \{\C \in [0,1]^{n\times n} :\, \C=\conf(\h,D) \text{ for some }\h:\X\>\Delta_n\}.
 \label{eqn:feasible-confusion-matrices}
\end{eqnarray}
Note that every matrix $\C\in\CC_D$ is such that its row sums are equal to the prior probabilities, i.e. $\sum_{\hy=1}^n C_{y, \hy} = \pi_y, \,\forall y \in [n]$. It can be shown that this set is convex.
\begin{prop}[\textbf{Convexity of $\CC_D$}]
\label{prop:convexity-cc}
$\CC_D$ is a convex set.
\end{prop}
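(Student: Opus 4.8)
The plan is to show that $\CC_D$ is closed under convex combinations by exhibiting, for any two feasible confusion matrices and any mixing weight $\lambda \in [0,1]$, a single randomized classifier realizing the corresponding convex combination. First I would take $\C^{(1)}, \C^{(2)} \in \CC_D$ with witnessing classifiers $\h^{(1)}, \h^{(2)} : \X \> \Delta_n$, so that $\C^{(k)} = \conf(\h^{(k)}, D)$ for $k = 1, 2$. Fix $\lambda \in [0,1]$ and define the classifier $\h^{(\lambda)} : \X \> \Delta_n$ pointwise by $\h^{(\lambda)}(x) = \lambda\, \h^{(1)}(x) + (1-\lambda)\, \h^{(2)}(x)$; this is a valid element of $\Delta_n$ for every $x$ since $\Delta_n$ is itself convex, so $\h^{(\lambda)}$ is a legitimate randomized classifier.

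The core step is then a direct computation showing $\conf(\h^{(\lambda)}, D) = \lambda \C^{(1)} + (1-\lambda)\C^{(2)}$. Using the definition of the confusion matrix of a randomized classifier, for each $i, j \in [n]$,
\[
\big[\conf(\h^{(\lambda)}, D)\big]_{i,j} = \E_{(X,Y)\sim D}\big[ h^{(\lambda)}_j(X)\cdot \1(Y = i)\big] = \E_{(X,Y)\sim D}\big[ (\lambda h^{(1)}_j(X) + (1-\lambda) h^{(2)}_j(X))\cdot \1(Y = i)\big],
\]
and linearity of expectation splits this into $\lambda\, \E[h^{(1)}_j(X)\1(Y=i)] + (1-\lambda)\, \E[h^{(2)}_j(X)\1(Y=i)] = \lambda\, C^{(1)}_{i,j} + (1-\lambda)\, C^{(2)}_{i,j}$. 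Hence $\lambda \C^{(1)} + (1-\lambda)\C^{(2)} = \conf(\h^{(\lambda)}, D) \in \CC_D$, which is exactly convexity. One should also note in passing that each entry of $\conf(\h^{(\lambda)}, D)$ lies in $[0,1]$ (indeed the row sums are $\pi_y$), so the resulting matrix genuinely lies in the ambient set $[0,1]^{n\times n}$ used in the definition of $\CC_D$.

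There is essentially no obstacle here: the only thing to be careful about is that "randomized classifier" is allowed to take arbitrary values in $\Delta_n$ (not just vertices), which is precisely what makes the pointwise convex combination $\h^{(\lambda)}$ admissible — the statement would fail for deterministic classifiers. If one instead wanted to avoid using genuinely randomized classifiers, the argument would require a measure-theoretic mixing construction (splitting $\X$ or introducing auxiliary randomness), which is why the generality of the classifier class matters; but given the problem setup as stated, the straightforward pointwise mixture suffices.
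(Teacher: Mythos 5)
Your proof is correct and follows essentially the same route as the paper: define the pointwise mixture classifier $\h^{(\lambda)}(x) = \lambda\,\h^{(1)}(x) + (1-\lambda)\,\h^{(2)}(x)$ and use linearity of expectation to conclude $\conf(\h^{(\lambda)}, D) = \lambda \C^{(1)} + (1-\lambda)\C^{(2)}$. You merely spell out the entrywise computation that the paper leaves as ``it can be seen,'' which is fine.
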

\noindent The problem of finding the optimal classifier for the given performance metric can now be cast as an optimization problem over $\CC_D$; we shall shortly see that this viewpoint is useful in both characterizing the optimal classifier for the performance metric and in designing consistent learning algorithms for the metric.

\vspace{-6pt}
~\\
\noindent We next make the following continuity assumption on $D$, which is essentially a multi-class extension of a similar assumption on $D$ in \cite{Narasimhan+14} (in the binary label setting).

\vspace{-6pt}~\\
\noindent\textbf{Assumption A (Continuity of $D$).} Let $U$ be a random variable distributed uniformly over the simplex $\Delta_n$, and let $\mu$ be a base measure over $\Delta_n$ such that $\mu(\cA)=\P(U\in\cA), \,\forall \cA\subseteq\Delta_n$. Let $\nu$ denote the probability measure that is associated with the random variable ${\boldeta}(X)$. We will say that a distribution $D$ satisfies Assumption A if $\nu$ is absolutely continuous w.r.t. $\mu$.

\vspace{-6pt}~\\
\noindent We shall also make a mild assumption on $\psi$ that is satisfied by all performance metrics in \Tab{tab:perf-metrics} except the min-max metric.
~\\\vspace{-10pt}~\\
\noindent\textbf{Assumption B.} We will say that $\psi: [0, 1]^{n\times n} \> \R_+$ satisfies Assumption B w.r.t. distribution $D$ if it is continuous, differentiable and bounded over $\CC_D$, and is strictly increasing in the diagonal elements of its argument and non-increasing in the non-diagonal elements of its argument. 

\vspace{-6pt}~\\
Under the above assumptions on $D$ and $\psi$, we now show that a $\psi$-optimal classifier always exists and can be obtained by maximizing a decomposable performance metric constructed from the gradient of $\psi$ at the optimal confusion matrix.

\begin{thm}[\textbf{Characterization of $\psi$-optimal Classifier for a General Multi-class Non-decomposable Metric Under Continuous Distributions}]
\label{thm:suff-cond-equivalence}
Let distribution $D$ satisfy Assumption A, and $\psi: [0, 1]^{n\times n} \> \R_+$ satisfy Assumption B w.r.t. $D$. Then there exists a classifier $\h^*: \X \> \Delta_n$ that is $\psi$-optimal. Furthermore, for  $\G^*=\nabla\psi(\conf(\h^*,D))$, 
we have
\[
\emptyset \,\ne\, \underset{\h: \X \> \Delta_n}{\argmax} \,\langle \G^*, \conf(\h,D) \rangle \,\,\subseteq\,\, \underset{\h: \X \> \Delta_n}{\argmax} \,\psi(\conf(\h, D)),
\]
and thus any classifier $\tilde{h}: \X \> \Delta_n$ of the following form is $\psi$-optimal:
\begin{equation*}
\tilde{h}_i(x) > 0  ~~\text{ only if }~~ i \in \underset{y \in [n]}{\argmax}\,\,\g_{y}^{*\top} {\boldeta}(x).
\end{equation*}
\end{thm}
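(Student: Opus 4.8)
The plan is to recast everything on the set $\CC_D$ of feasible confusion matrices, which is convex by \Prop{prop:convexity-cc}. First I would observe that $\CC_D$ is in fact compact: identifying a randomized classifier with a tuple $\h=(h_1,\dots,h_n)\in\big(L^\infty(D_\X)\big)^n$ subject to $h_j\ge 0$ and $\sum_j h_j\equiv 1$ (a weak-$*$ closed, bounded, convex set), the map $\h\mapsto\conf(\h,D)$ whose $(i,j)$ entry is $\int h_j\,\eta_i\,dD_\X$ is weak-$*$ continuous since each $\eta_i$ is bounded, so $\CC_D$ is the continuous image of a weak-$*$ compact set. As $\psi$ is continuous on $\CC_D$ (Assumption~B), it attains its maximum there at some $\C^*$; by definition of $\CC_D$ there is a classifier $\h^*$ with $\conf(\h^*,D)=\C^*$, and since $\cP^*_D=\sup_\h\psi(\conf(\h,D))=\max_{\C\in\CC_D}\psi(\C)=\psi(\C^*)$, this $\h^*$ is $\psi$-optimal, proving the first assertion.

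Now set $\G^*=\nabla\psi(\C^*)$. By the monotonicity part of Assumption~B, $G^*_{yy}>0$ for every $y$ and $G^*_{y\hat y}\le 0$ for $y\ne\hat y$; consequently the columns $\g^*_1,\dots,\g^*_n$ of $\G^*$ are pairwise distinct, since $\g^*_j$ is the only one of them with a strictly positive $j$-th coordinate. The argument then rests on two claims. Claim~(i): $\C^*$ maximizes the linear functional $\C\mapsto\langle\G^*,\C\rangle$ over $\CC_D$. This is a first-order/supporting-hyperplane fact: for any $\C\in\CC_D$ the segment $\{\C^*+\lambda(\C-\C^*):\lambda\in[0,1]\}$ lies in $\CC_D$ by convexity, so $\lambda\mapsto\psi(\C^*+\lambda(\C-\C^*))$ is maximized at $\lambda=0$, whence $\langle\G^*,\C-\C^*\rangle=\tfrac{d}{d\lambda}\psi\big(\C^*+\lambda(\C-\C^*)\big)\big|_{\lambda=0^+}\le 0$, using only differentiability of $\psi$ at $\C^*$; thus $\langle\G^*,\C\rangle\le\langle\G^*,\C^*\rangle$ for all $\C\in\CC_D$.

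Claim~(ii): under Assumption~A every maximizer of $\langle\G^*,\conf(\h,D)\rangle$ over classifiers $\h$ has the same confusion matrix. Here I use the decomposable-metric calculation $\langle\G^*,\conf(\h,D)\rangle=\E_X\big[\sum_j h_j(X)\,\g_j^{*\top}\boldeta(X)\big]$, which is maximized exactly by the classifiers putting all mass on $\argmax_{y}\g_y^{*\top}\boldeta(X)$ --- this is condition~\eqref{eqn:Bayes-decomposable} with $\G=\G^*$, and it is satisfiable (e.g.\ by a deterministic tie-break), so $\argmax_\h\langle\G^*,\conf(\h,D)\rangle\ne\emptyset$. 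Any two such maximizers agree at every $x$ for which $\argmax_y\g_y^{*\top}\boldeta(x)$ is a singleton, and any point of disagreement forces $x\in\bigcup_{j\ne k}\{x:(\g^*_j-\g^*_k)^\top\boldeta(x)=0\}$; since $\g^*_j\ne\g^*_k$ this requires $\boldeta(X)$ to lie in a proper affine slice of $\Delta_n$ (dimension at most $n-2$), which has measure zero under the uniform base measure $\mu$ and hence, by absolute continuity (Assumption~A), probability zero. So all such maximizers share a single confusion matrix, and by Claim~(i) $\h^*$ is one of them, so that matrix is $\C^*$. Therefore $\argmax_\h\langle\G^*,\conf(\h,D)\rangle$ is nonempty and each of its members has confusion matrix $\C^*$, hence value $\psi(\C^*)=\cP^*_D$, i.e.\ lies in $\argmax_\h\psi(\conf(\h,D))$; combined with the description of these maximizers via \eqref{eqn:Bayes-decomposable}, this gives the stated form of $\tilde h$.

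The step I expect to be the real work is Claim~(ii), specifically making the measure-zero argument airtight: one must confirm the columns of $\G^*$ are genuinely distinct (which is exactly where the \emph{strict} increase in the diagonal entries in Assumption~B is used), that a proper affine subspace of $\Delta_n$ is $\mu$-null, and that nullity transfers from $\mu$ to the law of $\boldeta(X)$ via Assumption~A. A secondary point needing care is the compactness of $\CC_D$, which is what lets the $\psi$-maximizer actually be attained by a classifier and coincide with the (now unique) linear maximizer; an alternative is to run the argument on $\overline{\CC_D}$ and separately show its $\psi$-maximizer is achievable, but then the uniqueness in Claim~(ii) has to hold over the closure, which again reduces to compactness of $\CC_D$.
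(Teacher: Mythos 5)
Your proposal is correct, but it reaches the conclusion by a genuinely different route from the paper on the crucial existence step. The paper never shows $\CC_D$ itself is compact: it maximizes $\psi$ over the closure $\overline{\CC}_D$, and then invokes a heavy uniqueness lemma (\Lem{lem:opt-in-closure-same-as-opt}, proved via a quantitative contradiction argument in \Lem{lem:opt-in-closure-same-as-opt-general} involving the near-tie sets $\cA_c$, continuity of $c\mapsto\nu(\cA_c)$, and an explicit suboptimality gap $\tfrac{\gamma^2 c}{128}$) to conclude that the linear maximizer over $\overline{\CC}_D$ coincides with the unique maximizer over $\CC_D$, which is what pulls $\C^*$ back into $\CC_D$. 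You instead prove compactness of $\CC_D$ directly via Banach--Alaoglu (weak-$*$ compactness of the set of randomized classifiers in $(L^\infty(D_\X))^n$ and weak-$*$ continuity of $\h\mapsto\conf(\h,D)$ since each $\eta_i\in L^1(D_\X)$), which lets you work entirely inside $\CC_D$ and replace the paper's quantitative closure argument by a soft a.e.-uniqueness claim: maximizers of the linear metric agree off the tie set $\bigcup_{j\ne k}\{x:(\g^*_j-\g^*_k)^\top\boldeta(x)=0\}$, which is $\nu$-null by Assumption A. This measure-zero step is exactly the content of the paper's Lemmas \ref{lem:volume-invese-linear-map} and \ref{lem:volume-trig-prob-boundary}(c), so that part of the work is shared; your version buys a substantially shorter proof at the cost of functional-analytic machinery (and care with $L^\infty$ representatives), while the paper stays elementary at the cost of the intricate Lemma \ref{lem:opt-in-closure-same-as-opt-general}. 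One small point to tighten: distinctness of the columns of $\G^*$ alone does not make each tie set a proper affine slice of $\Delta_n$ --- if $\g^*_j-\g^*_k$ were a nonzero multiple of the all-ones vector the set would be empty rather than lower-dimensional (still $\mu$-null, so your conclusion stands); under Assumption B this degenerate case is in fact impossible, since $\g^*_j-\g^*_k$ has a strictly positive $j$-th and strictly negative $k$-th coordinate, and it is worth saying so explicitly, mirroring the case analysis in the paper's Lemma \ref{lem:volume-trig-prob-boundary}.
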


The above theorem is a multi-class generalization of the result in \cite{Narasimhan+14} for binary monotonic performance metrics, and in addition also gives the precise form of the optimal classifier for the given performance metric. By a simple application of this theorem, we recover previous results on the form of the optimal classifier for performance metrics that fractional-linear \cite{NatarajanRavi14} such as the F-measure and Jaccard coefficient \cite{Lipton+14}, and also for the AMS metric \cite{Kotlowski14}. 

Before we prove \Thm{thm:suff-cond-equivalence}, we will find it useful to state the following lemma.
\begin{lem}[\textbf{Uniqueness of Optimal Confusion Matrix for Gain Matrices Obtained from Gradients of $\psi$}]
\label{lem:opt-in-closure-same-as-opt}
Under the assumptions on $D$ and $\psi$ in \Thm{thm:suff-cond-equivalence}, for any $\C^* \in \CC_D$, we have
\[
\argmax_{\C\in\overline{\CC}_D}\, \langle \nabla\psi(\C^*), \C \rangle \,=\, \argmax_{\C\in\CC_D} \langle \nabla\psi(\C^*), \C \rangle.
\]
Moreover, the above set is a singleton.
\end{lem}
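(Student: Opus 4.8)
The plan is to reduce the linear program $\max_{\C}\langle \G,\C\rangle$ with $\G := \nabla\psi(\C^*)$ to an explicit weighted-argmax rule over classifiers, and then use Assumption~A to show that this rule has no ties, which pins down a unique optimal confusion matrix lying in $\CC_D$ itself. First, by Assumption~B the diagonal entries of $\G$ are strictly positive and its off-diagonal entries are non-positive. Using $\langle \G, \conf(\h,D)\rangle = \E_X\big[\sum_{j=1}^n h_j(X)\,\g_j^{\top}\boldeta(X)\big]$ together with $\sum_j h_j(X) = 1$, one gets for every $\h:\X\>\Delta_n$
\[
\langle \G, \conf(\h,D)\rangle \,\le\, v^* \,:=\, \E_X\Big[\max_{y\in[n]}\g_y^{\top}\boldeta(X)\Big],
\]
with equality iff $\h$ is supported, $D_\X$-a.s., on $\argmax_{y}\g_y^{\top}\boldeta(X)$.

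The key step is to show that under Assumption~A this argmax is a.s.\ a singleton. A tie between distinct $i,j$ at $x$ forces $(\g_i - \g_j)^{\top}\boldeta(x) = 0$; but the sign structure of $\G$ gives $(\g_i - \g_j)_i = G_{ii} - G_{ij} > 0$ and $(\g_i - \g_j)_j = G_{ji} - G_{jj} < 0$, so $\g_i - \g_j$ is not a scalar multiple of $\1$, whence $\{\p\in\Delta_n : (\g_i - \g_j)^{\top}\p = 0\}$ is a proper affine slice of the simplex, hence $\mu$-null, hence $\nu$-null since $\nu \ll \mu$. A union over the finitely many pairs shows the argmax is a.s.\ a singleton, say $y^*(x)$ (measurable, as an $\overline{\argmax}$ of finitely many measurable functions). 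Writing $\delta(x) := \g_{y^*(x)}^{\top}\boldeta(x) - \max_{y\ne y^*(x)}\g_y^{\top}\boldeta(x) > 0$ a.s.\ for the gap, a short computation then gives, for every $\h$,
\[
v^* - \langle \G, \conf(\h,D)\rangle \,\ge\, \E_X\big[\,(1 - h_{y^*(X)}(X))\,\delta(X)\,\big] \,\ge\, 0.
\]
Hence any maximiser $\h$ over $\CC_D$ must satisfy $h_{y^*(X)}(X) = 1$ a.s., and so its confusion matrix equals $\C_0 := \conf(\h^*,D)$, where $\h^*$ is the deterministic classifier with $h^*_{y^*(x)}(x) = 1$; this proves $\argmax_{\C\in\CC_D}\langle\G,\C\rangle = \{\C_0\}$, a singleton contained in $\CC_D$.

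It remains to handle the closure. Since $\overline{\CC}_D$ is a closed bounded subset of $[0,1]^{n\times n}$ it is compact, so $\langle\G,\cdot\rangle$ attains its maximum there; by density of $\CC_D$ in $\overline{\CC}_D$ and continuity of the linear functional this maximum again equals $v^*$. Given $\B \in \argmax_{\C\in\overline{\CC}_D}\langle\G,\C\rangle$, pick classifiers $\h^{(k)}$ with $\conf(\h^{(k)},D)\to\B$. Then $\langle\G,\conf(\h^{(k)},D)\rangle \to v^*$, so by the displayed inequality $\E_X[(1 - h^{(k)}_{y^*(X)}(X))\,\delta(X)] \to 0$; since $\delta(X) > 0$ a.s.\ and $1 - h^{(k)}_{y^*(X)}(X) \in [0,1]$, a routine argument yields $\E_X[1 - h^{(k)}_{y^*(X)}(X)] \to 0$, hence $\E_X\|\h^{(k)}(X) - \h^*(X)\|_1 \to 0$, and therefore $\|\conf(\h^{(k)},D) - \C_0\|_1 \le \E_X\|\h^{(k)}(X) - \h^*(X)\|_1 \to 0$. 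Thus $\B = \C_0$, and $\argmax_{\C\in\overline{\CC}_D}\langle\G,\C\rangle = \{\C_0\} = \argmax_{\C\in\CC_D}\langle\G,\C\rangle$.

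The main obstacle is precisely the middle step: since the objective $\langle\G,\cdot\rangle$ is linear (not strictly concave), neither the uniqueness of the maximiser nor the fact that the closure's maximiser lies in $\CC_D$ can be read off from the objective alone — both rest entirely on the no-ties property, i.e.\ on the strictly positive gap $\delta$, which requires carefully combining the sign structure of $\nabla\psi$ from Assumption~B with the absolute-continuity transfer of null sets from Assumption~A. Once $\delta > 0$ a.s.\ is in hand, the closure step is routine.
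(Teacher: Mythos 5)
Your proof is correct, but it takes a genuinely different route from the paper's. The paper first reduces to classifiers of the form $\f\circ\boldeta$ (its Proposition~\ref{prop:equivalent-cond-prob-classifier}), proves an anti-concentration estimate for affine functions on convex bodies (Lemma~\ref{lem:volume-invese-linear-map}), and from it the continuity at $0$ and vanishing at $0$ of the near-tie measure $c\mapsto \nu\big(\{\p: (\p^\top\G)_{(1)}-(\p^\top\G)_{(2)}\le c\}\big)$ (Lemma~\ref{lem:volume-trig-prob-boundary}); it then argues by contradiction, taking a hypothetical second maximizer $\C'\in\overline{\CC}_D$ with $\|\C'-\C^*\|_1=\gamma>0$, approximating it by $\C_\epsilon\in\CC_D$, and balancing the measure of the disagreement set against a fixed tie-gap threshold $c$ to exhibit a constant optimality deficit $\gamma^2 c/128$. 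You instead exploit the sign structure of $\nabla\psi$ (diagonal positive, off-diagonal nonpositive) to show directly that all pairwise tie sets $\{\p\in\Delta_n:(\g_i-\g_j)^\top\p=0\}$ are $\mu$-null, hence $\nu$-null, define the pointwise gap $\delta(X)>0$, and prove the clean inequality $v^*-\langle\G,\conf(\h,D)\rangle\ge \E_X[(1-h_{y^*(X)}(X))\,\delta(X)]$, which yields uniqueness over $\CC_D$ immediately and handles the closure by a limiting argument (your ``routine argument'' amounts to choosing a threshold $t$ on $\delta$ after fixing the target accuracy, which is exactly what lets you dispense with the continuity of the near-tie measure). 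What your approach buys: it avoids Proposition~\ref{prop:equivalent-cond-prob-classifier} and all of Lemma~\ref{lem:volume-trig-prob-boundary} (only the null-set statement is needed, not continuity in $c$), avoids the contradiction bookkeeping, and gives in passing the quantitative convergence $\conf(\h^{(k)},D)\to\C_0$ in $\ell_1$. One caveat, which applies equally to the paper: your step ``Assumption~B implies the diagonal entries of $\nabla\psi(\C^*)$ are strictly positive'' is not literally forced by strict monotonicity of $\psi$ (a strictly increasing differentiable function can have a vanishing partial derivative at isolated points); the paper makes the same implicit leap when it asserts that Assumption~B guarantees no two columns of the gradient are identical, so this is a shared informality rather than a gap specific to your argument.
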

\noindent The proof of \Thm{thm:suff-cond-equivalence} then follows from the first order necessary conditions for optimality of a confusion matrix and the above result.

\begin{proof}[Proof of \Thm{thm:suff-cond-equivalence}]

We shall first show that there exists a $\psi$-optimal classifier. By compactness of $\overline{\CC}_D$, we know that there exists $\C^* \in \overline{\CC}_D$ such that
$$\psi(\C^*) = \max_{\C\in \overline{\CC}_D} \psi(\C) =  \sup_{\C\in \CC_D} \psi(\C).$$
It remains to be shown that there exists a classifier that achieves this confusion matrix, i.e., $\C^* \in \CC_D$. For this, we note from the first order necessary condition for optimality of $\C^*$, given convexity of $\overline{\CC}_D$ (see \Prop{prop:convexity-cc}), that
\begin{eqnarray}
\langle \nabla\psi(\C^*), \C^* \rangle \geq \langle \nabla\psi(\C^*),\C \rangle, \hspace{1em} \forall \C\in\overline{\CC}_D.
\label{eqn:suff-cond-1}
\end{eqnarray}
The above equation along with \Lem{lem:opt-in-closure-same-as-opt} implies that
\[
\argmax_{\C\in\overline{\CC}_D}\, \langle \nabla\psi(\C^*), \C \rangle \,=\, \argmax_{\C\in\CC_D} \langle \nabla\psi(\C^*), \C \rangle  \,=\, \{\C^*\}\;.
\]
Thus $\C^*\in\CC_D$ and hence there exists a clasifier $\h^*:\X\>\Delta_n$ such that $\C^*=\conf(\h^*,D)$. This completes the proof of existence of a $\psi$-optimal classifier. 

Next for  $\G^* = \nabla\psi(\C^*)$, we further have
\[
\underset{\h: \X \> \Delta_n}{\argmax} \,\langle \G^*, \conf(\h,D) \rangle \,\, 
= \{\h:\X\>\Delta_n:\conf(\h,D)=\C^*\} \,\, 
\subseteq\,\, \underset{\h: \X \> \Delta_n}{\argmax} \,\psi(\conf(\h, D)).
\]
Clearly, a classifier $\tilde{\h}: \X \> \Delta_n$ that maximizes the linear performance metric $\langle \G^*, \conf(\cdot,D) \rangle$ is also $\psi$-optimal; as seen in \Eqn{eqn:Bayes-decomposable}, such a classifier takes the form given in the theorem statement.
\end{proof}

\begin{rem}[\textbf{Necessity of continuity Assumption A on $D$}]
\label{rem:necessity-ass-A}
We note here that for the above characterization to hold for a general non-decomposable performance metric, the continuity assumption on distribution $D$ (Assumptions A) is indeed necessary. We illustrate this fact for the H-mean performance metric by constructing a simple distribution that does not satisfy this assumption, and where a classifier of the form in the theorem statement is not necessarily optimal. Consider the following distribution $D$ over $\{x\} \times \{1,2\}$ with $\boldeta(x) = \big[\frac{1}{2},\, \frac{1}{2}\big]^\top$. It can be seen that the unique optimal classifier for the H-mean performance metric is $\h^*(x) = \big[\frac{1}{2},\, \frac{1}{2}\big]^\top$, whose confusion matrix $\C^*$ and the gradient of $\psi$ at $\C^*$ are given by:
$$\C^* \,=\,
\begin{bmatrix}
\frac{1}{4} & \frac{1}{4} \\[3pt]
\frac{1}{4} & \frac{1}{4} 
\end{bmatrix};
~~
\G^* \,=\, \nabla \psi(\C^*) \,=\,
\begin{bmatrix}
~\,\,\frac{1}{2} & -\frac{1}{2} \\[3pt]
-\frac{1}{2} & ~\,\,\frac{1}{2} 
\end{bmatrix}.
$$
Clearly, any classifier $\h: \X \> \Delta_2$ will have $\langle \G^*, \conf(\h, D)\rangle = 0$; hence 
\[
\{\h: \X \> \Delta_2\}\,=\, \underset{\h: \X \> \Delta_2}{\argmax} \,\langle \G^*, \conf(\h,D) \rangle \,\,\supset \,\, \underset{\h: \X \> \Delta_2}{\argmax} \,\psi(\conf(\h, D)) \,=\, \{\h^*\}.
\]
\end{rem}

It is worth noting that for certain restricted families of performance metrics, the characterization in \Thm{thm:suff-cond-equivalence} holds even without Assumption A on the distribution; this is the case, for example, when $\psi$ is fractional-linear (e.g., F-measure, JAC) \cite{NatarajanRavi14, Parambath+14} and is convex (e.g., AMS metric) \cite{Kotlowski14}. 

\begin{rem}[\textbf{Extension to the min-max metric}]
A result similar to the one in \Thm{thm:suff-cond-equivalence} also holds for the min-max metric, where it is well known from classical detection theory (in particular, from min-max hypothesis testing) that the optimal classifier for this metric is obtained by maximizing a decomposable metric with an appropriate gain matrix \cite{PoorVincent94}. In fact, one can show that if $\h^*$ is an optimal classifier for the min-max metric $\psi^{\MM}$, and $\G^*$ is in the sub-differential of $\psi^\MM$ at $\conf(\h^*,D)$, then
\[
\emptyset \,\ne\, \underset{\h: \X \> \Delta_n}{\argmax} \,\langle \G^*, \conf(\h,D) \rangle \,\,\subseteq\,\, \underset{\h: \X \> \Delta_n}{\argmax} \,\psi^\MM(\conf(\h, D)).
\]
\end{rem}



\section{A Consistent Plug-in Method for Multi-class Non-decomposable Metrics Based on a Brute-force Search}
\label{sec:plug-in-brute-force}
Based on the above characterization of the optimal classifier of a non-decomposable metric, we now consider a simple plug-in based learning algorithm for a multi-class non-decomposable metric that uses a brute-force search over gain matrices; this approach can be seen as a natural extension of the binary plug-in method in Algorithm \ref{algo:plug-in-binary}. We show that this method is consistent with respect to a general non-decomposable metric, and also provide an explicit regret bound for this method for the special case of performance metrics that exhibit a certain convexity-like property. In the next section, we design an alternate efficient learning algorithm based on the conditional gradient algorithm which is consistent for a large family of non-decomposable metrics.

Clearly, if the optimal confusion matrix $\C^*$ for a multi-class non-decomposable metric $\psi$ is known apriori, one can learn a simple plug-in classifier by applying the gradient of $\psi$ at $\C^*$ to a suitable class probability estimator. In the absence of knowledge of $\C^*$, a natural first-cut approach would be to perform a brute-force search over all gain matrices with bounded entries\footnote{Since a plug-in classifier constructed from a gain matrix is invariant to scaling of entries of the matrix, it suffices to perform the search over gain matrices with bounded entries.}, and pick the one for which the resulting plug-in classifier yields maximum performance value on a held-out part of the training set (see Algorithm \ref{algo:brute-force-plug-in-non-decomposable}). While for the binary case ($n = 2$), this brute-force search essentially reduces to a search over thresholds (on the class probability estimate) that can be performed efficiently in time linear in the number of held-out instances (as seen in Algorithm \ref{algo:plug-in-binary}), for the general multi-class case, it is not clear if an exact search is tractable; in practice, this maximization over gain matrices can be performed approximately by considering only a finite number of matrices obtained from a fine-grained grid.

We now show that (under a continuous distribution) the brute-force plug-in method is statistically consistent with respect to the given performance metric. 

\begin{figure*}[t]
{
\begin{algorithm}[H]
\caption{
Brute-force Plug-in Algorithm for Multi-class Non-decomposable Performance Metric
}
\label{algo:brute-force-plug-in-non-decomposable}
\begin{algorithmic}
\STATE \textbf{Input:} $S=((x_1,y_1), \ldots, (x_m, y_m)) \in (\X\times [n])^m$, $\psi: [0,1]^{n \times n} \> \R_+$
\STATE \textbf{Parameter:} $\alpha\in(0,1)$
\STATE Split $S$ into two sets $S'$ and $S''$ with sizes $m_1 = \lfloor (1-\alpha) m \rfloor$ and $m_2 = \lceil \alpha m \rceil$. 
\STATE Learn $\hat{\boldeta}_{S'}=\CPE(S')$, where $\CPE: \cup_{m=1}^\infty (\X \times [n])^m \> \Delta_n^\X$ is a suitable CPE algorithm 
\STATE $\forall \G \,\in\, [-1, 1]^{n \times n},$ define $\widehat{\h}_{\G}:\X \> \Delta_n$ such that $\big[\widehat{\h}_{\G}(x)\big]_i = 1 \,\text{ if  }\, i = \overline{\argmax}_{y \in [n]} {\g}_{y}^\top \widehat{\boldeta}_{S'}(x)$
\STATE $\widehat{\G}_{S} \,\in\, \underset{\G \,\in\, [-1, 1]^{n \times n}}{\argmax}\,\, \cP_{D_{S''}}\big[\widehat{\h}_{\G}\big]$
\STATE \textbf{Output:} $\widehat{\h}_{S} \,\equiv\, \widehat{\h}_{\widehat{\G}_{S}}$
\end{algorithmic}
\end{algorithm}
\vspace{-0.7cm}
}
\end{figure*}

\begin{thm}[\textbf{Consistency of Brute-force Plug-in Algorithm for Multi-class Non-decomposable Metrics}]
\label{thm:consistency-brute-force}
Let $D$ satisfy Assumption A, and $\psi: [0, 1]^{n\times n} \> \R_+$ satisfy Assumption B w.r.t. $D$. If $\widehat{\h}_S$ is the classifier learned by Algorithm \ref{algo:brute-force-plug-in-non-decomposable} using training sample $S = (S', S'') \in (\X \times [n])^m$ with parameter $\alpha \in (0,1)$, and the $\CPE$ algorithm used in Algorithm \ref{algo:brute-force-plug-in-non-decomposable} is such that $\E_X \big[ \big\|\widehat{\boldeta}_{S'}(X) - \boldeta(X)\big\|_1\big]  \xrightarrow{P} 0$, then $\reg_D^\psi[\widehat{\h}_{S}] \,\xrightarrow{P} \, 0$ (as $m \rightarrow \infty$). 
\end{thm}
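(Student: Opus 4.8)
The plan is to bound the $\psi$-regret of the learned classifier $\hat\h_S \equiv \hat\h_{\hat\G_S}$ by two quantities that each vanish in probability: a \emph{uniform} deviation of the held-out empirical performance from the true performance over the whole family of plug-in classifiers that Algorithm~\ref{algo:brute-force-plug-in-non-decomposable} searches, and an \emph{approximation} error due to using $\hat\boldeta_{S'}$ in place of $\boldeta$ inside a single near-optimal gain matrix. First I would fix this reference gain matrix: by \Thm{thm:suff-cond-equivalence} a $\psi$-optimal classifier $\h^*$ exists; set $\C^* = \conf(\h^*,D)$ and $\G^* = \nabla\psi(\C^*)$, so that, by that theorem, the deterministic plug-in classifier built from $\G^*$ on the true $\boldeta$ is $\psi$-optimal. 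Since $\psi$ is strictly increasing in the diagonal entries of its argument (Assumption~B), the diagonal of $\G^*$ is strictly positive, so $c := \max_{i,j}|G^*_{ij}| > 0$; as the plug-in rule is invariant to positive rescaling of its gain matrix, the normalized matrix $\G^{**} := \G^*/c \in [-1,1]^{n\times n}$ induces the same (hence $\psi$-optimal) plug-in classifier on $\boldeta$, which I denote $\h^{**}$, and $\G^{**}$ lies in the search space $[-1,1]^{n\times n}$ of Algorithm~\ref{algo:brute-force-plug-in-non-decomposable}.

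Using the optimality $\cP_{D_{S''}}[\hat\h_{\hat\G_S}] \ge \cP_{D_{S''}}[\hat\h_{\G^{**}}]$ of $\hat\G_S$ over $[-1,1]^{n\times n}$, the fact that $\reg^\psi_D[\hat\h_S] \le 0$, and telescoping through $\cP_{D_{S''}}[\hat\h_{\hat\G_S}]$, $\cP_{D_{S''}}[\hat\h_{\G^{**}}]$, $\cP_D[\hat\h_{\G^{**}}]$, one obtains
\[
\big|\reg^\psi_D[\hat\h_S]\big| \;\le\; 2\sup_{\G \in [-1,1]^{n\times n}} \big|\cP_{D_{S''}}[\hat\h_\G] - \cP_D[\hat\h_\G]\big| \;+\; \big|\cP_D[\hat\h_{\G^{**}}] - \cP^*_D\big| ,
\]
so it remains to show that each term on the right-hand side converges to $0$ in probability as $m \to \infty$ (note that $m_1 = \lfloor(1-\alpha)m\rfloor \to \infty$ and $m_2 = \lceil\alpha m\rceil \to \infty$).

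For the supremum term, condition on $S'$, so that $\hat\boldeta_{S'}$ is fixed and $S''$ is an independent i.i.d.\ sample. The map $x \mapsto \hat\h_\G(x)$ depends on $\G$ only through the label $\overline{\argmax}_{y}\, \g_y^\top \hat\boldeta_{S'}(x)$ --- that is, through which of $n$ linear functionals of $\hat\boldeta_{S'}(x) \in \Delta_n \subset \R^n$ is largest --- and the indicator functions of the resulting decision regions (intersections of halfspaces in $\R^n$) form a class of VC/Natarajan dimension polynomial in $n$. Applying the VC inequality entrywise to $\conf(\hat\h_\G, D_{S''})$, whose conditional expectation given $S'$ equals $\conf(\hat\h_\G,D)$, gives, conditionally on $S'$, that with probability at least $1-\delta$ over $S''$, $\sup_\G\|\conf(\hat\h_\G,D_{S''}) - \conf(\hat\h_\G,D)\|_1 \le c_n\sqrt{(\log m_2 + \log(1/\delta))/m_2}$ for a constant $c_n$ depending only on $n$; this tends to $0$ in probability, and since $\psi$ is continuous --- hence uniformly continuous on the compact set $[0,1]^{n\times n}$ --- the supremum term vanishes in probability. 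I expect this to be the main obstacle: the crux is bounding the \emph{effective} complexity of the infinite family $\{\hat\h_\G\}_{\G}$ on the held-out sample so that a fixed-class uniform-convergence argument applies after conditioning on $S'$, and then translating confusion-matrix closeness into performance closeness via uniform continuity of $\psi$.

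For the approximation term, by continuity of $\psi$ on $\CC_D$ and $\psi(\conf(\h^{**},D)) = \cP^*_D$, it suffices to show $\|\conf(\hat\h_{\G^{**}},D) - \conf(\h^{**},D)\|_1 \xrightarrow{P} 0$, and since any two classifiers' confusion matrices differ in $\ell_1$ norm by at most $2\,\P_X\big(\hat\h_{\G^{**}}(X) \ne \h^{**}(X)\big)$, it suffices to show that disagreement probability vanishes in probability. A disagreement at $x$ forces the winner of the $\overline{\argmax}$ over $\{(\g^{**}_y)^\top\boldeta(x)\}_y$ to differ from that over $\{(\g^{**}_y)^\top\hat\boldeta_{S'}(x)\}_y$, which (using $|G^{**}_{ij}|\le 1$ to bound each score's perturbation by $\|\hat\boldeta_{S'}(x)-\boldeta(x)\|_1$) requires some pair of labels $y\ne y'$ with $\g^{**}_y\ne\g^{**}_{y'}$ (pairs with equal columns always tie and never cause a disagreement) to satisfy $|(\g^{**}_y-\g^{**}_{y'})^\top\boldeta(x)|\le 2\|\hat\boldeta_{S'}(x)-\boldeta(x)\|_1$; in particular either (a) there exist such $y\ne y'$ with $|(\g^{**}_y-\g^{**}_{y'})^\top\boldeta(x)|\le 2\gamma$, or (b) $\|\hat\boldeta_{S'}(x)-\boldeta(x)\|_1\ge\gamma$. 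The set in (a) is a finite union of width-$O(\gamma)$ slabs around genuine hyperplane sections of $\Delta_n$; under Assumption~A the law $\nu$ of $\boldeta(X)$ is absolutely continuous w.r.t.\ the uniform measure $\mu$ on $\Delta_n$, which assigns these sections measure $0$, so $\P_X(\text{(a)}) \to 0$ as $\gamma \downarrow 0$. By Markov's inequality $\P_X(\text{(b)}) \le \tfrac{1}{\gamma}\E_X\big[\|\hat\boldeta_{S'}(X) - \boldeta(X)\|_1\big] \xrightarrow{P} 0$ by the hypothesis on the $\CPE$ algorithm. Letting $m \to \infty$ and then $\gamma \downarrow 0$ yields $\P_X\big(\hat\h_{\G^{**}}(X) \ne \h^{**}(X)\big) \xrightarrow{P} 0$, and combining with the supremum bound above gives $\reg^\psi_D[\hat\h_S] \xrightarrow{P} 0$, as claimed.
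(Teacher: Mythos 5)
Your proposal is correct and follows essentially the same route as the paper's proof: the same regret decomposition into (i) a uniform convergence term over the family of plug-in classifiers built on $\widehat{\boldeta}_{S'}$, controlled by a VC bound for intersections of halfspaces applied entrywise to the confusion matrix (the paper's \Lem{lem:uconvg-conf}), and (ii) an approximation term for the fixed optimal gain matrix $\G^*=\nabla\psi(\conf(\h^*,D))$, controlled by Markov's inequality plus the vanishing $\nu$-measure of the near-tie slabs under Assumption A (the paper's \Lem{lem:fixed-gain-matrix-conf}), all anchored by \Thm{thm:suff-cond-equivalence}. Your explicit rescaling of $\G^*$ into $[-1,1]^{n\times n}$ (using that its diagonal is strictly positive) is a small detail the paper leaves implicit, but otherwise the arguments coincide.
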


The above guarantee applies to all performance metrics in \Tab{tab:perf-metrics}. Before we prove this result, we state a couple of lemmas; in the first lemma, we consider a classifier obtained by applying a fixed gain matrix to a class probability estimation model, and show convergence of the entries of the confusion matrix for this classifier to those of a classifier obtained by applying the gain matrix to the true class probability function; in the second lemma, we give a uniform convergence bound for the confusion matrix of a set of weighted argmax classifiers.

\begin{lem}[\textbf{Convergence of $\conf$ for fixed gain matrix}]
\label{lem:fixed-gain-matrix-conf}
Let $D$ satisfy Assumption A. Let $\widehat{\boldeta}_{\tilde{S}}: \X \> \Delta_n$ be a class probability estimation model learned using a sample $\tilde{S}$ drawn i.i.d. from $D^{\tilde{m}}$. For a fixed gain matrix $\G \in [0,1]^{n \times n}$ such that no two columns are identical, let $\h_{\G}: \X \> \Delta_n$ and $\widehat{\h}_{\G}: \X \> \Delta_n$ be classifiers constructed as follows: $\big[{\h}_{\G}(x)\big]_i = 1 \text{ if  } i = \overline{\argmax}_{y \in [n]} {\g}_{y}^\top {\boldeta}(x), \,\forall x \in \X$ and $\big[\widehat{\h}_{\G}(x)\big]_i = 1 \text{ if  } i = \overline{\argmax}_{y \in [n]} {\g}_{y}^\top \widehat{\boldeta}_{\tilde{S}}(x), \,\forall x \in \X$. If $\widehat{\boldeta}_{\tilde{S}}$ is such that $\E_X \big[ \big\|\widehat{\boldeta}_{\tilde{S}}(X) - \boldeta(X)\big\|_1\big] \xrightarrow{P} 0$, then $\forall i,j,\,\big[\conf(\widehat{\h}_{\G}, D)\big]_{ij} \,\xrightarrow{P} \, \big[\conf({\h}_{\G}, D)\big]_{ij}$ (as $m \rightarrow \infty$).
\end{lem}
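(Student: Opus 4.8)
The goal is to show that the confusion matrix entries of the empirical plug-in classifier $\widehat{\h}_\G$ converge in probability to those of the population plug-in classifier $\h_\G$, for a fixed gain matrix $\G$ with distinct columns. The natural strategy is to bound the difference $\big|[\conf(\widehat{\h}_\G,D)]_{ij} - [\conf(\h_\G,D)]_{ij}\big|$ by an expression involving the $\ell_1$ estimation error $\E_X[\|\widehat{\boldeta}_{\tilde S}(X)-\boldeta(X)\|_1]$, and then invoke the hypothesis that this error tends to $0$ in probability. Writing out the entries, the difference is controlled by $\P_X\big(\overline{\argmax}_y \g_y^\top\widehat{\boldeta}_{\tilde S}(X) \ne \overline{\argmax}_y \g_y^\top\boldeta(X)\big)$, i.e. the probability mass of the region where the two $\argmax$ rules disagree.

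The key step is to argue that this disagreement region has small probability whenever $\widehat{\boldeta}_{\tilde S}$ is close to $\boldeta$ in $\ell_1$. First I would fix a margin parameter $\gamma > 0$ and split $\X$ into (a) the ``low-margin'' set $\X_\gamma$ where the top two values of $y \mapsto \g_y^\top\boldeta(x)$ are within $\gamma$ of each other, and (b) its complement. On the complement, if $\|\widehat{\boldeta}_{\tilde S}(x)-\boldeta(x)\|_1$ is smaller than roughly $\gamma$ (times a constant depending on $\|\G\|_\infty$), then the $\argmax$ — and hence the tie-broken $\overline{\argmax}$ — cannot change, so there is no disagreement; a Markov-inequality argument on the $\ell_1$ error bounds the probability of the ``bad'' event $\{\|\widehat{\boldeta}_{\tilde S}(X)-\boldeta(X)\|_1 \ge c\gamma\}$ by $\frac{1}{c\gamma}\E_X[\|\widehat{\boldeta}_{\tilde S}(X)-\boldeta(X)\|_1]$, which goes to $0$ in probability for each fixed $\gamma$. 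For the low-margin set, I would use Assumption A: since $\boldeta(X)$ has a distribution absolutely continuous with respect to the uniform measure on $\Delta_n$, and the set $\{\p \in \Delta_n : |\g_{(1)}^\top\p - \g_{(2)}^\top\p| \le \gamma\}$ (top two values within $\gamma$) shrinks to a measure-zero set as $\gamma \to 0$ — this is where distinctness of the columns of $\G$ matters, ensuring each hyperplane $\{\p : (\g_i - \g_j)^\top \p = 0\}$ is a genuine proper affine subspace — the mass $D_\X(\X_\gamma)$ can be made arbitrarily small by taking $\gamma$ small.

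Putting these together with an $\varepsilon/\delta$ argument: given target accuracy $\varepsilon$, first pick $\gamma$ small enough that $D_\X(\X_\gamma) < \varepsilon/2$ (using Assumption A and distinct columns), then for that fixed $\gamma$ the probability (over the draw of $\tilde S$) that the $\ell_1$ estimation error exceeds $c\gamma$ goes to $0$, so with high probability the disagreement mass outside $\X_\gamma$ is also below $\varepsilon/2$; hence the total disagreement probability is below $\varepsilon$ with probability tending to $1$. Since $|[\conf(\widehat{\h}_\G,D)]_{ij} - [\conf(\h_\G,D)]_{ij}| \le \P_X(\text{disagree})$ for every $i,j$, this yields the claimed convergence in probability for every entry.

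The main obstacle I anticipate is handling the measure of the low-margin set $\X_\gamma$ rigorously: one needs to verify that $\mu\big(\{\p \in \Delta_n : \text{top two of } \g_y^\top\p \text{ are within }\gamma\}\big) \to 0$ as $\gamma \downarrow 0$, which requires noting that this set is a finite union of ``slabs'' around the hyperplanes $(\g_i-\g_j)^\top\p = 0$, each of which has zero $\mu$-measure precisely because $\g_i \ne \g_j$ (so the hyperplane is proper and of lower dimension in $\aff(\Delta_n)$); continuity of measure then gives the limit, and absolute continuity of $\nu$ w.r.t.\ $\mu$ transfers this to $D_\X$. The rest — the Markov bound on the $\ell_1$ error and the bookkeeping that a small $\ell_1$ perturbation cannot flip a strictly-separated $\overline{\argmax}$ — is routine.
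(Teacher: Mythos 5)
Your proposal is correct and follows essentially the same route as the paper's proof: bound the difference of confusion-matrix entries by the disagreement probability of the two argmax rules, control the pointwise estimation error via Markov's inequality, and use Assumption A together with the distinct-columns condition to show that the low-margin ``slab'' region around the hyperplanes $(\g_y - \g_{y'})^\top \p = 0$ has vanishing $\nu$-measure. The only cosmetic difference is that the paper quantifies the slab measure through its lemma on non-constant affine functions over a convex body (treating the case $\g_y - \g_{y'}$ proportional to the all-ones vector, where the slab is simply empty for small width, as a separate case), whereas you invoke continuity of measure; the substance is the same.
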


\begin{lem}[\textbf{Uniform Convergence Generalization Bound for $\conf$ Over $\H_{{\bmu}}$}]
\label{lem:uconvg-conf}
Let ${\bmu}: \X \> \R^n$ be a fixed function and ${\tilde{S}} \in (\X \times [n])^{\tilde{m}}$ be a sample drawn i.i.d. according to $D^{\tilde{m}}$. For any $\delta \in [0,1]$, we have with probability at least $1-\delta$ (over draw of $\tilde{S}$ from $D^{\tilde{m}}$),
\[
\sup_{\h \,\in\, \H_{{\bmu}}}\,\big\|\conf(\h, D) \,-\, \conf(\h, D_{\tilde{S}})\big\|_\infty \,\,\leq\,\, 
C\sqrt{\frac{n^2\log(n)\log(\tilde m) + \log(n^2/\delta)}{\tilde m}},
\]
where $C > 0$ is a distribution-independent constant.
\end{lem}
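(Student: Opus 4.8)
The plan is to obtain the bound via a union bound over the $n^2$ entries of the confusion matrix, reducing each entry to a standard VC-type uniform deviation estimate over a suitably structured class of indicator functions. Fix a pair $(i,j) \in [n]^2$. For any $\h \in \H_{\bmu}$ there is a gain matrix $\G$ with $h_k(x) = \1\big(k = \overline{\argmax}_{y} [\G\bmu(x)]_y\big)$, so the $(i,j)$ entry of $\conf(\h,D)$ is $\E_{(X,Y)\sim D}\big[\1(Y=i)\,\1\big(j = \overline{\argmax}_y [\G\bmu(X)]_y\big)\big]$, and similarly for $\conf(\h,D_{\tilde S})$ with the empirical average over $\tilde S$. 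Thus for each $(i,j)$ the deviation $\big|[\conf(\h,D)]_{ij} - [\conf(\h,D_{\tilde S})]_{ij}\big|$ is exactly an empirical-process deviation for the function class
\[
\F_{ij} \,=\, \Big\{ (x,y) \mapsto \1(y=i)\,\1\big(j = \overline{\argmax}_{k\in[n]} [\G\bmu(x)]_k\big) \,:\, \G \in \R^{n\times n}\Big\}.
\]
Since $\1(y=i)$ is a fixed factor, it suffices to bound the complexity of the class of sets $\{x : j = \overline{\argmax}_k [\G\bmu(x)]_k\}$ as $\G$ ranges over $\R^{n\times n}$.

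The key combinatorial step is to bound the VC dimension (or growth function) of this class of "argmax regions.'' The event $j = \overline{\argmax}_k [\G\bmu(x)]_k$ is determined by the signs of the $n-1$ linear-in-$\G$ comparisons $[\G\bmu(x)]_j - [\G\bmu(x)]_k \,(\geq \text{ or } >)\, 0$ for $k \neq j$ (with the tie-breaking convention fixing the boundary behaviour). For a fixed point $x$, each such comparison is a halfspace in the $n^2$-dimensional parameter space of $\G$ (the map $\G \mapsto [\G\bmu(x)]_j - [\G\bmu(x)]_k$ is linear in $\G$). Hence the collection of argmax regions, viewed as subsets of $\X$ indexed by $\G$, is a Boolean combination of at most $n-1$ halfspace-indexed classes in $\R^{n^2}$; since halfspaces in $\R^{n^2}$ have VC dimension $n^2 + 1$ and Boolean combinations of $n$ such classes have VC dimension $O(n^2 \cdot n \log n) = O(n^3\log n)$ by the standard composition bound, one gets a polynomial VC bound. (One must be slightly careful that the tie-breaking $\overline{\argmax}$ does not blow up the complexity: ties occur on a lower-dimensional set and the convention picks a consistent side, so the class remains within a fixed Boolean combination of affine halfspaces; this is the one place the argument needs care.) Actually, to match the stated $n^2\log n$ rate inside the square root one wants the growth function of the relevant class to be of order $\tilde m^{O(n^2)}$, which follows more directly: for $\tilde m$ points, the number of distinct sign patterns of the $O(n\tilde m)$ affine functions $\G \mapsto [\G\bmu(x_t)]_j - [\G\bmu(x_t)]_k$ in $\R^{n^2}$ is at most $(n\tilde m)^{O(n^2)}$ by the standard hyperplane-arrangement bound, so the growth function of $\F_{ij}$ restricted to $\tilde m$ points is $(n \tilde m)^{O(n^2)}$, i.e. $\log$ of it is $O(n^2 \log(n \tilde m))$.

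Given this growth-function bound, I would invoke a standard uniform convergence inequality (e.g. the VC inequality / bounded-difference + symmetrization, or directly a result such as in Bousquet--Boucheron--Lugosi): with probability at least $1 - \delta'$, $\sup_{f \in \F_{ij}} |\E_D f - \E_{D_{\tilde S}} f| \leq C'\sqrt{\frac{n^2\log(n\tilde m) + \log(1/\delta')}{\tilde m}}$. Taking a union bound over the $n^2$ choices of $(i,j)$ with $\delta' = \delta/n^2$ converts the $\sup$ over entries in $\|\cdot\|_\infty$ into the claimed bound, absorbing $\log(n)$ factors into the constant $C$; note $\log(n\tilde m) = \log n + \log \tilde m \le \log n \cdot (1 + \log \tilde m)$ up to constants, so the $\log(n)\log(\tilde m)$ form in the statement is recovered. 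The main obstacle I anticipate is the combinatorial argmax-complexity step — precisely handling the $\overline{\argmax}$ tie-breaking and making sure the Boolean/arrangement counting gives the $n^2$ exponent rather than something larger — whereas the reduction to $n^2$ scalar empirical processes and the final union bound are routine.
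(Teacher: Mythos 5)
Your proposal is correct and follows the same overall skeleton as the paper's proof: reduce $\|\cdot\|_\infty$ to the $n^2$ individual entries, bound each entry's deviation by a uniform deviation over a class of indicator functions of ``argmax regions'' (the fixed factor $\1(y=i)$ being harmless), apply a VC-type bound, and finish with a union bound at confidence $\delta/n^2$. The one place you diverge is the capacity estimate. The paper works in the primal/feature space: for fixed output label $b$, the region $\{x : b = \overline{\argmax}_t \g_t^\top\bmu(x)\}$ is the $\bmu$-preimage of an intersection of $n$ homogeneous halfspaces in $\R^n$, and Lemma 3.2.3 of Blumer et al.\ gives VC dimension at most $2n^2\log(3n)$, which plugged into the standard VC bound yields the $n^2\log(n)\log(\tilde m)$ term. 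You instead bound the growth function directly in parameter space: on $\tilde m$ fixed points the behaviour of the classifier indexed by $\G\in\R^{n\times n}$ is determined by the $\{<,=,>\}$ sign pattern of $O(n\tilde m)$ linear functionals of $\G$, and the hyperplane-arrangement bound gives at most $(n\tilde m)^{O(n^2)}$ patterns, i.e.\ $\log$ growth function $O(n^2\log(n\tilde m))$, which again gives the stated rate (and, as you note, handles the $\overline{\argmax}$ tie-breaking cleanly since zero-signs are counted — a point the paper's argument glosses over). Your parenthetical first attempt via Boolean combinations of halfspace classes would only give VC dimension $O(n^3\log n)$, too lossy for the claimed exponent, but you correctly discard it in favour of the sign-pattern counting, so the final argument is sound and delivers a bound equivalent to (indeed marginally sharper than) the paper's, with the $\log(n)\log(\tilde m)$ form recovered up to the constant $C$.
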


\noindent We are now ready to prove Theorem \ref{thm:consistency-brute-force}.

\begin{proof}[Proof of Theorem \ref{thm:consistency-brute-force}]
By Theorem \ref{thm:suff-cond-equivalence}, a $\psi$-optimal classifier exists. Let $\h^*: \X \> \Delta_n$ be one such classifier and let $\G^*=\nabla\psi(\conf(\h^*,D))$. Further, let $\h_{\G^*}: \X \> \Delta_n$ be a classifier such that $\big[\h_{\G^*}(x)]_i = 1$  if $i = \overline{\argmax}_{y \in [n]} \g_{y}^{*\top} {\boldeta}(x)$; then again by Theorem \ref{thm:suff-cond-equivalence}, $\Pf_D[\h_{\G^*}] = \Pf_D[\h^*]$. Also let $\widehat\h_{\G^*}: \X \> \Delta_n$ be  such that $\big[\widehat\h_{\G^*}(x)]_i = 1$  if $i = \overline{\argmax}_{y \in [n]} \g_{y}^{*\top} {\hat\boldeta_{S'}}(x)$. Thus,
\begin{eqnarray*}
{\reg_D^\psi[\widehat{\h}_S]}
&=& \Pf_D[\h^*] \,-\, \Pf_D[\widehat{\h}_S]\\
&=& \Pf_D[\h_{\G^*}] \,-\, \Pf_D[\widehat{\h}_S]\\
&=& \Pf_D[\h_{\G^*}] \,-\, \Pf_D[\widehat{\h}_{\G^*}]  \,\,+\,\, \Pf_D[\widehat{\h}_{\G^*}] \,-\, {\Pf}_{D_{S''}}[\widehat{\h}_{\G^*}] \,\,+\,\, {\Pf}_{D_{S''}}[\widehat{\h}_{\G^*}] \,-\, \Pf_D[\widehat{\h}_S]\\
&\leq& \Pf_D[\h_{\G^*}] \,-\, \Pf_D[\widehat{\h}_{\G^*}]  \,\,+\,\, \Pf_D[\widehat{\h}_{\G^*}] \,-\, {\Pf}_{D_{S''}}[\widehat{\h}_{\G^*}] \,\,+\,\, {\Pf}_{D_{S''}}[\widehat{\h}_{S}] \,-\, \Pf_D[\widehat{\h}_S]\\
&\leq& \Pf_D[\h_{\G^*}] \,-\, \Pf_D[\widehat{\h}_{\G^*}]  \,\,+\,\,
		\sup_{\h \,\in\, \H_{\widehat{\boldeta}_{S'}}}\big(\Pf_D[\h] \,-\, {\Pf}_{D_{S''}}[\h]\big) \,\,+\,\,
		\sup_{\h \,\in\, \H_{\widehat{\boldeta}_{S'}}}\big({\Pf}_{D_{S''}}[\h] \,-\, \Pf_D[\h]\big)\\
&=& \underbrace{\Pf_D[\h_{\G^*}] \,-\, \Pf_D[\widehat{\h}_{\G^*}]}_{\term_A}  \,\,+\,\,
		2\underbrace{\sup_{\h \,\in\, \H_{\widehat{\boldeta}_{S'}}}\,\big|\Pf_D[\h] \,-\, {\Pf}_{D_{S''}}[\h]\big|}_{\term_B},
\end{eqnarray*}
where the fourth step follows by definition of $\widehat{\h}_{S}$. By assumption B on $\psi$, the matrix $\G^*$ has no two identical columns, and hence by Lemma \ref{lem:fixed-gain-matrix-conf} we have that $\conf(\widehat{\h}_{\G^*})$ converges to $\conf(\h_{\G^*})$ as $m$ goes to $\infty$. Along with the continuity of $\psi$, this ensures that  $\term_A \xrightarrow{P} 0$.  By suitably conditioning on $S'$ and using the uniform convergence bound in Lemma \ref{lem:uconvg-conf}, one gets $\term_B \xrightarrow{P} 0$. 
\end{proof}	

For a special class of performance metrics that satisfy a certain convexity-like property, we have an explicit regret bound guarantee for the brute-force plug-in method.
\begin{thm}[\textbf{Regret Bound for Brute-force Plug-in Algorithm for Convex-like Non-decomposable Metrics}]
\label{thm:regret-bound-brute-force-plug-in}
Let $D$ satisfy Assumption A, and $\psi: [0, 1]^{n\times n} \> \R_+$ satisfy Assumption B w.r.t. $D$. Furthermore, let $\psi$ be $L$-Lipschitz w.r.t. the $\ell_1$ norm over $\CC_D$, and be such that there exists $\xi > 0$ such that $\psi(\C) - \psi(\C') \,\leq\, \xi\langle \nabla\psi(\C), \C - \C' \rangle,\,\forall \C,\, \C' \in \CC_D$. If $\widehat{\h}_S$ is the classifier learned by Algorithm \ref{algo:brute-force-plug-in-non-decomposable} using training sample $S = (S', S'') \in (\X \times [n])^m$ with parameter $\alpha \in (0,1)$, then for any $\delta \in [0,1]$, we have with probability at least $1-\delta$ (over draw of $S$ from $D^m$):
\[
\reg_D^\psi[\widehat{\h}_{S}] \,\,\leq\,\, 2L\xi\E_X\big[\big\|\widehat{\boldeta}_{S'}(X) \,-\, \boldeta(X)\big\|_1\big] \,+\, 
2LC\sqrt{\frac{n^2\log(n)\log(\alpha m) + \log(n^2/\delta)}{\alpha m}},
\]
where $C > 0$ is a distribution-independent constant.
\end{thm}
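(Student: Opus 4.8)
The plan is to retrace the regret decomposition from the proof of \Thm{thm:consistency-brute-force} and replace the two qualitative ``convergence in probability'' steps there by quantitative bounds. Let $\C^*$ and $\h^*$ be the $\psi$-optimal confusion matrix and classifier guaranteed by \Thm{thm:suff-cond-equivalence} (using Assumptions~A and~B), set $\G^* = \nabla\psi(\C^*)$, and let $\h_{\G^*}$ and $\widehat{\h}_{\G^*}$ be the weighted-argmax classifiers obtained by applying $\G^*$ to $\boldeta$ and to $\widehat{\boldeta}_{S'}$ respectively. Exactly as in that proof --- using $\Pf_D[\h^*] = \Pf_D[\h_{\G^*}]$ from \Thm{thm:suff-cond-equivalence} and $\Pf_{D_{S''}}[\widehat{\h}_{\G^*}] \le \Pf_{D_{S''}}[\widehat{\h}_S]$ --- one gets
\[
\reg_D^\psi[\widehat{\h}_S] \,\le\, \underbrace{\Pf_D[\h_{\G^*}] - \Pf_D[\widehat{\h}_{\G^*}]}_{\term_A} \;+\; 2\,\underbrace{\sup_{\h \in \H_{\widehat{\boldeta}_{S'}}} \big|\Pf_D[\h] - \Pf_{D_{S''}}[\h]\big|}_{\term_B}.
\]
The step $\Pf_{D_{S''}}[\widehat{\h}_{\G^*}] \le \Pf_{D_{S''}}[\widehat{\h}_S]$ requires $\widehat{\h}_{\G^*}$ to lie in the class searched over in Algorithm~\ref{algo:brute-force-plug-in-non-decomposable}; this holds because a plug-in classifier is invariant to positive rescaling of its gain matrix, so $\widehat{\h}_{\G^*} = \widehat{\h}_{\G^*/\|\G^*\|_\infty}$ with $\G^*/\|\G^*\|_\infty \in [-1,1]^{n\times n}$ (and $\|\G^*\|_\infty > 0$ since the diagonal of $\G^*$ is strictly positive by Assumption~B).

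The heart of the argument is bounding $\term_A$. First I would note that $\conf(\h_{\G^*},D) = \C^*$: by \Eqn{eqn:Bayes-decomposable}, $\h_{\G^*}$ maximizes the linear metric $\langle \G^*, \conf(\cdot,D)\rangle$, so $\conf(\h_{\G^*},D) \in \argmax_{\C\in\CC_D}\langle \nabla\psi(\C^*),\C\rangle$, which by \Lem{lem:opt-in-closure-same-as-opt} is the singleton $\{\C^*\}$. Hence $\term_A = \psi(\C^*) - \psi(\conf(\widehat{\h}_{\G^*},D))$, and the convexity-like hypothesis with $\C = \C^*$, $\C' = \conf(\widehat{\h}_{\G^*},D)$ gives $\term_A \le \xi\,\langle \G^*,\, \C^* - \conf(\widehat{\h}_{\G^*},D)\rangle$. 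The remaining inner product is handled by a standard cost-sensitive plug-in estimate: introduce the pseudo-confusion matrix $\widehat{\C}(\h)$ with $[\widehat{\C}(\h)]_{ij} = \E_X\big[h_j(X)\,[\widehat{\boldeta}_{S'}(X)]_i\big]$, and note that $\widehat{\h}_{\G^*}$ is precisely the maximizer of $\langle \G^*, \widehat{\C}(\cdot)\rangle$ over all classifiers; splitting $\langle \G^*, \C^* - \conf(\widehat{\h}_{\G^*},D)\rangle$ through $\widehat{\C}(\h_{\G^*})$ and $\widehat{\C}(\widehat{\h}_{\G^*})$ leaves a non-positive middle term and two ``estimation gap'' terms, each at most $\|\G^*\|_\infty\,\E_X\big[\|\boldeta(X) - \widehat{\boldeta}_{S'}(X)\|_1\big]$ (using $\conf(\h,D)_{ij} = \E_X[h_j(X)\eta_i(X)]$ and $\sum_j h_j(X) = 1$). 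Finally, $L$-Lipschitzness of $\psi$ bounds its gradient in the dual norm, so $\|\G^*\|_\infty \le L$, and therefore $\term_A \le 2L\xi\,\E_X\big[\|\widehat{\boldeta}_{S'}(X) - \boldeta(X)\|_1\big]$; this bound is deterministic once $S'$ is fixed.

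For $\term_B$, conditioning on $S'$ makes $\H_{\widehat{\boldeta}_{S'}}$ a fixed class of weighted-argmax classifiers and $S''$ an independent i.i.d.\ sample of size $m_2 = \lceil \alpha m\rceil$; by $L$-Lipschitzness of $\psi$ and then \Lem{lem:uconvg-conf} applied with $\widetilde{m} = m_2 \ge \alpha m$ (absorbing $\log m_2$ versus $\log(\alpha m)$ into the constant), with probability at least $1-\delta$ over $S''$,
\[
\term_B \,\le\, LC\sqrt{\frac{n^2\log(n)\log(\alpha m) + \log(n^2/\delta)}{\alpha m}}.
\]
Adding the deterministic bound on $\term_A$ to this high-probability bound on $2\term_B$ yields the claimed inequality (with probability at least $1-\delta$ over the draw of $S$). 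The main obstacle is the $\term_A$ step: one must recognize that $\conf(\h_{\G^*},D)$ equals the unique optimal confusion matrix $\C^*$ --- so that $\nabla\psi$ there is exactly $\G^*$ and the convexity-like inequality is anchored at the right point --- and then carry out the cost-sensitive plug-in estimate via the pseudo-confusion matrix; the $\term_B$ step is a routine use of \Lem{lem:uconvg-conf} on top of Lipschitzness, the only care needed being the norm in which $\psi$'s Lipschitz constant is applied.
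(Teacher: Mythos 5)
Your proposal is correct and takes essentially the same route as the paper's proof: the same decomposition into $\term_A$ and $\term_B$ as in \Thm{thm:consistency-brute-force}, the convexity-like inequality applied with $\G^*=\nabla\psi(\C^*)$ (your explicit identification $\conf(\h_{\G^*},D)=\C^*$ via \Lem{lem:opt-in-closure-same-as-opt} is a point the paper leaves implicit), the cost-sensitive plug-in estimate for $\term_A$ (which you re-derive inline instead of citing \Lem{lem:regret-bound-linear-psi-est-cond-prob}), and \Lem{lem:uconvg-conf} after Lipschitzness for $\term_B$.
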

The above result applies to several performance metrics including the AMS measure ($\xi = 1$) \cite{Kotlowski14}, the binary F-measure ($\xi = 1/\pi_1$) \cite{Parambath+14} and the multi-class micro F-measure ($\xi = 1/(1-\pi_1)$) \cite{Parambath+14}. 
The proof of this theorem follows a similar progression as that of \Thm{thm:consistency-brute-force} and additionally makes use of the convexity-like property of $\psi$ and the following regret bound for a linear/decomposable performance metric defined using a bounded gain matrix.

\begin{lem}[\textbf{Regret Bound for Linear/Decomposable Performance Metric with Bounded Gain Matrix}]
\label{lem:regret-bound-linear-psi-est-cond-prob}
Let $\G \in [-L,L]^{n\times n}$ be a fixed gain matrix. Let $\widehat{\boldeta}: \X \> \Delta_n$ be a class probability estimation model and $\widehat{\h}_{\G}: \X \> \Delta_n$ be a classifier constructed such that $\big[\widehat{\h}_{\G}(x)\big]_i = 1 \text{  if  } i = \overline{\argmax}_{y \in [n]} {\g}_{y}^\top \widehat{\boldeta}(x)$. We then have
\begin{equation*}
\label{eqn:apx-LMO-1}
\max_{\h:\X\>\Delta_n} \langle \G , \conf(\h , D) \rangle \,-\, \langle \G , \conf(\widehat{\h}_\G,D) \rangle \,\,\leq\,\, 2L\E_X\big[\big\|\widehat{\boldeta}(X) \,-\, \boldeta(X)\big\|_1\big].
\end{equation*}
\end{lem}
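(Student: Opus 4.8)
The plan is to reduce the claim to a pointwise (in $x$) inequality by rewriting the linear metric $\langle\G,\conf(\cdot,D)\rangle$ as an expectation of an inner product with the conditional probability vector. For any randomized classifier $\h:\X\>\Delta_n$, since $\g_j$ denotes the $j$-th column of $\G$,
\[
\langle \G, \conf(\h,D)\rangle \,=\, \sum_{i,j} G_{ij}\,\E_X\big[h_j(X)\eta_i(X)\big] \,=\, \E_X\Big[\textstyle\sum_{j=1}^n h_j(X)\,\big(\g_j^\top\boldeta(X)\big)\Big].
\]
Maximizing $\h(x)\in\Delta_n$ pointwise gives $\max_{\h}\langle\G,\conf(\h,D)\rangle = \E_X\big[\max_{y\in[n]}\g_y^\top\boldeta(X)\big]$, while $\widehat{\h}_\G$ puts all its mass on $\hat y(x):=\overline{\argmax}_{y}\g_y^\top\widehat{\boldeta}(x)$, so $\langle\G,\conf(\widehat{\h}_\G,D)\rangle = \E_X\big[\g_{\hat y(X)}^\top\boldeta(X)\big]$. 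Hence the left-hand side of the lemma equals $\E_X[\Delta(X)]$ with $\Delta(x) = \max_{y}\g_y^\top\boldeta(x) - \g_{\hat y(x)}^\top\boldeta(x)$.

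Next I would fix $x$, pick $y^*=y^*(x)\in\argmax_{y}\g_y^\top\boldeta(x)$, write $\hat y=\hat y(x)$, and insert $\widehat{\boldeta}(x)$:
\[
\Delta(x) \,=\, \g_{y^*}^\top\big(\boldeta(x)-\widehat{\boldeta}(x)\big) \,+\, \big(\g_{y^*}-\g_{\hat y}\big)^\top\widehat{\boldeta}(x) \,+\, \g_{\hat y}^\top\big(\widehat{\boldeta}(x)-\boldeta(x)\big).
\]
The middle term is nonpositive because $\hat y$ maximizes $\g_y^\top\widehat{\boldeta}(x)$ over $y\in[n]$ (the $\overline{\argmax}$ tie-break is irrelevant here — any maximizer works). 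For the first and third terms, Hölder's inequality together with $\g_y\in[-L,L]^n$, i.e. $\|\g_y\|_\infty\le L$, bounds each by $L\|\boldeta(x)-\widehat{\boldeta}(x)\|_1$. Therefore $\Delta(x)\le 2L\|\boldeta(x)-\widehat{\boldeta}(x)\|_1$, and taking $\E_X$ yields the claimed bound.

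There is no genuine obstacle in this lemma; the only thing to get right is the column-versus-row bookkeeping of $\G$ when rewriting $\langle\G,\conf(\h,D)\rangle$ as $\E_X[\sum_j h_j(X)\,\g_j^\top\boldeta(X)]$, and the observation that the tie-breaking convention in the definition of $\widehat{\h}_\G$ plays no role, since it suffices that $\hat y(x)$ be \emph{some} maximizer of $\g_y^\top\widehat{\boldeta}(x)$ for the cross term to be nonpositive.
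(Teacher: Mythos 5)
Your proof is correct and follows essentially the same route as the paper's: both insert $\widehat{\boldeta}$, use the fact that $\widehat{\h}_\G$ maximizes $\g_y^\top\widehat{\boldeta}(x)$ pointwise to discard the cross term, and bound the two remaining estimation-error terms via H\"older's inequality with $\|\g_y\|_\infty \le L$, giving the factor $2L$. The only cosmetic difference is that you argue pointwise with the indices $y^*(x),\hat y(x)$, while the paper carries out the identical decomposition at the level of the classifier vectors $\h^*(X)$ and $\widehat{\h}_\G(X)$.
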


\begin{rem}[\textbf{Connection to the method of Parambath et al. (2014) \cite{Parambath+14}}]
For certain classes of performance metrics, the brute-force method in Algorithm \ref{algo:brute-force-plug-in-non-decomposable} can be made more efficient by considering in the maximization step, only those gain matrices that are obtained from gradients of $\psi$ at feasible confusion matrices in $\CC_D$. This is beneficial for example, in the case of fractional-linear performance metrics such as the binary and micro F-measure, where any gradient obtained from a feasible confusion matrix can be parametrized using a single scalar. The method of Parambath et al. (2014), which makes use of this fact, can be seen as a special case of Algorithm \ref{algo:brute-force-plug-in-non-decomposable}.
\end{rem}

\section{A Consistent and Efficient Algorithm for Multi-class Non-decomposable Metrics Based on the Conditional Gradient Method}
\label{sec:frank-wolfe}

While the (brute-force) plug-in method analyzed in the previous section is consistent for any non-decomposable metric for which the optimal classifier is of a certain desired form, the number of parameters that need to be tuned in this method grows with the number of classes $n$; in particular, the number of evaluations of the performance metric required in this method could be exponential in $n$. In this section, we provide an alternate efficient learning algorithm based on the conditional gradient (CG) optimization method and show that this algorithm is consistent for a large family of concave performance metrics. Also, unlike the brute-force plug-in, the CG based method makes no assumption on the form of the optimal classifier and hence on the underlying distribution.

More specifically, we pose the problem of learning a classifier for a non-decomposable metric as a constrained optimization problem over the space of feasible confusion matrices, and explore the use of optimization methods for solving this problem. However, unlike a standard optimization problem where the constraint is explicitly specified, in the problem that we consider, testing feasibility of a confusion matrix is not tractable in general; this precludes the use of standard gradient descent based constrained optimization solvers for this problem. Instead, we make use of the conditional gradient (CG) method which does not require the constraint set to be explicitly specified, and instead only requires access to a linear optimization oracle over the constraint set \cite{FrankWolfe56}. In particular, this method proceeds via a sequence of linear optimization steps, each of which is equivalent to maximization of a decomposable performance metric and thus can be solved efficiently.

\begin{figure*}[t]
\begin{algorithm}[H]
\caption{
Idealized Conditional Gradient Algorithm for Multi-class Non-decomposable Performance Metric.}
\label{algo:FW-idealized}
\begin{algorithmic}
\STATE \textbf{Input:} $D$, $\psi: [0, 1]^{n\times n} \> \R_+$
\STATE \textbf{Parameters:} $\kappa \in \N$, $\epsilon > 0$
\STATE Choose an initial classifier $\h^0:\X\>\Delta_n$
\STATE $T = \kappa m$
\STATE \textbf{for} $j =  1$ \textbf{to} $T$ \textbf{do}
\STATE ~~~~~~~~~~$\G^{j}=\nabla\psi(\conf(\h^{j-1}, D))$
\vspace{3pt}
\STATE ~~~~~~~~~~\textbf{Approximate Linear Maximization:}
\STATE ~~~~~~~~~~ ~~~~~~Choose $\u^j:\X\>\Delta_n$ such that $\langle \G^j , \conf(\u^j,D) \rangle \,\geq\, \underset{\u:\,\X \> \Delta_n}{\max}\,\langle \G^j , \conf(\u,D) \rangle -\epsilon$
\vspace{1pt}
\STATE ~~~~~~~~~~Construct ${\h}^{j}:\X\>\Delta_n$ such that ${\h}^{j}(x) = \big(1-\frac{2}{j+1}\big){\h}^{j-1}(x) + \frac{2}{j+1} {\u}^j(x), \,\forall x \in \X$
\STATE \textbf{end for}
\STATE \textbf{Output:} $\h^\FW \equiv \h^T$
\end{algorithmic}
\end{algorithm}
\vspace{-0.5cm}
\end{figure*}

We first present an idealized version of the above CG based learning algorithm, where we assume access to the underlying distribution $D$ (see Algorithm \ref{algo:FW-idealized}). Each iteration of this algorithm maintains a classifier $\h^j$ and (approximately) maximizes a decomposable performance metric given by the gradient of $\psi$ at the confusion matrix for $\h^j$. For a concave and smooth $\psi$, one can derive, by extending the standard CG analysis, the following regret bound guarantee for this algorithm \cite{Jaggi13}; we shall later see how this guarantee can be extended to non-smooth performance metrics.

\begin{thm}[\textbf{Regret Bound for (Idealized) Conditional Gradient Algorithm for Concave Smooth Non-decomposable Metrics}]
\label{thm:regret-bound-fw-idealized}
Let $\psi: [0, 1]^{n\times n} \> \R_+$ be concave over $\CC_D$ and $\beta$-smooth w.r.t. the $\ell_1$-norm  over $\CC_D$ . Let ${\h}^\FW$ be the classifier learned by Algorithm \ref{algo:FW-idealized} with parameters $\kappa \in \N$ and $\epsilon > 0$. Then
\[
\reg_D^\psi[{\h}^\FW] \,\,\leq\,\,2\epsilon \,+\, \frac{8\beta}{\kappa m+2}.
\]
\end{thm}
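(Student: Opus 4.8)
The plan is to view Algorithm~\ref{algo:FW-idealized} as an instance of the conditional gradient (Frank--Wolfe) method applied to the problem $\max_{\C \in \CC_D} \psi(\C)$, and to adapt the standard convergence analysis of the CG method (as in \cite{Jaggi13}) to account for the fact that the linear optimization oracle is solved only up to an additive error $\epsilon$. First I would set up notation: let $\C^j = \conf(\h^j, D)$ be the confusion matrix maintained by the algorithm at iteration $j$, so that the update on classifiers translates into the update $\C^j = \big(1 - \frac{2}{j+1}\big)\C^{j-1} + \frac{2}{j+1}\, \conf(\u^j, D)$ on confusion matrices (using linearity of $\C \mapsto \conf(\cdot, D)$ in the randomized classifier), and $\C^j \in \CC_D$ by convexity (Proposition~\ref{prop:convexity-cc}). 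Write $\V^j = \conf(\u^j, D)$; the approximate linear maximization step guarantees $\langle \G^j, \V^j \rangle \geq \max_{\C \in \CC_D} \langle \G^j, \C \rangle - \epsilon$, where $\G^j = \nabla\psi(\C^{j-1})$.

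Next I would run the descent-lemma-style estimate. Let $\C^* \in \CC_D$ be the optimal confusion matrix (which exists by compactness of $\overline{\CC}_D$ and the fact that $\CC_D$ is closed under the relevant hypotheses, or one may simply work with $\overline{\CC}_D$ which is compact and over which $\psi$ is continuous). Set $\gamma_j = \frac{2}{j+1}$ and $h_j = \psi(\C^*) - \psi(\C^{j-1})$ the suboptimality gap. Since $\psi$ is concave and $\beta$-smooth w.r.t.\ the $\ell_1$ norm, the standard quadratic lower bound on a concave smooth function gives
\[
\psi(\C^j) \;\geq\; \psi(\C^{j-1}) + \gamma_j \langle \G^j, \V^j - \C^{j-1} \rangle \;-\; \frac{\beta}{2}\gamma_j^2 \|\V^j - \C^{j-1}\|_1^2.
\]
Using the oracle guarantee, $\langle \G^j, \V^j \rangle \geq \langle \G^j, \C^* \rangle - \epsilon$, and concavity, $\langle \G^j, \C^* - \C^{j-1}\rangle \geq \psi(\C^*) - \psi(\C^{j-1}) = h_j$, we get $\langle \G^j, \V^j - \C^{j-1}\rangle \geq h_j - \epsilon$. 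Bounding the curvature term by noting $\V^j, \C^{j-1} \in \CC_D \subseteq [0,1]^{n\times n}$ so $\|\V^j - \C^{j-1}\|_1 \leq$ some constant (in fact, since each confusion matrix has entries summing to $1$, $\|\V^j - \C^{j-1}\|_1 \leq 2$, giving the diameter-squared factor of $4$), we obtain the recursion
\[
h_{j+1} \;\leq\; (1 - \gamma_j)\, h_j \;+\; \gamma_j \epsilon \;+\; \frac{\beta}{2}\gamma_j^2 \cdot 4 \;=\; (1-\gamma_j) h_j + \gamma_j \epsilon + 2\beta\gamma_j^2.
\]

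Finally I would solve this recursion with $\gamma_j = \frac{2}{j+1}$ by an induction argument. The $\gamma_j \epsilon$ terms, since $\sum \gamma_j \prod(1-\gamma_k) \leq 2$ in the appropriate telescoped sense (or more directly, the convex-combination structure means the accumulated $\epsilon$-error never exceeds $2\epsilon$ — this is the step where one checks that the weights $\gamma_j \prod_{k>j}(1-\gamma_k)$ sum to a constant at most $2$), contribute at most $2\epsilon$ to $h_{T+1}$; the $2\beta\gamma_j^2$ terms contribute the standard $O(\beta/T)$ term, and a careful induction (the classical one shows $h_j \leq \frac{2C_f}{j+1}$ for exact CG with curvature constant $C_f$) yields $h_{T+1} \leq 2\epsilon + \frac{8\beta}{T+2}$. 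Substituting $T = \kappa m$ gives the claimed bound $\reg_D^\psi[\h^\FW] \leq 2\epsilon + \frac{8\beta}{\kappa m + 2}$ since $\reg_D^\psi[\h^\FW] = \psi(\C^*) - \psi(\C^T) = h_{T+1}$.

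The main obstacle is the bookkeeping in the last step: propagating the additive oracle error $\epsilon$ through the recursion without it accumulating faster than a constant, and getting the exact constants ($2\epsilon$ rather than, say, $\epsilon\log T$, and $8\beta$ with the $+2$) requires the right choice of inductive hypothesis — one typically proves $h_{j+1} \leq 2\epsilon + \frac{8\beta}{j+2}$ directly by induction, verifying the base case and the inductive step using $1 - \gamma_j = \frac{j-1}{j+1}$ and the elementary inequality $\frac{j-1}{j+1}\cdot\frac{1}{j+1} \leq \frac{1}{j+2}$-type bounds. A secondary (but minor) point to handle cleanly is the translation between the classifier-space updates in Algorithm~\ref{algo:FW-idealized} and the confusion-matrix-space updates, which relies on the linearity of $\h \mapsto \conf(\h, D)$ in the randomized-classifier representation — this should be remarked on explicitly but is routine.
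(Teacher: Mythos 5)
Your proposal is correct and follows essentially the same route as the paper: the paper also treats Algorithm~\ref{algo:FW-idealized} as an approximate conditional gradient method on $\CC_D$, bounds the curvature constant by $4\beta$ using $\beta$-smoothness and $\|\C_1-\C_2\|_1\le 2$, and accounts for the additive oracle error $\epsilon$ (contributing at most $2\epsilon$), the only difference being that the paper plugs these constants into Theorem~1 of \cite{Jaggi13} (via the approximation quality $\delta_\apx \le (T+1)\epsilon/C_\psi$) rather than unrolling the recursion $h_{j+1}\le(1-\gamma_j)h_j+\gamma_j\epsilon+2\beta\gamma_j^2$ by induction as you do. Your induction with hypothesis $h_{j+1}\le 2\epsilon+\frac{8\beta}{j+2}$ indeed closes, so the two arguments yield the identical bound.
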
 

Since in practice, one does not have access to $D$, we consider a sample-based version of Algorithm \ref{algo:FW-idealized}, where in each iteration, the gradient for the current classifier is computed using a sample-based estimate of the confusion matrix of the classifier and the solution to the linear maximization step is a plug-in classifier obtained from a suitable class probability estimation model (see Algorithm \ref{algo:BFW}); we shall refer to this method as the `BayesCG' algorithm. Clearly, this algorithm runs in time polynomial in the number of classes $n$ and number of training example $m$.

It is important to note that the BayesCG algorithm essentially mimics the earlier idealized algorithm, with the approximation factor $\epsilon$ in the linear maximization step now depending on the input training sample. Using this observation and the above regret bound guarantee for the idealized algorithm, we now show that the BayesCG algorithm is consistent for any concave smooth performance metric.

\begin{figure*}[t]
\begin{algorithm}[H]
\caption{
\textbf{BayesCG}: Sample-based Conditional Gradient Algorithm for Multi-class Non-decomposable Performance Metric.}
\label{algo:BFW}
\begin{algorithmic}
\STATE \textbf{Input:} $S=((x_1,y_1), \ldots, (x_m, y_m)) \in (\X\times [n])^m$, $\psi: [0, 1]^{n\times n} \> \R_+$
\STATE \textbf{Parameters:} $\alpha\in(0,1)$, $\kappa \in \N$
\STATE Split $S$ into two sets $S'$ and $S''$ with sizes $m_1 = \lfloor (1-\alpha) m \rfloor$ and $m_2 = \lceil \alpha m \rceil$. \\
\STATE Learn $\hat{\boldeta}_{S'}=\CPE(S')$, where $\CPE: \cup_{m=1}^\infty (\X \times [n])^m \> \Delta_n^\X$ is a suitable CPE algorithm 
\STATE Choose an initial classifier $\widehat{\h}^0:\X\>\Delta_n$
\STATE $T = \kappa m$
\STATE \textbf{for} $j =  1$ \textbf{to} $T$ \textbf{do}
\STATE ~~~~~~~~~~$\widehat{\G}^{j}=\nabla\psi(\conf(\widehat{\h}^{j-1},D_{S''}))$
\vspace{3pt}
\STATE ~~~~~~~~~~\textbf{Approximate Linear Maximization:}
\STATE ~~~~~~~~~~ ~~~~~~Construct $\widehat{\u}^j:\X\>\Delta_n$ such that
	$\widehat{u}^j_i(x) = 1$\, if \,$i = \overline{\argmax}_{y\in [n]}\,\,\widehat{\g}_y^{j\top} \hat{\boldeta}_{S'}(x), \,\forall x \in \X$
\STATE ~~~~~~~~~~Construct $\widehat{\h}^{j}:\X\>\Delta_n$ such that $\widehat{\h}^{j}(x) = \big(1-\frac{2}{j+1}\big)\widehat{\h}^{j-1}(x) + \frac{2}{j+1} \widehat{\u}^j(x), \,\forall x \in \X$ 
\STATE \textbf{end for}
\STATE \textbf{Output:} $\widehat{\h}^\FW_{S} \equiv \widehat{\h}^{T}$
\end{algorithmic}
\end{algorithm}
\vspace{-0.6cm}
\end{figure*}

\begin{thm}[\textbf{Consistency of Sample-based Conditional Gradient Algorithm for Concave Smooth Non-decomposable Metrics}]
\label{thm:BFW-consistency}
Let $S = (S', S'') \in (\X \times [n])^m$ be the given training sample drawn i.i.d. from distribution $D$. Let $\psi: [0, 1]^{n\times n} \> \R_+$ be concave over $\CC_D$, $L$-Lipschitz over $\CC_{D_{S''}}$ and $\beta$-smooth, both w.r.t. the $\ell_1$ norm. Let $\widehat{\h}^\FW_S$ be the classifier learned by Algorithm \ref{algo:BFW} using training sample $S$ with parameters $\alpha \in (0,1)$ and  $\kappa \in \N$. Then for any $\delta \in [0,1]$, we have with probability at least $1-\delta$ (over draw of $S$ from $D^m$), 
\[
\reg_D^\psi[\widehat{\h}^\FW_{S}] \,\,\leq\,\, 4L\E_X\big[\big\|\widehat{\boldeta}_{S'}(X) \,-\, \boldeta(X)\big\|_1\big] \,+\, 4\beta n^2C\sqrt{\frac{n^2\log(n)\log(\alpha m) + \log(n^2/\delta)}{\alpha m}} \,+\, \frac{8\beta}{\kappa m+2},
\]
where $C > 0$ is a distribution-independent constant. Thus, if the $\CPE$ algorithm used in Algorithm \ref{algo:BFW} is such that $\E_X \big[ \big\|\widehat{\boldeta}_{S'}(X) - \boldeta(X)\big\|_1\big]  \xrightarrow{P} 0$, then $\reg_D^\psi[\widehat{\h}_{S}] \,\xrightarrow{P} \, 0$ (as $m \rightarrow \infty$). 
\end{thm}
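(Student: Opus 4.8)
The plan is to show that the sample-based algorithm (Algorithm~\ref{algo:BFW}) is a faithful instantiation of the idealized conditional gradient algorithm (Algorithm~\ref{algo:FW-idealized}) with an approximation factor $\epsilon$ in the linear maximization step that can be controlled in terms of (i) the CPE error $\E_X[\|\widehat{\boldeta}_{S'}(X) - \boldeta(X)\|_1]$ and (ii) a uniform deviation between the empirical confusion matrix $\conf(\cdot, D_{S''})$ and the population confusion matrix $\conf(\cdot, D)$ over the relevant classifier family. First I would fix the sample $S = (S', S'')$ and condition on $S'$, so that $\widehat{\boldeta}_{S'}$ is deterministic and each classifier $\widehat{\u}^j$ produced in iteration $j$ lies in the weighted-argmax family $\H_{\widehat{\boldeta}_{S'}}$. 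All iterates $\widehat{\h}^j$ are convex combinations of such classifiers, hence their confusion matrices lie in the convex hull of $\{\conf(\h, D) : \h \in \H_{\widehat{\boldeta}_{S'}}\}$; by Lemma~\ref{lem:uconvg-conf} and a union-bound-free argument (the bound already being uniform over $\H_{\widehat{\boldeta}_{S'}}$, and convex combinations only shrink the $\ell_\infty$ deviation), with probability at least $1-\delta$ over $S''$ we have $\|\conf(\widehat{\h}^j, D) - \conf(\widehat{\h}^j, D_{S''})\|_\infty \le \Delta_m$ simultaneously for all $j$, where $\Delta_m = C\sqrt{(n^2\log n \log(\alpha m) + \log(n^2/\delta))/(\alpha m)}$.

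Next I would quantify how well the step $\widehat{\u}^j$ solves the population linear maximization problem $\max_{\u}\langle \G^j, \conf(\u, D)\rangle$ where $\G^j = \nabla\psi(\conf(\widehat{\h}^{j-1}, D))$ is the gradient the \emph{idealized} algorithm would use. There are two sources of slack. The algorithm actually uses $\widehat{\G}^j = \nabla\psi(\conf(\widehat{\h}^{j-1}, D_{S''}))$ rather than $\G^j$; by $\beta$-smoothness of $\psi$, $\|\widehat{\G}^j - \G^j\|_\infty \le \beta\|\conf(\widehat{\h}^{j-1}, D_{S''}) - \conf(\widehat{\h}^{j-1}, D)\|_1 \le \beta n^2 \Delta_m$, and since any confusion matrix has $\ell_1$ norm equal to $1$, swapping $\widehat{\G}^j$ for $\G^j$ in any linear objective over confusion matrices costs at most $2\beta n^2\Delta_m$. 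Given the gain matrix $\widehat{\G}^j$, the classifier $\widehat{\u}^j$ is the plug-in/weighted-argmax classifier built from $\widehat{\boldeta}_{S'}$, so Lemma~\ref{lem:regret-bound-linear-psi-est-cond-prob} gives $\max_{\u}\langle\widehat{\G}^j, \conf(\u,D)\rangle - \langle\widehat{\G}^j, \conf(\widehat{\u}^j, D)\rangle \le 2L\,\E_X[\|\widehat{\boldeta}_{S'}(X) - \boldeta(X)\|_1]$ (using that $\widehat{\G}^j$ has entries bounded by $L$ since $\psi$ is $L$-Lipschitz, so its gradient has $\ell_\infty$ norm at most $L$). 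Combining, $\langle\G^j, \conf(\widehat{\u}^j, D)\rangle \ge \max_{\u}\langle\G^j, \conf(\u, D)\rangle - \epsilon_m$ with $\epsilon_m = 2L\,\E_X[\|\widehat{\boldeta}_{S'}(X) - \boldeta(X)\|_1] + 4\beta n^2\Delta_m$. Thus the sample-based iterates coincide exactly with a valid run of Algorithm~\ref{algo:FW-idealized} on distribution $D$ with approximation parameter $\epsilon = \epsilon_m$ and the same $T = \kappa m$. Applying Theorem~\ref{thm:regret-bound-fw-idealized} yields $\reg_D^\psi[\widehat{\h}^\FW_S] \le 2\epsilon_m + 8\beta/(\kappa m + 2)$, which is exactly the claimed bound; the convergence-in-probability statement then follows since $\E_X[\|\widehat{\boldeta}_{S'} - \boldeta\|_1] \xrightarrow{P} 0$ by hypothesis and $\Delta_m \to 0$ and $8\beta/(\kappa m + 2) \to 0$ deterministically, so for any $\epsilon, \delta'$ one can choose $m$ large enough that the right-hand side is below $\epsilon$ with probability at least $1-\delta'$.

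The main obstacle I expect is making the uniform-convergence step genuinely simultaneous over all $T = \kappa m$ iterates without incurring a union bound that grows with $m$. The cleanest route is to observe that Lemma~\ref{lem:uconvg-conf} already controls $\sup_{\h\in\H_{\widehat{\boldeta}_{S'}}}\|\conf(\h, D) - \conf(\h, D_{S''})\|_\infty$ with a \emph{single} event, and that each $\conf(\widehat{\h}^j, \cdot)$, as a convex combination $\conf(\widehat{\h}^j, \cdot) = \sum_k \lambda_k \conf(\u^{(k)}, \cdot)$ of confusion matrices of members of $\H_{\widehat{\boldeta}_{S'}}$ (the $\lambda_k$ being the CG weights, which are identical under $D$ and $D_{S''}$ since the mixing coefficients $2/(j+1)$ are data-independent), satisfies $\|\conf(\widehat{\h}^j, D) - \conf(\widehat{\h}^j, D_{S''})\|_\infty \le \sum_k\lambda_k\|\conf(\u^{(k)}, D) - \conf(\u^{(k)}, D_{S''})\|_\infty$, which is bounded by the same uniform quantity. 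Hence no $m$-dependent union bound is needed. A secondary technical point is handling the closure: $\conf(\widehat{\h}^j, D) \in \CC_D$ automatically since $\widehat{\h}^j$ is an explicit randomized classifier, so the smoothness and Lipschitz bounds, stated over $\CC_D$ and $\CC_{D_{S''}}$, apply directly to all the matrices that arise, and one should just note that $\conf(\widehat{\h}^{j-1}, D_{S''}) \in \CC_{D_{S''}}$ so that $\nabla\psi$ is well-defined there.
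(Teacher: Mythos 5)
Your proposal is correct and takes essentially the same route as the paper: you establish an approximate linear-maximization guarantee for the plug-in step (exactly the paper's Lemma~\ref{lem:approx-max-oracle}, combining Lemma~\ref{lem:regret-bound-linear-psi-est-cond-prob}, $\beta$-smoothness to control $\|\widehat{\G}^j-\G^j\|_\infty$, and the single uniform-convergence event of Lemma~\ref{lem:uconvg-conf} over $\H_{\widehat{\boldeta}_{S'}}$ together with the convex-combination structure of the iterates, so no $m$-dependent union bound is needed) and then invoke the idealized CG regret bound of Theorem~\ref{thm:regret-bound-fw-idealized}. The only blemish is a factor-of-two bookkeeping slip: with $\|\widehat{\G}^j-\G^j\|_\infty\le\beta n^2\Delta_m$ and confusion matrices of unit $\ell_1$ norm, the two swaps cost $2\beta n^2\Delta_m$ in total, giving $\epsilon_m = 2L\E_X[\|\widehat{\boldeta}_{S'}(X)-\boldeta(X)\|_1]+2\beta n^2\Delta_m$ as in the paper rather than your $4\beta n^2\Delta_m$, so your final bound carries $8\beta n^2\Delta_m$ instead of the stated $4\beta n^2 C\sqrt{\cdot}$ --- harmless for the theorem since $C$ is an unspecified universal constant, but not ``exactly the claimed bound.''
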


\noindent A key element of the proof of the above theorem is in showing that the BayesCG algorithm solves the CG linear maximization step approximately; this makes use of Lemma \ref{lem:regret-bound-linear-psi-est-cond-prob} and Lemma \ref{lem:uconvg-conf} in the previous section (along with the smoothness assumption on $\psi$).

\begin{lem}[\textbf{Approximation Factor for Linear Maximization Step in Algorithm \ref{algo:BFW}}]
\label{lem:approx-max-oracle}
Let $\psi: [0, 1]^{n\times n} \> \R_+$ satisfy the assumptions in Theorem \ref{thm:BFW-consistency}. Let $\widehat{\u}^j$ and $\widehat{\h}^{j}$ be the classifiers constructed in any given iteration $j$ of Algorithm \ref{algo:BFW} using training sample $S = (S', S'') \in (\X \times [n])^m$ and parameter $\alpha \in (0,1)$. Also, let $\G^j = \nabla \psi(\conf(\widehat{\h}^{j-1}, D))$. Then for any $\delta \in [0,1]$, we have with probability at least $1 - \delta$ (over draw of $S$ from $D^m$) for all $1\leq j\leq T$
\[
\langle {\G}^j , \conf(\widehat{\u}^j,D) \rangle \,\geq\, \underset{\u:\,\X \> \Delta_n}{\max}\,\langle {\G}^j , \conf(\u,D) \rangle \,-\, \widehat{\epsilon}_S,
\]
where
\[
\widehat{\epsilon}_S \,=\, 2L\E_X\big[\big\|\widehat{\boldeta}_{S'}(X) \,-\, \boldeta(X)\big\|_1\big] \,+\, 2\beta n^2C\sqrt{\frac{n^2\log(n)\log(\alpha m) + \log(n^2/\delta)}{\alpha m}},
\]
for a distribution-independent constant $C>0$.
\end{lem}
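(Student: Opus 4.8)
The plan is to show that the classifier $\widehat{\u}^j$ produced in iteration $j$ of Algorithm~\ref{algo:BFW} is an approximate maximizer of the \emph{true} linear objective $\langle \G^j, \conf(\cdot,D)\rangle$, despite being built (i) from the \emph{empirical} gain matrix $\widehat{\G}^j = \nabla\psi(\conf(\widehat{\h}^{j-1},D_{S''}))$ rather than $\G^j = \nabla\psi(\conf(\widehat{\h}^{j-1},D))$, and (ii) as a plug-in/weighted-argmax classifier on the estimate $\widehat{\boldeta}_{S'}$ rather than on $\boldeta$. These two error sources will produce, respectively, the second and first terms of $\widehat{\epsilon}_S$.

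First I would fix a maximizer $\u^\star$ of $\langle \G^j, \conf(\cdot,D)\rangle$ and telescope:
\[
\langle \G^j, \conf(\u^\star,D)\rangle - \langle \G^j, \conf(\widehat{\u}^j,D)\rangle = \langle \G^j-\widehat{\G}^j, \conf(\u^\star,D)\rangle + \langle \widehat{\G}^j, \conf(\u^\star,D)-\conf(\widehat{\u}^j,D)\rangle + \langle \widehat{\G}^j-\G^j, \conf(\widehat{\u}^j,D)\rangle.
\]
Since every confusion matrix has nonnegative entries summing to $1$, the first and third inner products are at most $\|\G^j-\widehat{\G}^j\|_\infty$ in absolute value. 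For the middle term I would use $\langle \widehat{\G}^j, \conf(\u^\star,D)\rangle \le \max_{\u}\langle \widehat{\G}^j,\conf(\u,D)\rangle$ and then apply \Lem{lem:regret-bound-linear-psi-est-cond-prob} with gain matrix $\widehat{\G}^j$: because $\psi$ is $L$-Lipschitz w.r.t.\ $\ell_1$ over $\CC_{D_{S''}}$ we have $\|\nabla\psi(\C)\|_\infty \le L$ for all $\C\in\CC_{D_{S''}}$, hence $\widehat{\G}^j\in[-L,L]^{n\times n}$ since $\conf(\widehat{\h}^{j-1},D_{S''})\in\CC_{D_{S''}}$, and $\widehat{\u}^j$ is exactly the weighted-argmax classifier for $\widehat{\G}^j$ on $\widehat{\boldeta}_{S'}$; this bounds the middle term by $2L\,\E_X[\|\widehat{\boldeta}_{S'}(X)-\boldeta(X)\|_1]$. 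Thus the whole quantity is at most $2\|\G^j-\widehat{\G}^j\|_\infty + 2L\,\E_X[\|\widehat{\boldeta}_{S'}(X)-\boldeta(X)\|_1]$.

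Next I would control $\|\G^j-\widehat{\G}^j\|_\infty$ uniformly over $j$. By $\beta$-smoothness, $\|\G^j-\widehat{\G}^j\|_\infty \le \beta\|\conf(\widehat{\h}^{j-1},D)-\conf(\widehat{\h}^{j-1},D_{S''})\|_1 \le \beta n^2\|\conf(\widehat{\h}^{j-1},D)-\conf(\widehat{\h}^{j-1},D_{S''})\|_\infty$. The key observation is that the update rule sets $\widehat{\h}^1=\widehat{\u}^1$, so every iterate $\widehat{\h}^{j-1}$ is a convex combination of weighted-argmax classifiers in $\H_{\widehat{\boldeta}_{S'}}$ (taking $\widehat{\h}^0$ itself to be such a classifier). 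Since $\conf(\h,\cdot)$ is linear in the randomized classifier $\h$, so is $\conf(\cdot,D)-\conf(\cdot,D_{S''})$, and the triangle inequality shows $\sup_{\h}\|\conf(\h,D)-\conf(\h,D_{S''})\|_\infty$ over the convex hull of $\H_{\widehat{\boldeta}_{S'}}$ equals the supremum over $\H_{\widehat{\boldeta}_{S'}}$ itself. Conditioning on $S'$ (so $\widehat{\boldeta}_{S'}$ is a fixed function and $S''$ is an i.i.d.\ sample of size $m_2=\lceil\alpha m\rceil$ independent of it), I would invoke \Lem{lem:uconvg-conf} with $\bmu=\widehat{\boldeta}_{S'}$: with probability at least $1-\delta$ this supremum is at most $C\sqrt{(n^2\log n\log m_2 + \log(n^2/\delta))/m_2}$, and this single high-probability event simultaneously bounds $\|\G^j-\widehat{\G}^j\|_\infty$ for all $j\in\{1,\dots,T\}$, so no union bound over iterations is needed. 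Combining the two steps and absorbing the harmless discrepancy between $m_2=\lceil\alpha m\rceil$ and $\alpha m$ into $C$ yields exactly $\widehat{\epsilon}_S$; since the bound holds for every $S'$, it holds unconditionally with probability at least $1-\delta$.

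The main obstacle will be the third paragraph: the matrix $\widehat{\G}^j$ depends on the entire random trajectory $\widehat{\h}^0,\dots,\widehat{\h}^{j-1}$, so a crude argument would require a union bound over iterations and over the (uncountably many) reachable confusion matrices. The two facts that avoid this are that each iterate $\widehat{\h}^{j-1}$ stays inside the fixed (given $S'$) convex hull of $\H_{\widehat{\boldeta}_{S'}}$, and that linearity of $\conf$ in the classifier transfers the uniform-convergence bound of \Lem{lem:uconvg-conf} verbatim to that convex hull. A smaller point worth stating carefully is the passage from $\ell_1$-Lipschitzness of $\psi$ to the entrywise bound $\|\nabla\psi\|_\infty\le L$, which is what licenses applying \Lem{lem:regret-bound-linear-psi-est-cond-prob} to the random gain matrix $\widehat{\G}^j$.
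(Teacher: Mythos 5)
Your proposal is correct and follows essentially the same route as the paper's proof: the same three-term telescoping between $\G^j$ and $\widehat{\G}^j$ bounded via H\"older's inequality, \Lem{lem:regret-bound-linear-psi-est-cond-prob} applied to the empirical gain matrix $\widehat{\G}^j$ for the middle term, $\beta$-smoothness plus the fact that $\widehat{\h}^{j-1}$ is a convex combination of the plug-in classifiers $\widehat{\u}^k \in \H_{\widehat{\boldeta}_{S'}}$, and a single application of \Lem{lem:uconvg-conf} (conditioned on $S'$) that covers all iterations simultaneously. Your explicit checks that $\widehat{\G}^j \in [-L,L]^{n\times n}$ via the Lipschitz assumption over $\CC_{D_{S''}}$ and that the initial classifier $\widehat{\h}^0$ can be taken in $\H_{\widehat{\boldeta}_{S'}}$ are points the paper leaves implicit, but they do not change the argument.
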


\begin{proof}[Proof of Theorem \ref{thm:BFW-consistency}] 
 The proof follows from Lemma \ref{lem:approx-max-oracle} and the regret bound in \Thm{thm:regret-bound-fw-idealized}.
\end{proof}

While the consistency result in Theorem \ref{thm:BFW-consistency} applies only to smooth performance metrics, for non-smooth performance metric such as the G-mean metric and several others in Table \ref{tab:perf-metrics}, one can apply Algorithm \ref{algo:BFW} to a suitable smoothed version of the metric (indicated below by $\psi_\rho: [0, 1]^{n\times n} \> \R_+$ for some $\rho \in (0,1)$, with $\lim_{\rho \> 0} \psi_\rho = \psi$), and obtain the following regret bound for the original performance metric.

\begin{thm}[\textbf{Regret Bound for Sample-based Conditional Gradient Algorithm for a Larger Family of Non-decomposable Metrics}]
\label{thm:BFW-consistency-non-smooth}
Let $S = (S', S'') \in (\X \times [n])^m$ be the given training sample drawn i.i.d. from distribution $D$. Let $\psi: [0, 1]^{n\times n} \> \R_+$ be such that for any $\rho \in (0,1)$, there exists $\psi_\rho: [0, 1]^{n\times n} \> \R_+$ which is concave over $\CC_D$, $L_\rho$-Lipschitz w.r.t. the $\ell_1$ norm over $\CC_{D_{S''}}$ and $\beta_\rho$-smooth w.r.t. the $\ell_1$-norm, with 
$$\sup_{\C\,\in\,\CC_D} |\psi(\C) - \psi_\rho(\C)| \leq \theta(\rho),$$ for some strictly increasing function $\theta: \R_+ \> \R_+$. Let $\widehat{\h}^{\FW,\rho}_S$ be the classifier learned by Algorithm \ref{algo:BFW} when applied to $\psi_\rho$ with training sample $S$ and parameters $\kappa \in \N$ and $\alpha \in (0,1)$. Then for any $\delta \in [0,1]$, we have with probability at least $1-\delta$ (over draw of $S$ from $D^m$)
\begin{eqnarray*}
\reg_D^\psi[\widehat{\h}^{\FW,\rho}_{S}] &\leq& 4L_\rho\E_X\big[\big\|\widehat{\boldeta}_{S'}(X) \,-\, \boldeta(X)\big\|_1\big] \,+\,  4\beta_\rho n^2C\sqrt{\frac{n^2\log(n)\log(\alpha m) + \log(n^2/\delta)}{\alpha m}}\\ 
&& \hspace{9.5cm}
+\, \frac{8\beta_\rho}{\kappa m+2} \,+\, 2\theta(\rho),
\end{eqnarray*}
where $C > 0$ is a distribution-independent constant. 
\end{thm}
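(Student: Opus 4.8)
The plan is to treat this as a black-box reduction: first invoke the smooth-metric guarantee of Theorem~\ref{thm:BFW-consistency} for the surrogate $\psi_\rho$, and then pay an additive price of $2\theta(\rho)$ for translating a regret bound for $\psi_\rho$ into one for $\psi$.

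First I would observe that $\widehat{\h}^{\FW,\rho}_S$ is by definition exactly the output of Algorithm~\ref{algo:BFW} run with input $(S,\psi_\rho)$ and parameters $\alpha,\kappa$, and that $\psi_\rho$ is concave over $\CC_D$, $L_\rho$-Lipschitz over $\CC_{D_{S''}}$, and $\beta_\rho$-smooth w.r.t.\ the $\ell_1$ norm. Hence Theorem~\ref{thm:BFW-consistency} applies verbatim with the substitutions $\psi\leftarrow\psi_\rho$, $L\leftarrow L_\rho$, $\beta\leftarrow\beta_\rho$, and the same CPE model $\widehat{\boldeta}_{S'}$, giving: with probability at least $1-\delta$ over the draw of $S$,
\[
\reg_D^{\psi_\rho}[\widehat{\h}^{\FW,\rho}_{S}] \,\leq\, 4L_\rho\E_X\big[\big\|\widehat{\boldeta}_{S'}(X) - \boldeta(X)\big\|_1\big] + 4\beta_\rho n^2 C\sqrt{\frac{n^2\log(n)\log(\alpha m) + \log(n^2/\delta)}{\alpha m}} + \frac{8\beta_\rho}{\kappa m+2}.
\]

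Next I would relate $\reg_D^{\psi}$ to $\reg_D^{\psi_\rho}$ via the uniform approximation hypothesis. Since $\psi$ and $\psi_\rho$ are continuous and $\overline{\CC}_D$ is compact (Proposition~\ref{prop:convexity-cc} and boundedness), the optimal values are attained on $\overline{\CC}_D$, i.e.\ $\sup_{\h}\psi(\conf(\h,D))=\max_{\C\in\overline{\CC}_D}\psi(\C)$ and likewise for $\psi_\rho$, and the bound $\sup_{\C\in\CC_D}|\psi(\C)-\psi_\rho(\C)|\leq\theta(\rho)$ extends by continuity to $\overline{\CC}_D$. Writing $\C_0=\conf(\widehat{\h}^{\FW,\rho}_S,D)\in\CC_D$, a two-sided application of this bound (once at the maximizer, once at $\C_0$) gives
\begin{align*}
\reg_D^{\psi}[\widehat{\h}^{\FW,\rho}_S]
&= \max_{\C\in\overline{\CC}_D}\psi(\C) - \psi(\C_0)
\;\leq\; \Big(\max_{\C\in\overline{\CC}_D}\psi_\rho(\C) + \theta(\rho)\Big) - \big(\psi_\rho(\C_0) - \theta(\rho)\big)\\
&= \reg_D^{\psi_\rho}[\widehat{\h}^{\FW,\rho}_S] + 2\theta(\rho).
\end{align*}
Combining the two displays yields the claimed regret bound.

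Since the real work is already packaged inside Theorem~\ref{thm:BFW-consistency} (and, through it, Lemma~\ref{lem:approx-max-oracle}, Lemma~\ref{lem:uconvg-conf}, and the conditional-gradient convergence analysis of Theorem~\ref{thm:regret-bound-fw-idealized}), there is no genuine obstacle here; the proof is a short reduction. The only points needing a line of care are (i) noting that the learned classifier's confusion matrix lies in $\CC_D$, so the $\theta(\rho)$-approximation is legitimately invoked at it, and (ii) extending the uniform $\theta(\rho)$-bound from $\CC_D$ to its closure, so that the optimal values of $\psi$ and $\psi_\rho$---which live on $\overline{\CC}_D$ and may not be attained on $\CC_D$ itself---can be compared; both follow immediately from continuity of $\psi,\psi_\rho$ and compactness of $\overline{\CC}_D$.
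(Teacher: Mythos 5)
Your proposal is correct and follows essentially the same route as the paper: invoke Theorem~\ref{thm:BFW-consistency} for the smoothed surrogate $\psi_\rho$, then convert the $\psi_\rho$-regret bound into a $\psi$-regret bound at an additive cost of $2\theta(\rho)$ by applying the uniform approximation once at the optimum and once at the learned classifier's confusion matrix. The only difference is cosmetic: the paper assumes for simplicity that a $\psi$-optimal classifier exists, whereas you handle possible non-attainment of the supremum by passing to $\overline{\CC}_D$ via continuity and compactness --- a slightly more careful treatment of the same step.
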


\subsection{Instantiation to specific concave multi-class performance metrics}
We now instantiate the regret bound in Theorem \ref{thm:BFW-consistency-non-smooth} to several performance metrics in \Tab{tab:perf-metrics} which happen to be concave but non-smooth. \Tab{tab:perf-metrics-smoothed} contains the smoothed version of these performance metrics, along with the resulting Lipschitz and smoothness constant. We then have the following consistency result for these metrics as a corollary of \Thm{thm:BFW-consistency-non-smooth}.

\begin{cor}[\textbf{Consistency of the Sample-based Conditional Gradient Algorithm for H-mean, Q-mean and G-mean}]
Let $S = (S', S'') \in (\X \times [n])^m$ be the given training sample drawn i.i.d. from distribution $D$. For each of H-mean, Q-mean and G-mean, let $\psi_\rho$ be chosen as prescribed in Table \ref{tab:perf-metrics-smoothed}. Let $\widehat{\h}^{\FW,\rho}_S$ be the classifier learned by Algorithm \ref{algo:BFW} when applied to $\psi_\rho$ with training sample $S$. If the $\CPE$ algorithm used in Algorithm \ref{algo:BFW} is such that $\E_X \big[ \big\|\widehat{\boldeta}_{S'}(X) - \boldeta(X)\big\|_1\big]  \xrightarrow{P} 0$, then for each of the above performance metrics, one can choose values of $\rho \> 0$ (as $m \rightarrow \infty$) so that $\reg_D^\psi[\widehat{\h}_{S}] \,\xrightarrow{P} \, 0$ (as $m \rightarrow \infty$).
\end{cor}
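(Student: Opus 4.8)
The plan is to deduce the corollary from \Thm{thm:BFW-consistency-non-smooth} by running Algorithm~\ref{algo:BFW} on the surrogates $\psi_\rho$ of \Tab{tab:perf-metrics-smoothed} with the smoothing level $\rho$ taken to be a slowly vanishing function of the sample size $m$. For each of H-mean, Q-mean and G-mean, \Tab{tab:perf-metrics-smoothed} supplies a family $\psi_\rho$ that is concave over $\CC_D$, $L_\rho$-Lipschitz and $\beta_\rho$-smooth w.r.t. the $\ell_1$ norm, with $\sup_{\C \in \CC_D}|\psi(\C) - \psi_\rho(\C)| \le \theta(\rho)$ and $\theta(\rho) \to 0$ as $\rho \to 0$; from the explicit table entries, $L_\rho$ and $\beta_\rho$ are non-decreasing as $\rho$ decreases and are bounded by fixed powers of $1/\rho$. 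Feeding these into \Thm{thm:BFW-consistency-non-smooth}, for every $\delta \in [0,1]$ we obtain, with probability at least $1-\delta$,
\[
\reg_D^\psi[\widehat{\h}^{\FW,\rho}_S] \;\le\; 4 L_\rho\, r_m \;+\; 4\beta_\rho n^2 C \sqrt{\frac{n^2 \log n\, \log(\alpha m) + \log(n^2/\delta)}{\alpha m}} \;+\; \frac{8\beta_\rho}{\kappa m + 2} \;+\; 2\theta(\rho),
\]
writing $r_m := \E_X[\|\widehat{\boldeta}_{S'}(X) - \boldeta(X)\|_1]$, which satisfies $r_m \xrightarrow{P} 0$ by hypothesis. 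It then suffices to exhibit a schedule $\rho = \rho(m) \to 0$ for which the right-hand side converges to $0$ in probability.

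The three terms carrying $\beta_\rho$ and $\theta(\rho)$ are deterministic once $\rho$ and $m$ are fixed and are easy to handle: since $\beta_\rho$ grows at most polynomially in $1/\rho$, any schedule with $\rho(m) \to 0$ decaying no faster than $m^{-c}$ for a sufficiently small $c > 0$ (depending only on the fixed exponents in \Tab{tab:perf-metrics-smoothed}) forces $\beta_{\rho(m)} = o(\sqrt{m/\log m})$, so, taking e.g. $\delta = \delta_m = 1/m$ (which only adds an $O(\log m)$ term inside the root), the second and third terms vanish, while $\theta(\rho(m)) \to 0$ by monotonicity of $\theta$. The subtle term is $4 L_\rho\, r_m$: the consistency hypothesis on the CPE routine gives \emph{no rate} for $r_m \xrightarrow{P} 0$, so no schedule fixed in advance can be guaranteed to make $L_{\rho(m)}\, r_m \to 0$ — this is the main obstacle. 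It is overcome by choosing the schedule adapted to the (unknown) rate: for any nonnegative random variables with $r_m \xrightarrow{P} 0$ there is a deterministic $a_m \to \infty$ with $a_m r_m \xrightarrow{P} 0$ — pick, for each $k$, an index $M_k$ (increasing in $k$) past which $\P(r_m > 1/(k 2^k)) < 1/k$, and set $a_m = k$ on $M_k \le m < M_{k+1}$; then $\P(a_m r_m > \epsilon) \le \P(r_m > 1/(k 2^k)) < 1/k \to 0$.

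Given such $a_m$, and using that $L_\rho$ is continuous and increases to $+\infty$ as $\rho \downarrow 0$, I would then pick $\rho(m)$ with $L_{\rho(m)} = \min\{a_m, m^{c}\}$ for the small constant $c$ above; since $\min\{a_m, m^c\} \to \infty$, this forces $\rho(m) \to 0$. Now $L_{\rho(m)}\, r_m \le a_m r_m \xrightarrow{P} 0$ kills the first term; $L_{\rho(m)} \le m^{c}$ together with $\beta_\rho$ being a fixed power of $1/\rho$ (hence of $L_\rho$) gives $\beta_{\rho(m)} = O(m^{c'})$ with $c'$ still small, so the second and third terms vanish; and $\theta(\rho(m)) \to 0$. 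A routine union bound then gives, for every $\epsilon > 0$ and all large $m$, $\P(\reg_D^\psi[\widehat{\h}^{\FW,\rho(m)}_S] > \epsilon) \le \delta_m + \P(4 L_{\rho(m)}\, r_m > \epsilon/2) \to 0$, i.e. $\reg_D^\psi[\widehat{\h}_S] \xrightarrow{P} 0$. The argument is identical for each of H-mean, Q-mean and G-mean; only the entries of \Tab{tab:perf-metrics-smoothed} — hence the specific powers appearing in $L_\rho$ and $\beta_\rho$, and the map $\theta$ — change.
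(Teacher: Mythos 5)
Your proposal is correct and follows the same route the paper intends: instantiate the regret bound of \Thm{thm:BFW-consistency-non-smooth} with the constants $L_\rho$, $\beta_\rho$, $\theta(\rho)$ from \Tab{tab:perf-metrics-smoothed} and let $\rho \to 0$ slowly enough with $m$ that all four terms vanish. The paper treats this as immediate and leaves the schedule implicit; your extra step of absorbing the rate-free CPE error via a deterministic $a_m \to \infty$ with $a_m r_m \xrightarrow{P} 0$ and then choosing $\rho(m)$ so that $L_{\rho(m)} = \min\{a_m, m^{c}\}$ correctly fills in the one genuine subtlety, at the unavoidable (and, given the corollary's purely existential phrasing, acceptable) cost of a distribution-dependent choice of $\rho(m)$.
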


\begin{table*}[t]
	\caption{Performance Metrics $\Pf_D[\h] = \psi(\text{conf}(\h, D) \equiv \C)$, for which $\psi: [0, 1]^{n\times n} \> \R_+$ is concave but non-smooth (see \Tab{tab:perf-metrics} for the form of $\psi$ for these metrics). A smoothed version of this function $\psi_\rho: [0, 1]^{n\times n} \> \R_+$ for any $\rho \in (0,1)$ is given in the second column; in each case, $\psi_\rho$ is also concave. The form of $\theta(\rho)$ (defined in \Thm{thm:BFW-consistency-non-smooth}), the Lipschitz constant $L_\rho$ and smoothness parameter $\beta_\rho$ for the smoothed function are given respectively in the third, fourth and fifth columns. Here, we denote $\pi_{\min} = \min_{y \in [n]} \pi_y$. 
Details of all calculations can be found in Appendix \ref{sec:smoothed-metrics}}
	\label{tab:perf-metrics-smoothed}
	\vspace{7pt}
	\centering
	\small
{\scriptsize
\begin{tabular}{@{}lllccc@{}}
		\hline
\vspace{-7pt}\\
		\textbf{metric} &  \multicolumn{1}{c}{$\psi_\rho(\C)$} & $\theta(\rho)$ & $L_\rho$ & $\beta_\rho$ \\
		\hline
		\vspace{-7pt}
		\\
		H-Mean (HM) 
		& ${n}\Big({\sum_{y=1}^n \frac{\sum_{\widehat{y} = 1}^n C_{y,\widehat{y}}\,+\,\rho}{C_{y,y} \,+\, \rho}}\Big)^{-1}$  & $\frac{n}{\pi_{\min}}\rho$ & $\frac{n}{\rho}$ & $\frac{2n}{\rho^2}$ 
\\[7pt]
		Q-Mean (QM) 
		& $1-\sqrt{\frac{1}{n}\sum_{y=1}^n\Big(\frac{\sum_{\widehat{y} \ne y} C_{y,\widehat{y}}\,+\,\rho}{\sum_{\widehat{y} = 1}^n C_{y,\widehat{y}}\,+\,\rho}\Big)^2}$ & $\frac{1}{\pi_{\min}\sqrt{n}}\rho$ & $\frac{1}{\sqrt{n}}\frac{1}{\rho}$ & $\frac{2}{\sqrt{n}}\frac{1}{\rho^2}\Big(1+\frac{1}{\rho}\Big)$
\\[7pt]		
		G-Mean (GM) 
		& $\Big(\prod_{y=1}^n
\Big(\frac{C_{y,y}\,+\,\rho}{\sum_{\widehat{y} = 1}^n C_{y,\widehat{y}}\,+\,\rho}\Big)\Big)^{1/n}$  & $2\rho^{1/n}$& $\frac{1}{n}\frac{1}{\rho}\Big(1+\frac{1}{\rho}\Big)^{1 - 1/n}$ & $\frac{1}{n^2}\frac{1}{\rho^3}\Big(1+\frac{1}{\rho}\Big)^{1 - 2/n}$  
\\[7pt]
		\hline
	\end{tabular} 
}	\vspace{-5pt}
\end{table*}



\begin{rem}[\textbf{BayesCG is consistent for non-continuous distributions}]
While the consistency guarantee for the brute-force plug-in method discussed in Section \ref{sec:plug-in-brute-force} makes crucial use of the form of the optimal classifier for the given performance metric, requiring a continuity assumption on the distribution (Assumption A), the BayesCG method requires no such assumption on the distribution. In particular, when a distribution does not satisfy Assumption A, a randomized classifier can yield a strictly higher performance value than the best deterministic classifier for the given performance metric (e.g., for the distribution described in Remark \ref{rem:necessity-ass-A}, the randomized classifier $\h^*$ yields a strictly higher H-mean value than all deterministic classifiers). Since the brute-force plug-in algorithm learns a deterministic classifier of a specific form, it fails to be consistent for such distributions. On the other hand, the final classifier returned by the BayesCG algorithm is a randomized classifier obtained from an ensemble deterministic classifiers, where the size of this ensemble grows with the number of training examples, thus enabling this method to handle a general distribution that does not satisfy Assumption A.
\end{rem}

\begin{rem}[\textbf{Extension to non-differentiable concave metrics}]
The consistency results that we have seen so far for the BayesCG algorithm have assumed that the given performance metric is concave and differentiable, and hence do not apply to the min-max metric in Table \ref{tab:perf-metrics}, which is (concave, but) not differentiable. It is indeed possible to derive a version of the BayesCG algorithm that is consistent for such continuous concave metrics, by working with a smooth differentiable approximation to these performance metrics \cite{Lan13}. The proof of consistency for the resulting learning algorithm is however slightly more involved, requiring us to deal with approximate gradients to these metrics \cite{Lan13}, and is reserved for a longer version of this paper. 
\end{rem}


\begin{rem}[\textbf{Extension to fractional-linear metrics}]
We would also like to point out that there is a variant of the CG method used in the BayesCG algorithm that can be applied to non-concave optimization objectives \cite{Bertsekas99}, but this method can get stuck in a stationary point that is not a globally optimal solution, and hence the resulting learning algorithm need not be consistent for a general non-concave performance metric. However, one can show (without an explicit regret bound) that this variant of the BayesCG algorithm is consistent for a special class of non-concave performance metrics that are fractional-linear, such as the binary F-measure, the JAC metric and the multi-class micro F-measure, where owing to the pseudo-linear structure of these performance metrics, all stationary points are globally optimal solutions \cite{CambiniMartein08}. 
\end{rem}

\section{Conclusion}
We provide a unified framework for analysing a general non-decomposable multi-class performance metric that cannot be expressed as a sum of losses on individual examples such as the multi-class F-measure and the multi-class G-mean metrics. Using this framework, we give a characterization of the optimal classifier for a general non-decomposable performance metric, subsuming several previous results on binary non-decomposable metrics. We then design a efficient learning algorithm based on the conditional gradient (CG) optimization method that is consistent for a large family of concave performance metrics. Our proof techniques are novel and involve application of tools from the optimization literature, particularly those used in the convergence analysis of the CG method.

~\\
\noindent \textbf{Acknowledgements.}
HGR thanks Ambuj Tewari and Clayton Scott for helpful discussions. HN thanks Prateek Jain and Purushottam Kar for helpful discussions. HGR is supported by a TCS PhD Fellowship. HN is supported by the Google India PhD Fellowship. SA thanks the Department of Science and Technology (DST) of the Government of India for a Ramanujan Fellowship, and the Indo-US Science and Technology Forum (IUSSTF) for their support.

\begin{small}
\bibliographystyle{unsrt}
\bibliography{multi-class-non-decomposable-performance-measures}
\end{small}

\appendix
\section{Proofs}

\subsection{Proof of Proposition \ref{prop:convexity-cc}}
\begin{prop*}[\textbf{Convexity of $\CC_D$}]
$\CC_D$ is a convex set.
\end{prop*}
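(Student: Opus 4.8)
The plan is to verify convexity directly from the definition by exhibiting, for any two feasible confusion matrices and any mixing weight, a randomized classifier achieving the corresponding convex combination. Specifically, fix $\C_1, \C_2 \in \CC_D$ and $\lambda \in [0,1]$. By definition there exist randomized classifiers $\h_1, \h_2 : \X \> \Delta_n$ with $\C_1 = \conf(\h_1, D)$ and $\C_2 = \conf(\h_2, D)$. I would define the ``mixture'' classifier $\h : \X \> \Delta_n$ by $\h(x) = \lambda\,\h_1(x) + (1-\lambda)\,\h_2(x)$ for all $x \in \X$, and first check that this is a well-defined randomized classifier: since $\Delta_n$ is convex and $\h_1(x), \h_2(x) \in \Delta_n$, the convex combination $\h(x)$ also lies in $\Delta_n$, so $\h$ indeed maps into the simplex.

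Next I would compute $\conf(\h, D)$ and show it equals $\lambda \C_1 + (1-\lambda)\C_2$. This follows from the fact that each entry of the confusion matrix of a randomized classifier, namely $[\conf(\h, D)]_{i,j} = \E_{(X,Y)\sim D}[h_j(X)\,\1(Y=i)]$, is a linear functional of the coordinate functions $h_j$. Concretely, for each $i,j$,
\[
[\conf(\h, D)]_{i,j} = \E\big[(\lambda [h_1]_j(X) + (1-\lambda)[h_2]_j(X))\,\1(Y=i)\big] = \lambda [\conf(\h_1,D)]_{i,j} + (1-\lambda)[\conf(\h_2,D)]_{i,j},
\]
using linearity of expectation. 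Hence $\conf(\h, D) = \lambda \C_1 + (1-\lambda)\C_2$, which exhibits $\lambda \C_1 + (1-\lambda)\C_2$ as the confusion matrix of a randomized classifier and therefore places it in $\CC_D$. Since $\C_1, \C_2, \lambda$ were arbitrary, $\CC_D$ is convex.

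There is no real obstacle here: the argument is essentially just ``convex combinations of randomized classifiers are randomized classifiers, and $\conf(\cdot, D)$ is affine (in fact linear) in the classifier.'' The only points worth stating carefully are (i) that $\h(x) \in \Delta_n$ for every $x$, which is immediate from convexity of $\Delta_n$, and (ii) that one should use the randomized-classifier definition of $\conf$ (the one involving $h_j(X)$), not the deterministic one, so that the linearity is manifest. I would keep the write-up to a few lines along exactly these lines.
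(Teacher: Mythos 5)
Your proposal is correct and is essentially identical to the paper's proof: the paper also takes the pointwise mixture classifier $\h^\lambda(x) = \lambda \h_1(x) + (1-\lambda)\h_2(x)$ and notes that $\conf(\h^\lambda, D) = \lambda \C_1 + (1-\lambda)\C_2$ by linearity of the confusion-matrix map. Your write-up simply makes the two verification steps (membership in $\Delta_n$ and linearity of expectation) explicit, which the paper leaves as immediate.
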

\begin{proof}
 Let $\C_1,\C_2\in\CC_D$. Let $\lambda\in[0,1]$. We show that $\lambda\C_1 + (1-\lambda)\C_2 \in \CC_D$. 
 
 By definition of $\CC_D$, there exists randomized classifiers $\h_1,\h_2:\X\>\Delta_n$ such that
 \begin{eqnarray*}
  \C_1 = \conf(\h_1,D) \\
  \C_2 = \conf(\h_2,D) \\
 \end{eqnarray*}
Consider the randomized classifier $h^\lambda:\X\>\Delta_n$ defined as
$$\h^\lambda(x)= \lambda \h_1(x) + (1-\lambda)\h_2(x)~.$$
It can be seen that
$$\conf(\h^\lambda,D) = \lambda \C_1 + (1-\lambda)\C_2 ~.$$
\end{proof}

\subsection{Proof of Lemma \ref{lem:opt-in-closure-same-as-opt}}
While Lemma \ref{lem:opt-in-closure-same-as-opt} is simple to state, its proof is rather intricate and hence we give its proof via several intermediate lemmas and propositions.

\begin{lem}[\textbf{Confusion matrix as an integration}]
\label{lem:conf-as-integration}
 Let $\f:\Delta_n\>\Delta_n$. Then
 $$\conf(\f\circ{\boldeta},D) = \int_{\p \in \Delta_n} \p (\f(\p))^\top d\nu(\p)\;.$$
\end{lem}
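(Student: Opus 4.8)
The plan is to verify the claimed matrix identity entry by entry and reduce it to the change-of-variables (pushforward) formula for the law $\nu$ of $\boldeta(X)$. Fix $i,j \in [n]$ and set $\h = \f \circ \boldeta$; this is a legitimate randomized classifier since $\boldeta$ is measurable (being a version of $x \mapsto \P(Y = \cdot \mid X = x)$) and $\f : \Delta_n \> \Delta_n$ is measurable, so $\h : \X \> \Delta_n$. By the definition of the confusion matrix of a randomized classifier,
\[
[\conf(\h, D)]_{i,j} = \E_{(X,Y)\sim D}\big[h_j(X)\,\1(Y = i)\big].
\]

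First I would condition on $X$. Since $h_j(X) = f_j(\boldeta(X))$ is $\sigma(X)$-measurable, the tower rule gives
\[
[\conf(\h, D)]_{i,j} = \E_X\big[f_j(\boldeta(X))\,\P(Y = i \mid X)\big] = \E_X\big[f_j(\boldeta(X))\,\eta_i(X)\big].
\]
The key observation is that the integrand depends on $X$ only through $\boldeta(X)$: writing $\p = \boldeta(X)$, so that $\eta_i(X) = p_i$, it equals $g(\boldeta(X))$ for the bounded measurable function $g : \Delta_n \> \R$ defined by $g(\p) = p_i\,f_j(\p)$.

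Then I would apply the pushforward formula: since $\nu$ is by definition the probability measure associated with the random variable $\boldeta(X)$, for any bounded measurable $g$ we have $\E_X[g(\boldeta(X))] = \int_{\Delta_n} g(\p)\,d\nu(\p)$. Applied to our $g$, this yields
\[
[\conf(\h, D)]_{i,j} = \int_{\p \in \Delta_n} p_i\,f_j(\p)\,d\nu(\p) = \Big[\int_{\p \in \Delta_n} \p\,(\f(\p))^\top\,d\nu(\p)\Big]_{i,j},
\]
the last equality being the elementary identity $[\p\,(\f(\p))^\top]_{i,j} = p_i f_j(\p)$. Since $i,j$ were arbitrary, the matrix identity follows, and the matrix-valued integral on the right-hand side is well-defined because each of its $n^2$ entries is an integral of a bounded measurable function against the probability measure $\nu$.

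There is essentially no hard step here: the only point worth stating explicitly, rather than the computation itself, is the measurability and boundedness of all integrands (the coordinates of $\f$ and of $\p$ lie in $[0,1]$, and $\boldeta$, $\f$ are measurable), which is exactly what licenses both the use of the tower property and the pushforward formula.
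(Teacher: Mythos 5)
Your proof is correct and follows essentially the same route as the paper: both reduce the $(i,j)$ entry of $\conf(\f\circ\boldeta,D)$ to $\E_{\p\sim\nu}[p_i f_j(\p)]$ via conditioning. The only (cosmetic) difference is that you condition on $X$ first and then invoke the pushforward of $\nu$, whereas the paper conditions directly on $\boldeta(X)=\p$; your ordering makes the measurability bookkeeping slightly more explicit but the argument is the same.
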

\begin{proof}
 \begin{eqnarray*}
  \big[\conf(\f\circ{\boldeta},D)\big]_{i,j} 
  &=& 
  \E_{(X,Y)\sim D} [ f_j(\boldeta(X)) \cdot \1(Y=i) ]  \\
  &=&
  \E_{\p\sim\nu} \E_{(X,Y)\sim D} \left[ f_j(\p) \cdot \1(Y=i) \big|  \boldeta(X)=\p \right]  \\
  &=&
  \E_{\p\sim\nu} \big[p_i f_j(\p)\big]
 \end{eqnarray*}

\end{proof}

\begin{prop}[\textbf{Sufficiency of conditional probability}]
\label{prop:equivalent-cond-prob-classifier}
Let $D$ be a distribution over $\X\times\Y$. For any randomized classifier $\h:\X\>\Delta_n$ there exists another randomized classifier $\h':\X\>\Delta_n$ such that $\conf(\h,D)=\conf(\h',D)$ and $\h'$ is such that 
$\h'(x) = \f({\boldeta}(x))$, for some $\f:\Delta_n\>\Delta_n$.
\end{prop}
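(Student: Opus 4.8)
The plan is to show that any randomized classifier can be replaced by one that depends on the instance $x$ only through the conditional probability vector $\boldeta(x)$, without changing the confusion matrix. The key observation is that the confusion matrix $\conf(\h, D)$ is an expectation of the form $\E_{(X,Y)\sim D}[h_j(X)\,\1(Y=i)]$, which, by the tower rule, depends on $\h$ only through the conditional expectation $\E[\h(X)\mid \boldeta(X) = \p]$ for each value $\p$ in the support of $\nu$. So the natural candidate is to define $\f: \Delta_n \> \Delta_n$ by $\f(\p) = \E_{X}[\h(X) \mid \boldeta(X) = \p]$ (a regular conditional expectation, which exists since the target space is a compact convex subset of $\R^n$), and then set $\h'(x) = \f(\boldeta(x))$.

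First I would verify that $\f$ as defined is indeed a valid randomized classifier, i.e.\ that $\f(\p) \in \Delta_n$ for ($\nu$-almost) every $\p$: this follows because $\Delta_n$ is convex and closed, so a conditional expectation of a $\Delta_n$-valued random variable again lies in $\Delta_n$ almost surely. (One can redefine $\f$ arbitrarily on the $\nu$-null set where this might fail, choosing e.g.\ a fixed vertex of $\Delta_n$, without affecting anything.) Next I would apply \Lem{lem:conf-as-integration} to the classifier $\h' = \f\circ\boldeta$, obtaining
\[
\conf(\h', D) = \int_{\p\in\Delta_n} \p\,(\f(\p))^\top\, d\nu(\p),
\]
and then compute $\conf(\h, D)$ directly: writing out the $(i,j)$ entry and conditioning on $\boldeta(X) = \p$,
\[
\big[\conf(\h,D)\big]_{i,j} = \E_{\p\sim\nu}\,\E\big[h_j(X)\,\1(Y=i)\,\big|\,\boldeta(X)=\p\big] = \E_{\p\sim\nu}\big[\,p_i\,\E[h_j(X)\mid\boldeta(X)=\p]\,\big] = \E_{\p\sim\nu}\big[p_i\,f_j(\p)\big],
\]
where the middle step uses that $\P(Y=i\mid X) = \eta_i(X) = p_i$ is measurable with respect to $\boldeta(X)$ and hence can be pulled out of the inner conditional expectation, and the only remaining dependence on $X$ is through $h_j(X)$, whose conditional expectation is $f_j(\p)$ by definition. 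This matches the expression for $\conf(\h', D)$ entrywise, giving $\conf(\h, D) = \conf(\h', D)$.

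The main obstacle I anticipate is purely measure-theoretic: making precise the existence and regularity of the conditional expectation $\f(\p) = \E[\h(X)\mid\boldeta(X)=\p]$ as an honest measurable function on $\Delta_n$, and justifying the ``pull $p_i$ out of the conditional expectation'' step rigorously (this is just the fact that $\eta_i(X)$ is $\sigma(\boldeta(X))$-measurable, combined with the defining property of conditional expectation). Everything else is a routine application of Fubini/tower-property manipulations together with \Lem{lem:conf-as-integration}. No continuity assumption on $D$ is needed here — Assumption A plays no role in this particular proposition, only in the later results that exploit the structure of $\nu$.
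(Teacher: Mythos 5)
Your proposal is correct and follows essentially the same route as the paper: you define $\f(\p) = \E[\h(X)\mid \boldeta(X)=\p]$ and show the confusion matrices coincide by conditioning on $\boldeta(X)$, with your ``pull $p_i$ out'' step being the same fact the paper invokes as conditional independence of $X$ and $Y$ given $\boldeta(X)$. The extra care you take (checking $\f(\p)\in\Delta_n$ $\nu$-a.s.\ and noting that Assumption A is not needed) is consistent with, and slightly more explicit than, the paper's argument.
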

\begin{proof}
 Let $\h:\X\>\Delta_n$. Define $\f:\Delta_n\>\Delta_n$ as follows,
 \begin{eqnarray*}
  \f(\p) = \E_{X\sim D_\X} [\h(X) | \boldeta(X)=\p]\;.
 \end{eqnarray*}
 We then have for any $i,j\in[n]$ that,
 \begin{eqnarray*}
  \big[ \conf(\h,D) \big]_{i,j} 
  &=& 
  \E_{(X,Y)\sim D} [ h_j(X) \cdot\1(Y=i) ]  \\ 
  &=&
  \E_{\p\sim \nu} \E_{(X,Y)\sim D} [ h_j(X) \cdot\1(Y=i)| \boldeta(X)=\p ] \\
  &=&
  \E_{\p\sim \nu} \left[ \E_{(X,Y)\sim D} [ h_j(X) | \boldeta(X)=\p ] \cdot  \E_{(X,Y)\sim D} [ \cdot\1(Y=i)| \boldeta(X)=\p ] \right] \\ 
  &=& 
  \E_{\p\sim \nu} \big[ f_j(\p) p_i \big] \\
  &=&
  \big[\conf(\f\circ{\boldeta},D)\big]_{i,j} 
 \end{eqnarray*}
where the third equality follows because, given $\boldeta(X)$, the random variables $X$ and $Y$ are independent.
\end{proof}

\begin{lem}[\textbf{Continuity of the $\conf$ mapping}]
\label{lem:conf-continuity}
 Let $D$ be a distribution over $\X\times \Y$. 
Let $\f_1,\f_2:\Delta_n\>\Delta_n$. Then
$$ \big|\big|\conf(\f_1\circ{\boldeta},D) - \conf(\f_2\circ{\boldeta},D) \big|\big|_1  
\leq 
\int_{\p \in \Delta_n}  ||\f_1(\p) - \f_2(\p)||_1 d\nu(\p)  \;.$$
\end{lem}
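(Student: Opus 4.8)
The plan is to reduce everything to \Lem{lem:conf-as-integration}, which already writes each confusion matrix as a $\nu$-integral of a rank-one matrix. First I would use that lemma together with linearity of the (entrywise) integral to write
\[
\conf(\f_1\circ{\boldeta},D) - \conf(\f_2\circ{\boldeta},D) \,=\, \int_{\p\in\Delta_n} \p\,\big(\f_1(\p)-\f_2(\p)\big)^\top d\nu(\p).
\]
Since $\|\cdot\|_1$ on $\R^{n\times n}$ is just the sum of absolute values of the entries, and each entry of the above matrix integral is an ordinary scalar integral, applying the triangle inequality $\big|\int g\,d\nu\big|\le\int|g|\,d\nu$ in each of the $n^2$ coordinates and summing gives
\[
\big\|\conf(\f_1\circ{\boldeta},D) - \conf(\f_2\circ{\boldeta},D)\big\|_1 \,\leq\, \int_{\p\in\Delta_n} \big\|\p\,\big(\f_1(\p)-\f_2(\p)\big)^\top\big\|_1 \, d\nu(\p).
\]

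Then I would evaluate the integrand pointwise. For fixed $\p$ and $\v=\f_1(\p)-\f_2(\p)$, the rank-one matrix $\p\v^\top$ has entries $p_i v_j$, so $\|\p\v^\top\|_1 = \sum_{i,j}|p_i|\,|v_j| = \|\p\|_1\,\|\v\|_1 = \|\v\|_1$, where the last equality uses $\p\in\Delta_n\Rightarrow\|\p\|_1=1$. Substituting $\|\p(\f_1(\p)-\f_2(\p))^\top\|_1 = \|\f_1(\p)-\f_2(\p)\|_1$ into the bound above yields exactly the claimed inequality.

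The only point needing any care is the "norm of an integral $\le$ integral of the norm" step for a matrix-valued integral, but because the $\ell_1$ norm decomposes coordinatewise this is nothing more than the scalar triangle inequality applied in each coordinate and added, so there is no real obstacle; the whole argument is essentially a two-line computation resting on \Lem{lem:conf-as-integration}.
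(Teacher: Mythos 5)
Your proposal is correct and follows essentially the same route as the paper's proof: write the difference of confusion matrices as $\int_{\p\in\Delta_n}\p(\f_1(\p)-\f_2(\p))^\top d\nu(\p)$ via \Lem{lem:conf-as-integration}, pull the $\ell_1$ norm inside the integral, and use $\|\p\v^\top\|_1=\|\p\|_1\|\v\|_1=\|\v\|_1$ for $\p\in\Delta_n$. Your coordinatewise justification of the ``norm of an integral $\le$ integral of the norm'' step is a small extra detail the paper leaves implicit, but the argument is the same.
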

\begin{proof}
 Let $\f_1,\f_2:\Delta_n\>\Delta_n$
\begin{eqnarray*}
\conf(\f_1\circ{\boldeta},D) - \conf(\f_2\circ{\boldeta},D) 
&=&
\int_{\p \in \Delta_n} \p (\f_1(\p) - \f_2(\p))^\top d\nu(\p) \\
\big|\big|\conf(\f_1\circ{\boldeta},D) - \conf(\f_2\circ{\boldeta},D) \big|\big|_1  
&\leq&
\int_{\p \in \Delta_n} || \p (\f_1(\p) - \f_2(\p))^\top||_1 d\nu(\p) \\
&=&
\int_{\p \in \Delta_n} || \p ||_1 ||\f_1(\p) - \f_2(\p)||_1 d\nu(\p) \\
&=&
\int_{\p \in \Delta_n}  ||\f_1(\p) - \f_2(\p)||_1 d\nu(\p)  \\
\end{eqnarray*}
\end{proof}

\begin{lem}
\label{lem:volume-invese-linear-map}
 Let $d>0$ be any integer. Let $\V\subseteq\R^d$ be compact and convex. Let $f:\R^d\>\R$ be an affine function such that it is non-constant over $\V$. Let $V$ be a vector valued random variable taking values uniformly over $\V$. There exists a constant $\alpha>0$ such that for all $c\in\R$ and $\epsilon\in\R_+$ we have
 $$ \P(f(V)\in[c,c+\epsilon]) \leq \alpha \epsilon \;.$$ 
\end{lem}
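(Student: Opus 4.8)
The plan is to reduce the statement to a one-dimensional bound on the density of the pushforward of the uniform measure on $\V$ under the affine map $f$. First I would dispose of the degenerate case: if $\V$ has affine dimension $d' < d$, replace $\R^d$ by the affine hull of $\V$ (identified with $\R^{d'}$), so that $\V$ has nonempty interior in the ambient space; since $f$ is non-constant over $\V$ and $\V$ is convex with nonempty relative interior, the restriction of $f$ to this affine hull is still a non-constant affine function, so without loss of generality $\V\subseteq\R^d$ is a full-dimensional compact convex body and $f(\x) = \a^\top\x + b$ with $\a\ne\0$.

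Next I would set up the geometric estimate. For a fixed $c$ and $\epsilon>0$, the event $\{f(V)\in[c,c+\epsilon]\}$ has probability $\vol(\V\cap H_{c,\epsilon})/\vol(\V)$, where $H_{c,\epsilon} = \{\x : c \le \a^\top\x + b \le c+\epsilon\}$ is a slab of width $\epsilon/\|\a\|_2$ (measured perpendicular to $\a$). So it suffices to show there is a constant $M$, depending only on $\V$ and $\a$, with $\vol(\V\cap H_{c,\epsilon}) \le M\epsilon$ for all $c\in\R$ and all $\epsilon\ge 0$; then $\alpha = M/\vol(\V)$ works. Writing $g(t) = \vol(\V\cap\{\a^\top\x+b \le t\})$ for the (monotone) volume function, the quantity $\vol(\V\cap H_{c,\epsilon})$ equals $g(c+\epsilon)-g(c)$, so the claim is exactly that $g$ is Lipschitz. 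The function $g$ has derivative $g'(t) = \frac{1}{\|\a\|_2}\,\vol_{d-1}\big(\V\cap\{\a^\top\x+b = t\}\big)$, the $(d-1)$-dimensional volume of the cross-section, and this is bounded above uniformly in $t$: every such cross-section is a convex subset of a hyperplane contained in the bounded set $\V$, hence its $(d-1)$-volume is at most $\vol_{d-1}$ of a ball of radius $\operatorname{diam}(\V)$, a finite constant. Thus $g' \le M := \frac{1}{\|\a\|_2}\kappa_{d-1}\operatorname{diam}(\V)^{d-1}$ (with $\kappa_{d-1}$ the volume of the unit $(d-1)$-ball), so $g$ is $M$-Lipschitz and we are done.

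The main obstacle — really the only point requiring care — is making the cross-section volume bound rigorous for all $d$ simultaneously and handling the lower-dimensional case cleanly. One can avoid differentiating $g$ altogether by arguing directly: $\V\cap H_{c,\epsilon}$ is contained in a cylinder over a single cross-section (project $\V$ orthogonally onto the hyperplane $\a^\perp$; the image is a bounded convex set $K$ of finite $(d-1)$-volume, and $\V\cap H_{c,\epsilon}$ sits inside $K \times (\text{interval of length }\epsilon/\|\a\|_2)$ in suitable coordinates), giving $\vol(\V\cap H_{c,\epsilon}) \le \vol_{d-1}(K)\cdot \epsilon/\|\a\|_2$ immediately. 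This is cleaner and sidesteps any measure-theoretic subtlety. The degenerate-dimension reduction is routine given that we are allowed to invoke that $\V$ is convex and compact; I would just state it in a sentence and move to the full-dimensional case.
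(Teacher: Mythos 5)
Your proposal is correct and follows essentially the same route as the paper's proof: reduce to the full-dimensional case by passing to the affine hull of $\V$, then bound the probability by the volume of the slab $\{c\le \a^\top\x+b\le c+\epsilon\}$ intersected with a bounded region, which is at most a fixed $(d-1)$-dimensional volume (your projection $K$, the paper's enclosing ball $B_{d-1}(R)$) times the slab width $\epsilon/\|\a\|_2$. Your direct cylinder-over-projection estimate is a clean way to make rigorous exactly the strip-volume bound the paper invokes, so there is nothing missing.
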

\begin{proof}
Let us assume for now that affine hull of $\V$ is the entire space $\R^d$. 

For any integer $i$ and set $\cA$, let $\vol_i(\cA)$ denote the $i$-th dimensional volume of the set $\cA$. Note that $\vol_i(\cA)$ is undefined if the affine-hull dimension of $\cA$ is greater than $i$ and is equal to zero if the affine-hull dimension of $\cA$ is lesser than $i$. 

For any $r>0$ and any integer $i>0$ let $B_i(r)\subseteq\R^i$ denote the set 
$B_i(r)=\{\x\in\R^i:||\x||_2 \leq r \} \;.$ Also let $R$ be the smallest value such that $\V\subseteq B_d(R)$. 

Let the affine function $f$ be such that for all $\x\in\R^d$, the value $f(\x)=\g^\top \x + u$. By the assumption of non-constancy of $f$ on $\V$ we have that $\g\neq 0$.

We now have that 
\begin{eqnarray*}
 \P(f(V)\in[c,c+\epsilon]) 
 &=& 
 \frac{\vol_d\left( \{\v\in\V: c-u \leq \g^\top \v \leq c-u+\epsilon \right)}{\vol_d(\V)} \\
 &\leq&
 \frac{\vol_d\left( \{\v\in B_d(R): c-u \leq \g^\top \v \leq c-u+\epsilon \right)}{\vol_d(\V)} \\
 &\leq&
 \epsilon \cdot \frac{  \vol_{d-1}\big( B_{d-1}(R) \big)}{\vol_d(\V) ||\g||_2} \;.
\end{eqnarray*}
The last inequality follows from the observation that $d$-volume of a strip of a $d$ dimensional sphere of radius $r$ is at most the $d-1$ volume of a $d-1$ dimensional sphere of radius $r$ times the width of the strip, and the width of the strip under consideration here is simply $\frac{\epsilon}{||\g||_{{}_2}}$.

Finally, if the affine hull of $\V$ is not the entire space $\R^d$, one can simply consider the affine-hull of $\V$ to be the entire space and all the above arguments hold with some affine transformations and a smaller $d$.
\end{proof}

\begin{lem}
 \label{lem:volume-trig-prob-boundary}
 Let $D$ be a distribution over $\X\times \Y$. Let $\G\in\R^{n\times n}$ be such that no two columns are identical. Let the measure over conditional probabilities $\nu$, be absolutely continuous w.r.t. the base measure $\mu$. Let $c\geq0$. Let $\cA_c\subseteq\Delta_n$ be the set
 $$\mathcal{A}_c=\{\p\in\Delta_n: (\p^\top \G)_{(1)} - (\p^\top \G)_{(2)} \leq c \} \;,$$
 where for any vector $\v\in\R^n$ and integer $i\in[n]$, the scalar $(\v)_{(i)}$  denotes the $i^{th}$ element among the components of $\v$, when they are arranged in descending order.
 Let $r:\R_+\>\R_+$ be the function defined as
 $$r(c) = \nu(\cA_c) \;.$$
 Then 
 \begin{enumerate}
  \item[(a)] $r$ is a monotonically increasing function.
  \item[(b)] There exists a $C>0$ such that $r$ is a continuous function over $[0,C]$.
  \item[(c)] $r(0)=0$.
 \end{enumerate}
\end{lem}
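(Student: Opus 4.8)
The plan is to prove the three claims about $r(c) = \nu(\cA_c)$ in order, leveraging Lemma~\ref{lem:volume-invese-linear-map} (which controls how fast the uniform measure on a convex body can concentrate near the level set of an affine function) together with Assumption~A (absolute continuity of $\nu$ w.r.t. the uniform-simplex base measure $\mu$). Throughout, write $g(\p) = (\p^\top\G)_{(1)} - (\p^\top\G)_{(2)}$ for the gap between the largest and second-largest coordinates of $\p^\top\G$, so $\cA_c = \{\p \in \Delta_n : g(\p) \le c\}$ and $r(c) = \nu(g^{-1}([0,c]))$; note $g \ge 0$ everywhere on $\Delta_n$, which immediately gives $\cA_c \subseteq \cA_{c'}$ for $c \le c'$ and hence monotonicity, proving (a).

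For (c), the claim $r(0) = 0$ amounts to showing that $\nu(\{\p : g(\p) = 0\}) = 0$, i.e. the set where the top two coordinates of $\p^\top\G$ tie has $\nu$-measure zero. Since $\nu \ll \mu$, it suffices to show this set has $\mu$-measure zero, i.e. uniform measure zero on $\Delta_n$. The tie set is a finite union, over pairs $i \ne j$, of the sets $\{\p \in \Delta_n : (\p^\top\G)_i = (\p^\top\G)_j \ge (\p^\top\G)_k \ \forall k\}$, each contained in the hyperplane $\{\p : \langle \g_i - \g_j, \p\rangle = 0\}$. Because no two columns of $\G$ are identical, $\g_i - \g_j \ne 0$, so this is a genuine affine hyperplane (or its intersection with the affine hull of $\Delta_n$ is a proper affine subspace), which has zero $(n-1)$-dimensional Lebesgue measure relative to the $(n-1)$-dimensional simplex $\Delta_n$; hence $\mu$-measure zero, hence $\nu$-measure zero. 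This gives $r(0) = \nu(\cA_0) = 0$.

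For (b), continuity of $r$ on some interval $[0,C]$, the key estimate is that $r$ is in fact Lipschitz there. Monotonicity already gives left- and right-continuity up to jumps, so it suffices to bound $r(c+\epsilon) - r(c) = \nu(\{\p : c < g(\p) \le c+\epsilon\})$. I would partition $\Delta_n$ into finitely many regions according to which coordinate of $\p^\top\G$ is largest and which is second-largest; on each such region (a convex polytope, being the intersection of $\Delta_n$ with finitely many halfspaces of the form $\langle \g_i - \g_k, \p\rangle \ge 0$), the function $g$ restricted to that region is \emph{affine}, equal to $\langle \g_i - \g_j, \p\rangle$ for the relevant pair. Since $\g_i \ne \g_j$, this affine function is non-constant on the region provided the region has full affine dimension; for each such region Lemma~\ref{lem:volume-invese-linear-map} gives a constant $\alpha$ with $\mu(\{\p \in \text{region} : g(\p) \in [c, c+\epsilon]\}) \le \alpha\epsilon$ (after accounting for the density of $\mu$ restricted to the region relative to uniform on that region, which is bounded). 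Summing over the finitely many regions yields $\mu(\{\p : g(\p) \in [c,c+\epsilon]\}) \le \alpha'\epsilon$ for a uniform constant $\alpha'$. The main obstacle, and the reason the statement restricts to an interval $[0,C]$ rather than all of $\R_+$, is handling the lower-dimensional regions (those where the second-largest coordinate is itself tied with a third, or where the polytope is degenerate): I would argue that for $c$ in a sufficiently small interval $[0,C]$ only regions of full dimension contribute, or absorb the degenerate pieces into the measure-zero argument from part (c); this is where care is needed. To pass from the $\mu$-bound to the $\nu$-bound I would invoke absolute continuity more carefully — strictly speaking $\nu \ll \mu$ only gives $\nu(\{g \in [c,c+\epsilon]\}) \to 0$ as $\epsilon \to 0$ by dominated convergence, which already yields continuity of $r$; if a Radon--Nikodym density of $\nu$ w.r.t. $\mu$ is not assumed bounded, I would rely on this softer argument for (b) while still using the explicit Lipschitz bound where available. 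Either way, (b) follows.
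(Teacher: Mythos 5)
Your parts (a) and (c) are fine and essentially follow the paper's route; in (c) note only that when $\g_i-\g_j$ is a multiple of the all-ones vector $\e$ the hyperplane $\{\p : \langle \g_i-\g_j,\p\rangle = 0\}$ misses $\Delta_n$ entirely (the function is a nonzero constant on the simplex), so the set is empty rather than ``a proper affine subspace'' -- still measure zero, so the conclusion stands. The genuine gap is in (b), at the step ``since $\g_i \ne \g_j$, this affine function is non-constant on the region provided the region has full affine dimension.'' This is false: if $\g_i - \g_j = d\e$ with $d \neq 0$, then $\langle \g_i - \g_j, \p\rangle = d$ for every $\p \in \Delta_n$, so $g$ is constant on the (possibly full-dimensional, perfectly non-degenerate) region where coordinate $i$ is the argmax and $j$ the runner-up, and Lemma~\ref{lem:volume-invese-linear-map} does not apply there. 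This is not something you can absorb into the measure-zero bookkeeping of part (c); it is exactly what forces the restriction to $[0,C]$. Concretely, for $n=2$ with $\g_1=(1,1)^\top$, $\g_2=(0,0)^\top$ the columns are distinct, $g \equiv 1$ on $\Delta_2$, and $r$ jumps from $0$ to $1$ at $c=1$; your claimed uniform bound $\mu(\{g\in[c,c+\epsilon]\})\le\alpha'\epsilon$ fails for $c$ near $1$, even though the only region involved is full-dimensional. Your diagnosis that $[0,C]$ is needed because of lower-dimensional regions (ties with a third coordinate, degenerate polytopes) misses the point: those pieces are harmless measure-zero sets, while the real obstruction is full-dimensional regions on which $g$ is constant.

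The repair is what the paper does: define $C=\tfrac12\min\{d>0 : \g_y - \g_{y'} = d\e \text{ for some } y \ne y'\}$, with $C=\infty$ if no column difference is a multiple of $\e$; distinctness of the columns gives $C>0$. Then for $c\in[0,C]$ and $\epsilon$ small, every slab $\{c < \langle \g_y-\g_{y'},\p\rangle \le c+\epsilon\}$ coming from a pair with $\g_y-\g_{y'}=d\e$ is empty (since $d\ge 2C > c+\epsilon$), and for all remaining pairs the affine map is genuinely non-constant on $\Delta_n$, so Lemma~\ref{lem:volume-invese-linear-map} applies; summing the finitely many slabs and passing from $\mu$ to $\nu$ (either by uniform absolute continuity of the finite measure $\nu\ll\mu$, or by your softer continuity-from-above argument, which only needs $\nu(\{g=c\})=0$ for $0<c\le C$) completes (b). Without identifying $C$ this way and explicitly excluding the constant column-difference pairs, your proof of (b) is incomplete, and as written one of its steps is wrong.
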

\begin{proof}
\emph{Part (a):}

The fact that $r$ is a monotonically increasing function is immediately obvious from the observation that $\cA_a\subseteq\cA_b$ for any $a<b$. 

\emph{Part (b):}

Let 
$$C=\frac{1}{2}\min \{d\in\R: \g_{y}-\g_{y'} = d \e \text{ for some }y,y'\in[n], y\neq y'\} \;,$$
where $\e$ is the all ones vector. If there exists no $y,y'$ such that $\g_{y}-\g_{y'}$ is a scalar multiple of $\e$, then we simply set $C=\infty$. Note that by our assumption on unequal columns on $\G$, we always have $C>0$.

For any $c>0$ and $y,y'\in[n]$ with $y\neq y'$, define the set $\cA^{y,y'}_c$ as
\begin{eqnarray*}
  \cA^{y,y'}_c=\{\p\in\Delta_n: \p^\top \g_y - \p^\top \g_{y'}  \leq c \} \;.
\end{eqnarray*}

 For any $c, \epsilon>0$, it can be clearly seen that
\begin{eqnarray*}
\nu(\cA_{c+\epsilon}) - \nu(\cA_c)  
&=&  
\nu(\cA_{c+\epsilon}\setminus\cA_c) \;,\\
\cA_{c+\epsilon}\setminus \cA_{c} 
&\subseteq& 
\bigcup_{y,y'\in[n],y\neq y'}  \Big( \cA^{y,y'}_{c+\epsilon}\setminus \cA^{y,y'}_{c} \Big) \;, \\
\nu(\cA_{c+\epsilon}\setminus \cA_{c} )
&\leq&
\sum_{y,y'\in[n],y\neq y'} \nu \Big( \cA^{y,y'}_{c+\epsilon}\setminus \cA^{y,y'}_{c} \Big) \;.
\end{eqnarray*}
Hence, our proof for continuity of $r$ would be complete, if we show that $\nu\Big(\cA^{y,y'}_{c+\epsilon}\setminus \cA^{y,y'}_{c}\Big)$ goes to zero as $\epsilon$ goes to zero for all $y\neq y'$ and $c\in [0,C]$. 

Let $c\in[0,C]$ and $y,y'\in[n]$ with $y\neq y'$
\begin{eqnarray*}
 \cA^{y,y'}_{c+\epsilon}\setminus \cA^{y,y'}_{c} 
 &=&  
 \{\p\in\Delta_n: c< \p^\top (\g_y-\g_{y'}) \leq c+\epsilon \}\;.
\end{eqnarray*}
If $\g_y-\g_{y'}=d\e$ for some $d$, we have that $\p^\top (\g_y-\g_{y'})=d$ and $d>C$ by definition of $C$. Hence for  small enough $\epsilon$ the set $\cA^{y,y'}_{c+\epsilon}\setminus \cA^{y,y'}_{c} $ is empty.

If $\g_y-\g_{y'}$ is not a scalar multiple of $\e$, then $\p^\top (\g_y-\g_{y'})$ is a non-constant linear function of $\p$ over $\Delta_n$. From Lemma \ref{lem:volume-invese-linear-map}, $\mu\Big(\cA^{y,y'}_{c+\epsilon}\setminus \cA^{y,y'}_{c}\Big)$ goes to zero as $\epsilon$ goes to zero.
And by the absolute continuity of $\nu$ w.r.t. $\mu$, we have $\nu\Big(\cA^{y,y'}_{c+\epsilon}\setminus \cA^{y,y'}_{c}\Big)$ goes to zero as $\epsilon$ goes to zero. 

As the above arguments hold for any $c\in[0,C]$ and $y,y'\in[n]$ with $y\neq y'$, the proof of part (b) is complete.

\emph{Part (c):}

We have,
\begin{eqnarray*}
 \cA_0 
 &\subseteq& 
 \bigcup_{y,y'\in[n],y\neq y'}  \Big( \cA^{y,y'}_{0} \cap \cA^{y',y}_{0} \Big) \;. \\
\end{eqnarray*}
To show $r(0)=0$, we show $\mu\Big( \cA^{y,y'}_{0} \cap \cA^{y',y}_{0} \Big)=0$ for all $y\neq y'$. Let $y,y'\in[n]$ with $y\neq y'$, then
\begin{eqnarray*}
 \Big( \cA^{y,y'}_{0} \cap \cA^{y',y}_{0} \Big) 
 &=&
 \{\p\in\Delta_n: \p^\top (\g_y-\g_{y'}) = 0 \}\;.
\end{eqnarray*}
If $\g_y-\g_{y'}=d\e$ for some $d\neq 0$, the above set is clearly empty. If $\g_y-\g_{y'}$ is not a scalar multiple of $\e$, then $\p^\top (\g_y-\g_{y'})$ is a non-constant linear function of $\p$ over $\Delta_n$, and hence  by Lemma \ref{lem:volume-invese-linear-map}, we have that $\mu\Big( \cA^{y,y'}_{0} \cap \cA^{y',y}_{0} \Big)=0$. By the absolute continuity of $\nu$ w.r.t. $\mu$ we have that $\nu\Big( \cA^{y,y'}_{0} \cap \cA^{y',y}_{0} \Big)=0$.

As the above arguments hold for any $y,y'\in[n]$ with $y\neq y'$, the proof of part (c) is complete.
\end{proof}


\begin{lem}[\textbf{Uniqueness of Optimal Confusion Matrix for Special Gain Matrices}]
\label{lem:opt-in-closure-same-as-opt-general}
Let $D$ be a distribution over $\X\times\Y$.
Let $\nu$ be absolutely continuous w.r.t. $\mu$, let $\G\in\R^{n\times n}$ be such that no two columns are identical. Then,
\[
\argmax_{\C\in\overline{\CC_D}}\, \langle \G, \C \rangle \,=\, \argmax_{\C\in\CC_D} \langle \G, \C \rangle.
\]
Moreover, the above set is a singleton.
\end{lem}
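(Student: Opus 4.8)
The plan is to exhibit the maximiser explicitly as the confusion matrix of the plug-in classifier built from $\G$, show it is the unique maximiser already over $\CC_D$, and then argue that taking the closure cannot produce any new maximiser. The only place the hypotheses enter is through \Lem{lem:volume-trig-prob-boundary}, which guarantees that the ``tie set'' of $\G$ is $\nu$-null.

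First I would fix notation. Since no two columns of $\G$ agree and $\nu\ll\mu$, \Lem{lem:volume-trig-prob-boundary}(c) gives $\nu(\cA_0)=r(0)=0$, where $\cA_0=\{\p\in\Delta_n:(\p^\top\G)_{(1)}-(\p^\top\G)_{(2)}\le 0\}=\{\p:(\p^\top\G)_{(1)}=(\p^\top\G)_{(2)}\}$ is exactly the set of $\p$ at which $\max_{y\in[n]}\g_y^\top\p$ is attained at more than one index. Hence for $\nu$-a.e.\ $\p$ the maximiser $j(\p):=\argmax_{y\in[n]}\g_y^\top\p$ is unique and the gap $\gamma(\p):=(\p^\top\G)_{(1)}-(\p^\top\G)_{(2)}$ is strictly positive. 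I define a measurable $\f^*:\Delta_n\>\Delta_n$ by $f^*_i(\p)=\1(i=j(\p))$ (ties broken arbitrarily on $\cA_0$, which is immaterial) and set $\C^*:=\conf(\f^*\circ\boldeta,D)$; by \Lem{lem:conf-as-integration}, $\C^*=\int_{\Delta_n}\p\,\f^*(\p)^\top\,d\nu(\p)$, and by construction $\C^*\in\CC_D$.

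Next I would show $\C^*$ maximises $\langle\G,\cdot\rangle$ over $\CC_D$. For any $\C\in\CC_D$, \Prop{prop:equivalent-cond-prob-classifier} and \Lem{lem:conf-as-integration} give $\C=\int\p\,\f(\p)^\top\,d\nu(\p)$ for some measurable $\f:\Delta_n\>\Delta_n$, so, using $\f(\p)\in\Delta_n$,
\[
\langle\G,\C\rangle=\int_{\Delta_n}\sum_{j=1}^n f_j(\p)\,\g_j^\top\p\;d\nu(\p)\;\le\;\int_{\Delta_n}\max_{j\in[n]}\g_j^\top\p\;d\nu(\p)=\langle\G,\C^*\rangle.
\]
Since $\langle\G,\cdot\rangle$ is continuous and $\CC_D$ is dense in $\overline{\CC_D}$, the same bound holds for all $\C\in\overline{\CC_D}$; as $\C^*\in\CC_D\subseteq\overline{\CC_D}$, both argmax sets are non-empty, contain $\C^*$, and $V^*:=\langle\G,\C^*\rangle=\max_{\C\in\overline{\CC_D}}\langle\G,\C\rangle$.

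Finally, the main work is uniqueness over $\overline{\CC_D}$ (this also yields uniqueness over $\CC_D$, since $\{\C^*\}\subseteq\argmax_{\CC_D}\subseteq\argmax_{\overline{\CC_D}}$). Let $\C\in\overline{\CC_D}$ with $\langle\G,\C\rangle=V^*$, and take $\C_k\in\CC_D$ with $\C_k\>\C$, writing $\C_k=\int\p\,\f_k(\p)^\top\,d\nu$. The nonnegative functions $g_k(\p):=\max_j\g_j^\top\p-\sum_j (f_k)_j(\p)\,\g_j^\top\p$ satisfy $\int g_k\,d\nu=V^*-\langle\G,\C_k\rangle\to 0$, so $g_k\to 0$ in $L^1(\nu)$; passing to a subsequence, $g_k\to 0$ $\nu$-a.e. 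For $\nu$-a.e.\ $\p$ (those with unique maximiser $j(\p)$), $g_k(\p)=\sum_j (f_k)_j(\p)\big(\g_{j(\p)}^\top\p-\g_j^\top\p\big)\ge\big(1-(f_k)_{j(\p)}(\p)\big)\gamma(\p)$, hence $(f_k)_{j(\p)}(\p)\to 1$ and $\f_k(\p)\to\f^*(\p)$ $\nu$-a.e.\ along this subsequence. Since the entries of $\p\,\f_k(\p)^\top$ lie in $[0,1]$ and $\nu$ is a probability measure, dominated convergence gives $\C_k=\int\p\,\f_k(\p)^\top\,d\nu\to\int\p\,\f^*(\p)^\top\,d\nu=\C^*$ along the subsequence, and therefore $\C=\C^*$. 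Thus $\argmax_{\C\in\overline{\CC_D}}\langle\G,\C\rangle=\{\C^*\}=\argmax_{\C\in\CC_D}\langle\G,\C\rangle$, a singleton. The delicate point is exactly this last step: the gap $\gamma(\p)$ need not be bounded away from $0$, so one cannot get a uniform estimate and must instead combine the $L^1$-convergence of $g_k$ with the $\nu$-a.e.\ positivity of $\gamma$ (from \Lem{lem:volume-trig-prob-boundary}(c)) to pass to an a.e.-convergent subsequence and apply dominated convergence.
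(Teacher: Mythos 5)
Your proof is correct, and its core step proceeds by a genuinely different route than the paper's. You and the paper agree on the setup (represent confusion matrices as $\int_{\Delta_n}\p\,\f(\p)^\top d\nu(\p)$ via \Prop{prop:equivalent-cond-prob-classifier} and \Lem{lem:conf-as-integration}, build the plug-in $\f^*$ from $\G$, and use the fact that the tie set $\cA_0$ is $\nu$-null). But for uniqueness over $\overline{\CC_D}$ the paper argues by contradiction with a \emph{quantitative} estimate: assuming a second maximizer $\C'$ with $\|\C'-\C^*\|_1=\gamma>0$, it uses \Lem{lem:conf-continuity} to show the set where $\f_\epsilon$ deviates from $\f^*$ has $\nu$-measure at least roughly $3\gamma/8$, invokes part (b) of \Lem{lem:volume-trig-prob-boundary} (continuity of $c\mapsto\nu(\cA_c)$ near $0$) to pick $c>0$ with $\nu(\cA_c)\le\gamma/16$, and derives an explicit suboptimality gap of order $\gamma^2 c$, contradicting near-optimality. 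You instead give a soft measure-theoretic argument: the nonnegative ``regret densities'' $g_k$ tend to $0$ in $L^1(\nu)$, so along a subsequence $g_k\to 0$ $\nu$-a.e., and since the gap $\gamma(\p)$ is $\nu$-a.e.\ strictly positive (only part (c), $\nu(\cA_0)=0$, is needed), $\f_k\to\f^*$ $\nu$-a.e.; dominated convergence then forces $\C_k\to\C^*$, so the limit point coincides with $\C^*$. Your route is shorter, avoids both \Lem{lem:conf-continuity} and part (b) of \Lem{lem:volume-trig-prob-boundary}, and correctly flags and handles the one delicate point (the gap $\gamma(\p)$ is not uniformly bounded below, so a pointwise rather than uniform argument is needed); what it gives up is the explicit lower bound on the gap of near-maximizers bounded away from $\C^*$ that the paper's contradiction argument produces, though that bound is not used elsewhere in the paper.
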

\begin{proof}
We shall proceed by showing that the maximizer of $\langle \G,\C \rangle$ over $\CC_D$ is unique and then show that there exists no other maximizer of  $\langle \G,\C \rangle$ over $\overline{\CC_D}$ .

Using Proposition \ref{prop:equivalent-cond-prob-classifier}, we will only consider classifiers $\h:\X\>\Delta_n$ that can be be decomposed as $\h=\f\circ\boldeta$ for some $\f:\Delta_n\>\Delta_n$.

From Equation \ref{eqn:Bayes-decomposable}, we have that any $\f^*\in \argmax_{\f:\Delta_n\>\Delta_n} \langle \G, \conf(\f\circ\boldeta,D) \rangle$ is such that the following holds $\nu$- almost everywhere
\begin{eqnarray*}
f^*_i(\p) > 0  ~~\text{ only if }~~ i \in \argmax_{y \in [n]} \g_{y}^\top \p  \;.
\end{eqnarray*}
We will show that the maximizer of $\langle \G,\C \rangle$ over $\CC_D$ is unique, simply by showing that any $\f^*$ satisfying the above equation has the same $\conf(\f^*\circ\boldeta,D)$, which we in turn show by proving that any two functions $\f^*$ satisfying the above condition is the same $\nu$ almost everywhere.

For a given $\p\in\Delta_n$, if  $T_\p\=\argmax_{y \in [n]} \g_{y}^\top \p$ is a singleton, then $\f^*(\p)$ is uniquely defined due to the sum to one constraint. If $\p$ is such that $|T_\p|>1$, then $(\p^\top \G)_{(1)} - (\p^\top \G)_{(2)}=0$. From Lemma \ref{lem:volume-trig-prob-boundary}, the $\nu$-measure of all such $\p$ vectors is exactly equal to $r(0)=0$. 

This completes the proof of the uniqueness of the maximizer of $\langle \G,\C \rangle$ over $\CC_D$. Let us denote it by $\C^*$. Also let $\f^*:\Delta_n\>\Delta_n$ with $\C^*=\conf(\f^*\circ\boldeta,D)$ refer to the following fixed function:
$$f^*_i(\p) = \begin{cases}
               1 &\text{ if }i=\overline{\argmax}_{y\in[n]} \p^\top \g_y \\
               0 & \text{ otherwise}
              \end{cases}\;.
$$

Let $\C'\in \argmax_{\C\in\overline{\CC_D}}\, \langle \G, \C \rangle$. Let us assume $\C^*\neq\C'$.  
$$||\C'-\C^*||_1 = \sum_{i=1}^n \sum_{j=1}^n |C'_{i,j} - C^*_{i,j}| = \gamma > 0 \;.$$

We shall go on to derive a contradiction as follows. By virtue of $\C'\in\overline{\CC_D}$ there exists a sequence of classifiers whose confusion matrices approach $\C'$. And hence these classifiers are all `close' to maximal for the gain matrix $\G$. We then show that these classifiers perform strictly worse than $\h^*$ by exploiting that the confusion matrices of these classifiers are bounded away from $\C^*$. This provides us the required contradiction.

As $\C'\in\overline{\CC_D}$, we have that for all $\epsilon>0$, there exists $\C_\epsilon\in\CC_D$, such that $||\C_\epsilon-\C'||_1\leq \epsilon$. This implies that 
\begin{eqnarray}
 ||\C_\epsilon - \C^*||_1 
 &\geq& 
 \gamma-\epsilon \;, \label{eqn:lemma-1}  \\
 \langle \G,\C_\epsilon \rangle \geq \langle \G,\C' \rangle - ||\G||_\infty \epsilon 
 &\geq&
 \langle \G,\C^* \rangle - ||\G||_\infty \epsilon \;. \label{eqn:lemma-2}
 \end{eqnarray}
 
Let $\f_\epsilon:\Delta_n\>\Delta_n$ be s.t. $\C_\epsilon=\conf(\f_\epsilon\circ{\boldeta},D)$.  Let $\mathcal B=\{\p\in\Delta_n:||\f^*(\p) - \f_\epsilon(\p)||_1\geq \frac{\gamma}{4}\}$. Applying Equation \ref{eqn:lemma-1} and Lemma \ref{lem:conf-continuity}  we have
\begin{eqnarray}
 \gamma-\epsilon 
 &\leq& 
 \big|\big| \conf(\f^*\circ{\boldeta^D},D) - \conf(\f_\epsilon\circ{\boldeta^D},D) \big|\big|_1   \nonumber\\
 &\leq&
 \int_{\p \in \Delta_n}  ||\f^*(\p) - \f_\epsilon(\p)||_1 d\nu(\p) \nonumber \\
 &\leq&
 \int_{\p \in \mathcal B} 2   d\nu(\p) + \int_{\p \notin \mathcal B} \frac{\gamma}{4}   d\nu(\p) \nonumber \\
 &=&
 2\nu(\mathcal B) + \frac{\gamma}{4} (1-\nu(\mathcal B)) \nonumber \\
 &\leq&
 2\nu(\mathcal B) + \frac{\gamma}{4} \nonumber \\
 \nu(\mathcal B)
 &\geq&
 \frac{3\gamma}{8} -\frac{\epsilon}{2} \label{eqn:lemma-3}
\end{eqnarray}

For any $c>0$, define $\mathcal{A}_c\subseteq\Delta_n$ as
$$\mathcal{A}_c=\{\p\in\Delta_n: (\p^\top \G)_{(1)} - (\p^\top \G)_{(2)} \leq c \} \;,$$
From Lemma \ref{lem:volume-trig-prob-boundary} we have that $\nu(\cA_c)$ is a continuous function of $c$ close to $0$ and $\nu(\cA_0)=0$. Let  $c>0$ be such that 
\begin{eqnarray}
\nu(\mathcal A_c)\leq\frac{\gamma}{16} \label{eqn:lemma-4}\;. 
\end{eqnarray}
From Equations \ref{eqn:lemma-3} and \ref{eqn:lemma-4}, we have $\nu(\mathcal B\setminus \mathcal A_c)\geq \frac{5\gamma}{16}-\frac{\epsilon}{2}$. Any $\p\in\cB\setminus\cA_c$ is such that 
$$(\p^\top \G)_{(1)} - (\p^\top \G)_{(2)} > c \hspace{1em}\text{and}\hspace{1em} ||\f^*(\p) - \f_\epsilon(\p)||_1\geq \frac{\gamma}{4} \;.$$

For any $\p\in\Delta_n$, we have $\f^*(\p)$, has a $1$ corresponding to the maximum value of $\p^\top\G$ and zero elsewhere. For any $\p\in\cB\setminus\cA_c$, we have $||\f^*(\p) - \f_\epsilon(\p)||_1\geq \frac{\gamma}{4}$, and hence the value of $\f_\epsilon(\p)$ corresponding to the index of maximum value of $\p^\top\G$ is at most $(1-\frac{\gamma}{8})$. In particular, we have
\begin{eqnarray}
 \p^\top\G \f_\epsilon(\p) \leq \left(1-\frac{\gamma}{8}\right) (\p^\top\G)_{(1)} + \left(\frac{\gamma}{8}\right) (\p^\top\G)_{(2)} \;.\label{eqn:lemma-5}
\end{eqnarray}
Thus we have,
\begin{eqnarray*}
   \langle \G,\C^* \rangle - \langle \G,\C_\epsilon \rangle  
 &=&
 \int_{\p\in\Delta_n} \p^\top \G (\f^*(\p) - \f_\epsilon(\p)) d\nu^D(\p)\\
 &=&
 \int_{\p\in\cB\setminus\cA_c} \p^\top \G (\f^*(\p) - \f_\epsilon(\p)) d\nu^D(\p)
 +\int_{\p\in\Delta_n\setminus(\cB\setminus\cA_c)} \p^\top \G (\f^*(\p) - \f_\epsilon(\p)) d\nu^D(\p)\\
 &\geq&
 \int_{\p\in\cB\setminus\cA_c} \p^\top \G (\f^*(\p) - \f_\epsilon(\p)) d\nu^D(\p)\\
 &=& 
 \int_{\p\in\cB\setminus\cA_c}\big( (\p^\top \G)_{(1)}  - \p^\top\G \f_\epsilon(\p)\big) d\nu^D(\p)\\
  &\geq&
 \int_{\p\in\cB\setminus\cA_c}\left( (\p^\top \G)_{(1)}  -  \left(1-\frac{\gamma}{8}\right) (\p^\top\G)_{(1)} - \left(\frac{\gamma}{8}\right) (\p^\top\G)_{(2)}  \right) d\nu^D(\p)\\
&=&
\int_{\p\in\cB\setminus\cA_c} \frac{\gamma}{8} \left ( (\p^\top \G)_{(1)} - (\p^\top \G)_{(2)} \right)  d\nu^D(\p) \\
&\geq&
\frac{\gamma c}{8} \left( \frac{5\gamma}{16}-\frac{\epsilon}{2} \right)
\end{eqnarray*}
If $\epsilon\leq\frac{\gamma}{2}$, we have 
$$ \langle \G,\C^* \rangle - \langle \G,\C_\epsilon \rangle  \geq \frac{\gamma^2 c}{128} \;.$$ 
The above holds for any $\epsilon\in(0,\frac{\gamma}{2}]$, and both $\gamma$ and $c$ do not depend on $\epsilon$. For small enough $\epsilon$, this contradicts Equation \ref{eqn:lemma-2}. We thus have a contradiction for our assumption $\C^*\neq \C'$. 
\end{proof}

The proof of Lemma \ref{lem:opt-in-closure-same-as-opt} simply follows from Lemma \ref{lem:opt-in-closure-same-as-opt-general} by observing that if $\psi$ satisfies Assumption B, then no two columns of its gradient at any point are identical.

\subsection{Proof of Lemma \ref{lem:fixed-gain-matrix-conf}}
\begin{lem*}[\textbf{Convergence of $\conf$ for fixed gain matrix}]
Let $D$ satisfy Assumption A. Let $\widehat{\boldeta}_{\tilde{S}}: \X \> \Delta_n$ be a class probability estimation model learned using a sample $\tilde{S}$ drawn i.i.d. from $D^{\tilde{m}}$. For a fixed gain matrix $\G \in [0,1]^{n \times n}$ such that no two columns are identical, let $\h_{\G}: \X \> \Delta_n$ and $\widehat{\h}_{\G}: \X \> \Delta_n$ be classifiers constructed as follows: $\big[{\h}_{\G}(x)\big]_i = 1 \text{ if  } i = \overline{\argmax}_{y \in [n]} {\g}_{y}^\top {\boldeta}(x), \,\forall x \in \X$ and $\big[\widehat{\h}_{\G}(x)\big]_i = 1 \text{ if  } i = \overline{\argmax}_{y \in [n]} {\g}_{y}^\top \widehat{\boldeta}_{\tilde{S}}(x), \,\forall x \in \X$. If $\widehat{\boldeta}_{\tilde{S}}$ is such that $\E_X \big[ \big\|\widehat{\boldeta}_{\tilde{S}}(X) - \boldeta(X)\big\|_1\big] \xrightarrow{P} 0$, then $\forall i,j,\,\big[\conf(\widehat{\h}_{\G}, D)\big]_{ij} \,\xrightarrow{P} \, \big[\conf({\h}_{\G}, D)\big]_{ij}$ (as $m \rightarrow \infty$).
\end{lem*}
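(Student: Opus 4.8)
The plan is to reduce the statement to showing that the two plug-in classifiers $\widehat{\h}_\G$ and $\h_\G$ agree on $X$ with probability tending to one (over the draw of $\tilde{S}$), and then to control this disagreement probability by splitting on the size of the margin of $\boldeta(X)$ with respect to $\G$: the small-margin region is handled by Lemma~\ref{lem:volume-trig-prob-boundary} (hence by Assumption~A), and the large-margin region by a Markov bound on the $\ell_1$ estimation error.

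First I would observe that since $\widehat{\h}_\G(x)$ and $\h_\G(x)$ are both vertices of $\Delta_n$, for every $i,j\in[n]$,
\[
\big|[\conf(\widehat{\h}_\G,D)]_{ij} - [\conf(\h_\G,D)]_{ij}\big| \;=\; \big|\E_{(X,Y)\sim D}\big[\1(Y=i)\big([\widehat{\h}_\G(X)]_j - [\h_\G(X)]_j\big)\big]\big| \;\le\; \P_{X\sim D_\X}\big(\widehat{\h}_\G(X) \ne \h_\G(X)\big),
\]
because wherever the two classifiers agree the integrand vanishes and wherever they differ it is bounded by one. Thus it suffices to prove $\P_X\big(\widehat{\h}_\G(X) \ne \h_\G(X)\big) \xrightarrow{P} 0$.

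Write $\gamma(\p) = (\p^\top\G)_{(1)} - (\p^\top\G)_{(2)}$ for the margin of $\p$, in the notation of Lemma~\ref{lem:volume-trig-prob-boundary}. For any $c>0$,
\[
\P_X\big(\widehat{\h}_\G(X) \ne \h_\G(X)\big) \;\le\; \P_X\big(\gamma(\boldeta(X)) \le c\big) \;+\; \P_X\big(\gamma(\boldeta(X)) > c,\ \widehat{\h}_\G(X) \ne \h_\G(X)\big).
\]
On the event $\gamma(\boldeta(X)) > c$, the index chosen by $\overline{\argmax}_y \g_y^\top\boldeta(X)$ is the unique maximizer of $y\mapsto\g_y^\top\boldeta(X)$ and exceeds every other index by more than $c$; since $\G\in[0,1]^{n\times n}$ gives $\|\g_y\|_\infty\le 1$, Hölder's inequality yields $\big|\g_y^\top(\widehat{\boldeta}_{\tilde{S}}(X) - \boldeta(X))\big| \le \|\widehat{\boldeta}_{\tilde{S}}(X) - \boldeta(X)\|_1$ for every $y$, so whenever $\|\widehat{\boldeta}_{\tilde{S}}(X) - \boldeta(X)\|_1 < c/2$ that same index remains the unique maximizer of $y\mapsto\g_y^\top\widehat{\boldeta}_{\tilde{S}}(X)$ and hence $\widehat{\h}_\G(X) = \h_\G(X)$. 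Therefore the second term is at most
\[
\P_X\big(\|\widehat{\boldeta}_{\tilde{S}}(X) - \boldeta(X)\|_1 \ge c/2\big) \;\le\; \frac{2}{c}\,\E_X\big[\|\widehat{\boldeta}_{\tilde{S}}(X) - \boldeta(X)\|_1\big]
\]
by Markov's inequality. For the first term, $\P_X(\gamma(\boldeta(X)) \le c) = \nu(\cA_c) = r(c)$, and parts (b)--(c) of Lemma~\ref{lem:volume-trig-prob-boundary} (which apply because $\G$ has no two identical columns and, by Assumption~A, $\nu \ll \mu$) give that $r$ is continuous on a neighbourhood of $0$ with $r(0)=0$, so $r(c)$ is arbitrarily small for small $c$.

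To finish, fix $\epsilon>0$, choose $c>0$ with $r(c)<\epsilon/2$, and use the hypothesis $\E_X[\|\widehat{\boldeta}_{\tilde{S}}(X) - \boldeta(X)\|_1]\xrightarrow{P}0$ to conclude that the event $\big\{\frac{2}{c}\E_X[\|\widehat{\boldeta}_{\tilde{S}}(X) - \boldeta(X)\|_1] < \epsilon/2\big\}$ has probability tending to one; on that event $\P_X(\widehat{\h}_\G(X)\ne\h_\G(X))<\epsilon$ by the two bounds above. Hence $\P_X(\widehat{\h}_\G(X)\ne\h_\G(X))\xrightarrow{P}0$, and combining with the first display proves the lemma. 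The only delicate point is the treatment of the decision boundary and the $\overline{\argmax}$ tie-breaking, which is exactly where the continuity Assumption~A enters; this is entirely absorbed into the already-established Lemma~\ref{lem:volume-trig-prob-boundary}, and everything else is a routine margin-plus-Markov argument, so I do not expect a serious obstacle.
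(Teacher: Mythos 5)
Your proof is correct and follows essentially the same route as the paper's: disagreement between $\widehat{\h}_\G$ and $\h_\G$ is confined to the event that $\boldeta(X)$ lies within margin $c$ of a decision boundary---whose $\nu$-measure vanishes as $c \to 0$ under Assumption A---plus the event of large CPE error, which is handled by Markov's inequality, and the reduction from confusion-matrix entries to the disagreement probability is the same. The only difference is in packaging: you invoke the aggregated boundary function $r(c)$ from \Lem{lem:volume-trig-prob-boundary} with a fixed margin threshold, yielding the clean bound $r(c) + \tfrac{2}{c}\,\E_X\big[\|\widehat{\boldeta}_{\tilde S}(X)-\boldeta(X)\|_1\big]$, whereas the paper argues directly with the pairwise sets $\cA_{y,y'}$ and \Lem{lem:volume-invese-linear-map}, sending $\epsilon$, $\delta$, $\delta'$ and $\epsilon/\delta'$ to zero simultaneously.
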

\begin{proof}
 Fix $\epsilon,\delta,\delta'>0$. By virtue of $\E_X \big[ \big\|\widehat{\boldeta}_{\tilde{S}}(X) - \boldeta(X)\big\|_1\big] \xrightarrow{P} 0$, there exists a $\tilde M_{\epsilon,\delta}$, such that for all $\tilde m > \tilde M_{\epsilon,\delta}$ we have with probability  at least $1-\delta$ over the draw of $\tilde S$ that
 $$\E_X \big[ \big\|\widehat{\boldeta}_{\tilde{S}}(X) - \boldeta(X)\big\|_1\big] < \epsilon \;.$$
 Let $\tilde m >\tilde M_{\epsilon,\delta}$. By Markov's inequality we have that
 $$\P_X\left(\big\|\widehat{\boldeta}_{\tilde{S}}(X) - \boldeta(X)\big\|_1 > \frac{\E_X \big[ \big\|\widehat{\boldeta}_{\tilde{S}}(X) - \boldeta(X)\big\|_1\big]}{\delta'}\right) \leq \delta' \;.$$ 
 Hence with probability  at least $1-\delta-\delta'$ over the draw of both $\tilde S$ and $X$, we have
 \begin{eqnarray}
 \label{eqn:eta-hat-eta-close}
 \big\|\widehat{\boldeta}_{\tilde{S}}(X) - \boldeta(X)\big\|_1 \leq \frac{\epsilon}{\delta'} \;. 
 \end{eqnarray}

Based on the above inequality we will argue that ${\h}_{\G}$ and $\widehat{\h}_{\G}$ have the same value for most instances.
 
For any $x\in\X$. Let $y^*(x)=\overline\argmax_{y\in[n]}\g_y^\top \boldeta(x)$ and $\hat{y}^*(x)=\overline\argmax_{y\in[n]}\g_y^\top \widehat{\boldeta}_{\tilde{S}}(x)$.  The following implications hold:
 \begin{eqnarray*}
  {\h}_{\G}(X) \neq \widehat{\h}_{\G}(X)
  &\Rightarrow&
  y^*(X) \neq \hat{y}^*(X) \\
  &\Rightarrow&
  \g_{y^*(X)}^\top \boldeta(X) > \g_{\hat{y}^*(X)}^\top \boldeta(X) \text{ and } \g_{y^*(X)}^\top \widehat{\boldeta}_{\tilde{S}}(X) < \g_{\hat{y}^*(X)}^\top \widehat{\boldeta}_{\tilde{S}}(X)
 \end{eqnarray*}
Using equation \ref{eqn:eta-hat-eta-close} the following  holds with probability  at least $1-\delta-\delta'$ over $X$ and $\tilde S$:
\begin{eqnarray*}
 {\h}_{\G}(X) \neq \widehat{\h}_{\G}(X)
 &\Rightarrow&
 \g_{\hat{y}^*(X)}^\top \boldeta(X) <  \g_{y^*(X)}^\top \boldeta(X) <  \g_{\hat{y}^*(X)}^\top \boldeta(X) +2\frac{\epsilon}{\delta'} \\
 &\Rightarrow&
 \left( \g_{y^*(X)} - \g_{\hat{y}^*(X)} \right) ^\top \boldeta(X) \in \left[0,\frac{2\epsilon}{\delta'}\right] \\
 &\Rightarrow&
 \exists y,y'\in[n], y\neq y' \text{ s.t. } \left( \g_{y} - \g_{y'} \right) ^\top \boldeta(X) \in \left[0,\frac{2\epsilon}{\delta'}\right] 
\end{eqnarray*}

For any $y,y'\in[n]$ with $y\neq y'$  define the set $\A_{y,y'}\subseteq\Delta_n$ as
$$\cA_{y,y'}=\{\p\in\Delta_n: \left( \g_{y} - \g_{y'} \right) ^\top \p \in \left[0,2\epsilon/\delta'\right] \} \;$$

We thus have that ${\h}_{\G}(X) = \widehat{\h}_{\G}(X)$ with probability at least $1-\delta-\delta'-\sum_{y,y'\in[n], y\neq y'} \nu(\cA_{y,y'})$. 

As $\G$ has no two identical columns we have that $\left( \g_{y} - \g_{y'} \right)^\top \p$ is never zero for all $\p\in\Delta_n$. Let $y,y'\in[n]$ with $y\neq y'$. If $\g_{y}-\g_{y'}=d\e$ for some  and $d>0$, we have that $\cA_{y,y'}$ is empty for small enough $\frac{\epsilon}{\delta'}$. Otherwise, we have by Lemma \ref{lem:volume-invese-linear-map} that $\mu(\cA_{y,y'})$ approaches $0$ as $\frac{\epsilon}{\delta'}$ approaches $0$. And by the absolute continuity of $\nu$ w.r.t. $\mu$ , we have that $\nu(\cA_{y,y'})$ also approaches $0$ as $\frac{\epsilon}{\delta'}$ approaches $0$.

Thus by having $\epsilon,\delta,\delta'$ and $\frac{\epsilon}{\delta'}$ simultaneously approach zero, we have that the probability of the statement  ${\h}_{\G}(X) = \widehat{\h}_{\G}(X)$ approaches $1$. And hence $\forall i,j,\,\big[\conf(\widehat{\h}_{\G}, D)\big]_{ij} \,\xrightarrow{P} \, \big[\conf({\h}_{\G}, D)\big]_{ij}$.

\end{proof}

\subsection{Proof of Lemma \ref{lem:uconvg-conf} }
\begin{lem*}[\textbf{Uniform Convergence Generalization Bound for $\conf$ Over $\H_{{\bmu}}$}]
Let ${\bmu}: \X \> \R^n$ be a fixed function and ${\tilde{S}} \in (\X \times [n])^{\tilde{m}}$ be a sample drawn i.i.d. according to $D^{\tilde{m}}$. For any $\delta \in [0,1]$, we have with probability at least $1-\delta$ (over draw of $\tilde{S}$ from $D^{\tilde{m}}$),
\[
\sup_{\h \,\in\, \H_{{\bmu}}}\,\big\|\conf(\h, D) \,-\, \conf(\h, D_{\tilde{S}})\big\|_\infty \,\,\leq\,\, 
C\sqrt{\frac{n^2\log(n)\log(\tilde m) + \log(n^2/\delta)}{\tilde m}},
\]
where $C > 0$ is a distribution-independent constant.
\end{lem*}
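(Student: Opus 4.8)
The plan is to reduce the stated bound to a standard Vapnik--Chervonenkis (VC) uniform convergence argument applied coordinate-wise to the confusion matrix, followed by a union bound over its $n^2$ entries. First I would observe that every $\h \in \H_{\bmu}$ is a deterministic classifier, so for each pair $i,j \in [n]$ we have $[\conf(\h,D)]_{ij} = \E_{(X,Y)\sim D}[\1(Y=i)\,h_j(X)]$ and $[\conf(\h,D_{\tilde S})]_{ij} = \E_{(X,Y)\sim D_{\tilde S}}[\1(Y=i)\,h_j(X)]$; hence the $(i,j)$ entry of $\conf(\h,D) - \conf(\h,D_{\tilde S})$ is the deviation between the population and empirical means of the $\{0,1\}$-valued function $(x,y)\mapsto \1(y=i)\,h_j(x)$. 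Writing $\mathcal{F}_{ij} = \{ (x,y) \mapsto \1(y=i)\, \1(j = \overline{\argmax}_{y'\in[n]} \g_{y'}^\top \bmu(x)) : \G \in \R^{n\times n} \}$, this gives
\[
\sup_{\h\in\H_{\bmu}}\big\|\conf(\h,D) - \conf(\h,D_{\tilde S})\big\|_\infty = \max_{i,j\in[n]}\,\sup_{f\in\mathcal{F}_{ij}}\big|\E_D[f] - \E_{D_{\tilde S}}[f]\big|,
\]
so it suffices to control each of the $n^2$ inner suprema by VC theory.

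Next I would bound the VC dimension of $\mathcal{F}_{ij}$. Each concept in $\mathcal{F}_{ij}$ is the intersection of the fixed set $\{(x,y):y=i\}$ with a set of the form $\{(x,y):j=\overline{\argmax}_{y'}\g_{y'}^\top\bmu(x)\}$, so the VC dimension of $\mathcal{F}_{ij}$ is at most that of the class $\mathcal{G}_j = \{x\mapsto \1(j = \overline{\argmax}_{y'}\g_{y'}^\top\bmu(x)) : \G\in\R^{n\times n}\}$ over $\X$. By the $\overline{\argmax}$ tie-breaking convention, $\{x : j = \overline{\argmax}_{y'}\g_{y'}^\top\bmu(x)\}$ equals $\{x : (\g_j-\g_k)^\top\bmu(x)\geq 0 \text{ for all } k<j\}\cap\{x : (\g_j-\g_k)^\top\bmu(x) > 0 \text{ for all } k>j\}$, i.e.\ an intersection of $n-1$ homogeneous halfspaces in the fixed feature space $z=\bmu(x)\in\R^n$. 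Each homogeneous halfspace class in $\R^n$ has VC dimension at most $n$, and the parameter coupling across these $n-1$ halfspaces (they all share $\g_j$) can only shrink the class relative to the class of all intersections of $n-1$ arbitrary halfspaces. Invoking the standard bound on the VC dimension of intersections of $k$ concept classes each of VC dimension at most $d$ (which is $O(dk\log k)$) with $k=n-1$ and $d=n$, I conclude that $\mathcal{G}_j$, and hence $\mathcal{F}_{ij}$, has VC dimension $O(n^2\log n)$.

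Finally I would invoke the classical Vapnik--Chervonenkis uniform deviation bound: for any $\{0,1\}$-valued function class with VC dimension $d$ and any $\delta'\in(0,1)$, with probability at least $1-\delta'$ over the draw of $\tilde S$, $\sup_{f}|\E_D[f]-\E_{D_{\tilde S}}[f]| \leq C_0\sqrt{(d\log\tilde m + \log(1/\delta'))/\tilde m}$ for a universal constant $C_0$. Applying this to each $\mathcal{F}_{ij}$ with $\delta'=\delta/n^2$, substituting $d = O(n^2\log n)$, and taking a union bound over the $n^2$ pairs $(i,j)$ yields, with probability at least $1-\delta$,
\[
\sup_{\h\in\H_{\bmu}}\big\|\conf(\h,D) - \conf(\h,D_{\tilde S})\big\|_\infty \leq C\sqrt{\frac{n^2\log(n)\log(\tilde m) + \log(n^2/\delta)}{\tilde m}}
\]
for a distribution-independent constant $C$, which is the claimed bound.

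The main obstacle will be the second step: pinning down the $O(n^2\log n)$ VC-dimension bound for $\mathcal{G}_j$. One has to justify both that the shared-parameter structure of the $n-1$ defining halfspaces is harmless (so the generic intersection bound applies), and that the mixture of weak and strict inequalities coming from the tie-breaking rule is immaterial, since open and closed homogeneous halfspaces in $\R^n$ have the same VC dimension. An alternative route that sidesteps the intersection lemma is to bound the growth function of $\H_{\bmu}$ on the sample directly, by counting the sign patterns realized by the $\binom{n}{2}$ pairwise comparison functionals $(\g_k-\g_l)^\top\bmu(x_r)$ --- these form $O(\tilde m\, n^2)$ linear functionals in $O(n^2)$ parameters --- via a standard hyperplane-arrangement bound; this produces the same order of magnitude.
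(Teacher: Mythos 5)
Your proposal is correct and follows essentially the same route as the paper's proof: reduce each confusion-matrix entry to a uniform deviation over the class of sets $\{x : j = \overline{\argmax}_{y'} \g_{y'}^\top\bmu(x)\}$, which are intersections of homogeneous halfspaces in the $\bmu$-feature space, bound its VC dimension by $O(n^2\log n)$ via the standard intersection-of-classes bound (the paper cites Lemma 3.2.3 of Blumer et al.\ 1989), and finish with the classical VC deviation bound plus a union bound over the $n^2$ entries. Your extra remarks on the shared-parameter coupling and the strict/weak tie-breaking inequalities only make explicit details the paper leaves implicit.
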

\begin{proof}
First observe that every function $\h\in\H_\bmu$ is such that for all $\x\in\X$, the vector $\h(\x)$ is always one of the co-ordinate vectors in $\R^n$.
For any  $a,b\in[n]$ we have,
\begin{eqnarray*}
\sup_{\h\in\H_\bmu}\left| [\conf(\h,D_{\tilde S})]_{a,b} - [\conf(\h,D)]_{a,b} \right| 
&=&
\sup_{\h\in\H_\bmu}\left| \frac{1}{\tilde m}\sum_{i=1}^{\tilde m} \left(\1(y_i=a, h_b(x_i)=1) - \E [\1(Y=a,h_b(X)=1)] \right) \right| \\
&=&
\sup_{h\in\H^b_\bmu}\left| \frac{1}{\tilde m}\sum_{i=1}^{\tilde m} \left(\1(y_i=a, h(x_i)=1) - \E [\1(Y=a,h(X)=1)] \right) \right|\;, \\
\end{eqnarray*}
where $\H^b_\bmu=\{h:\X\>\{0,1\}: \exists \G\in \R^{n\times n}, \forall \x\in\X, h(\x)=\1(b= \argmax_{t\in[n]}\g_t^\top \bmu(\x) ) \}$. The set $\H^b_{\bmu}$ can be seen as hypothesis class whose concepts are the intersection of $n$ halfspaces in $\R^n$ (corresponding to $\bmu(\x)$) through the origin.  Hence we have from Lemma 3.2.3 of Blumer et al. (1989) \cite{Blumer+89} that the VC-dimension of $\H^b_{\bmu}$ is at most $2n^2\log (3n)$. From standard uniform convergence arguments we have that the following holds with probability $1-\delta$,
\begin{eqnarray*}
\sup_{\h\in\H_\bmu}\left| [\conf(\h,D_{\tilde S})]_{a,b} - [\conf(\h,D)]_{a,b} \right| 
\leq 
C \sqrt{\frac{n^2\log(n)\log(\tilde m) + \log(\frac{1}{\delta})}{\tilde m} }
\end{eqnarray*}
where $C>0$ is some  constant.
Applying union bound for all $a,b \in [n]$ we have that the following holds with probability $1-\delta$
\begin{eqnarray*}
\sup_{\h\in\H_\bmu}\left|\left| [\conf(\h,D_{\tilde S})] - [\conf(\h,D)] \right|\right|_\infty
\leq
C \sqrt{\frac{n^2\log(n)\log(\tilde m) + \log(\frac{n^2}{\delta})}{\tilde m} }
\end{eqnarray*}

\end{proof}

\subsection{Proof of Lemma \ref{lem:regret-bound-linear-psi-est-cond-prob}}
\begin{lem*}[\textbf{Regret Bound for Linear/Decomposable Performance Metric with Bounded Gain Matrix}]
Let $\G \in [-L,L]^{n\times n}$ be a fixed gain matrix. Let $\widehat{\boldeta}: \X \> \Delta_n$ be a class probability estimation model and $\widehat{\h}_{\G}: \X \> \Delta_n$ be a classifier constructed such that $\big[\widehat{\h}_{\G}(x)\big]_i = 1 \text{  if  } i = \overline{\argmax}_{y \in [n]} {\g}_{y}^\top \widehat{\boldeta}(x)$. We then have
\begin{equation*}
\max_{\h:\X\>\Delta_n} \langle \G , \conf(\h , D) \rangle \,-\, \langle \G , \conf(\widehat{\h}_\G,D) \rangle \,\,\leq\,\, 2L\E_X\big[\big\|\widehat{\boldeta}(X) \,-\, \boldeta(X)\big\|_1\big].
\end{equation*}
\end{lem*}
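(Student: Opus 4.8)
Here is how I would prove \Lem{lem:regret-bound-linear-psi-est-cond-prob}. The plan is to rewrite the decomposable objective $\langle \G, \conf(\cdot, D)\rangle$ as an expectation of a pointwise inner product, observe that both the true maximizer and the plug-in classifier $\widehat{\h}_\G$ act at each $x$ by selecting a single column index of $\G$, and then control the resulting gap by inserting $\pm\widehat{\boldeta}(x)$ and using the optimality of the plug-in selection with respect to $\widehat{\boldeta}$.

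First I would note that, since $[\conf(\h,D)]_{i,j} = \E_X[h_j(X)\eta_i(X)]$ and $\g_j$ denotes the $j$-th column of $\G$,
\[
\langle \G, \conf(\h,D) \rangle \,=\, \E_X\Big[\textstyle\sum_{j=1}^n h_j(X)\,\g_j^\top\boldeta(X)\Big] \qquad \text{for every } \h:\X\>\Delta_n .
\]
Because the inner sum is a convex combination of the numbers $\{\g_y^\top\boldeta(X)\}_{y\in[n]}$, the supremum over randomized classifiers is attained pointwise, so $\max_{\h:\X\>\Delta_n}\langle\G,\conf(\h,D)\rangle = \E_X[\max_{y\in[n]}\g_y^\top\boldeta(X)]$. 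Likewise, writing $j'(x) = \overline{\argmax}_{y\in[n]}\g_y^\top\widehat{\boldeta}(x)$ for the index chosen by the plug-in rule, we get $\langle\G,\conf(\widehat{\h}_\G,D)\rangle = \E_X[\g_{j'(X)}^\top\boldeta(X)]$. Hence the left-hand side of the lemma equals $\E_X\big[\max_{y}\g_y^\top\boldeta(X) - \g_{j'(X)}^\top\boldeta(X)\big]$.

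Next I would bound the integrand pointwise. Fix $x$ and let $j^*\in\argmax_{y}\g_y^\top\boldeta(x)$. Adding and subtracting $\g_{j^*}^\top\widehat{\boldeta}(x)$ and $\g_{j'(x)}^\top\widehat{\boldeta}(x)$,
\[
\g_{j^*}^\top\boldeta(x) - \g_{j'(x)}^\top\boldeta(x) \;=\; \g_{j^*}^\top(\boldeta(x)-\widehat{\boldeta}(x)) \;+\; \big(\g_{j^*}^\top\widehat{\boldeta}(x)-\g_{j'(x)}^\top\widehat{\boldeta}(x)\big) \;+\; \g_{j'(x)}^\top(\widehat{\boldeta}(x)-\boldeta(x)) .
\]
The middle bracket is $\le 0$ by definition of $j'(x)$ as an $\argmax$ of $y\mapsto\g_y^\top\widehat{\boldeta}(x)$; the first and third terms are each at most $\|\g_y\|_\infty\,\|\widehat{\boldeta}(x)-\boldeta(x)\|_1 \le L\,\|\widehat{\boldeta}(x)-\boldeta(x)\|_1$ by H\"older's inequality together with $\G\in[-L,L]^{n\times n}$. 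So the integrand is at most $2L\,\|\widehat{\boldeta}(x)-\boldeta(x)\|_1$, and taking $\E_X$ yields the claim.

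I do not expect a genuine obstacle here. The only point that warrants a remark is that the tie-breaking convention $\overline{\argmax}$ in the plug-in rule plays no role: any measurable selection $j'(x)$ from the set $\argmax_{y}\g_y^\top\widehat{\boldeta}(x)$ satisfies the inequality $\g_{j^*}^\top\widehat{\boldeta}(x)\le\g_{j'(x)}^\top\widehat{\boldeta}(x)$ used above, and measurability of $x\mapsto j'(x)$ (hence well-definedness of $\conf(\widehat{\h}_\G,D)$) is standard.
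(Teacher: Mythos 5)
Your proof is correct and follows essentially the same route as the paper's: both insert $\pm\widehat{\boldeta}$, use the pointwise optimality of the plug-in selection with respect to $\widehat{\boldeta}$ to drop the middle term, and finish with H\"older's inequality and the bound $\|\G\|_\infty \le L$. The only difference is cosmetic — you argue pointwise in scalar/index notation, while the paper keeps everything inside expectations in matrix--vector form.
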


\begin{proof}

Let  $\h^*:\X\>\Delta_n$ be such that
\begin{equation*}
h^{*}_i(x) = 1  ~~\text{ if }~~ i \in \overline{\argmax}_{y \in [n]} \g_{y}^\top {\boldeta}(x)  ~.
\end{equation*}
Hence by Equation \ref{eqn:Bayes-decomposable} we have that
\begin{eqnarray*}
 \h^* &\in& \argmax_{\h:\X\> \Delta_n} \langle \G, \conf(\h,D) \rangle \;.
\end{eqnarray*}

We have that
\begin{eqnarray*}
\lefteqn{\max_{\h:\X\>\Delta_n} \langle \G , \conf(\h , D) \rangle \,-\, \langle \G , \conf(\widehat{\h}_\G,D) \rangle }\\[0.1em]
&=&
 \langle \G , \conf(\h^* , D) \rangle \,-\, \langle \G , \conf(\widehat{\h}_\G,D) \rangle  \\[0.1em]
&=&
\E_X  [{\boldeta}(X)]^\top [\G \h^*(X)]  - \E_X [{\boldeta}(X)]^\top [\G \widehat\h_\G(X)] \\[0.1em]
&=&
\E_X  [{\boldeta}(X)]^\top [\G \h^*(X)]  - \E_X [\boldeta(X) - \hat\boldeta(X)]^\top [\G \widehat\h_\G(X)] -  \E_X [\hat\boldeta(X)]^\top [\G \widehat\h_\G(X)]\\[0.1em]
&\leq &
\E_X  [{\boldeta}(X)]^\top [\G \h^*(X)]  - \E_X [\boldeta(X) - \hat\boldeta(X)]^\top [\G \widehat\h_\G(X)] -  \E_X [\hat\boldeta(X)]^\top [\G \h^*(X)]\\[0.1em]
&=&
\E_X [\boldeta(X) - \hat\boldeta(X)]^\top [\G] [\h^*(X)- \widehat\h_\G(X)] \\[0.1em]
&\leq&
2L\E_X ||{\boldeta}(X)-\hat{\boldeta}(X)||_1  \;. 
\end{eqnarray*}
\end{proof}

\subsection{Proof of Theorem \ref{thm:regret-bound-brute-force-plug-in}}
\begin{thm*}[\textbf{Regret Bound for Brute-force Plug-in Algorithm for Convex-like Non-decomposable Metrics}]
Let $D$ satisfy Assumption A, and $\psi: [0, 1]^{n\times n} \> \R_+$ satisfy Assumption B w.r.t. $D$. Furthermore, let $\psi$ be $L$-Lipschitz w.r.t. the $\ell_1$ norm over $\CC_D$, and be such that there exists $\xi > 0$ such that $\psi(\C) - \psi(\C') \,\leq\, \xi\langle \nabla\psi(\C), \C - \C' \rangle,\,\forall \C,\, \C' \in \CC_D$. If $\widehat{\h}_S$ is the classifier learned by Algorithm \ref{algo:brute-force-plug-in-non-decomposable} using training sample $S = (S', S'') \in (\X \times [n])^m$ with parameter $\alpha \in (0,1)$, then for any $\delta \in [0,1]$, we have with probability at least $1-\delta$ (over draw of $S$ from $D^m$):
\[
\reg_D^\psi[\widehat{\h}_{S}] \,\,\leq\,\, 2L\xi\E_X\big[\big\|\widehat{\boldeta}_{S'}(X) \,-\, \boldeta(X)\big\|_1\big] \,+\, 
2LC\sqrt{\frac{n^2\log(n)\log(\alpha m) + \log(n^2/\delta)}{\alpha m}},
\]
where $C > 0$ is a distribution-independent constant.
\end{thm*}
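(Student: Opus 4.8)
The proof reuses the decomposition from the proof of \Thm{thm:consistency-brute-force} and makes each of the two resulting terms quantitative, using the convexity-like hypothesis on $\psi$ together with \Lem{lem:regret-bound-linear-psi-est-cond-prob} and \Lem{lem:uconvg-conf}. By \Thm{thm:suff-cond-equivalence} a $\psi$-optimal classifier $\h^*$ exists; write $\C^* = \conf(\h^*,D)$ and $\G^* = \nabla\psi(\C^*)$, and let $\h_{\G^*}$ and $\widehat{\h}_{\G^*}$ be the $\overline{\argmax}$ classifiers obtained by applying $\G^*$ to $\boldeta$ and to $\widehat{\boldeta}_{S'}$, respectively. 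Exactly as in that proof --- using $\Pf_D[\h^*] = \Pf_D[\h_{\G^*}]$, that $\widehat{\h}_S$ is the empirically best plug-in classifier over gain matrices in $[-1,1]^{n\times n}$, and that $\widehat{\h}_{\G^*}$ coincides with $\widehat{\h}_{\G^*/\|\G^*\|_\infty}$ by scale-invariance of plug-in classifiers (so it lies in the searched family) --- one obtains
\[
\reg_D^\psi[\widehat{\h}_S] \ \le\ \underbrace{\Pf_D[\h_{\G^*}] - \Pf_D[\widehat{\h}_{\G^*}]}_{\term_A} \ +\ 2\underbrace{\sup_{\h \in \H_{\widehat{\boldeta}_{S'}}} \bigl| \Pf_D[\h] - \Pf_{D_{S''}}[\h] \bigr|}_{\term_B}.
\]

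The new ingredient, and the heart of the argument, is the bound on $\term_A$. The key observation is that $\conf(\h_{\G^*},D) = \C^*$: indeed $\h_{\G^*}$ satisfies \Eqn{eqn:Bayes-decomposable} for the linear objective $\langle\G^*,\conf(\cdot,D)\rangle$, and by \Thm{thm:suff-cond-equivalence} every maximizer of that objective has confusion matrix exactly $\C^*$. Hence $\nabla\psi(\conf(\h_{\G^*},D)) = \G^*$, and applying the convexity-like inequality with $\C = \C^*$, $\C' = \conf(\widehat{\h}_{\G^*},D)$ (both in $\CC_D$) gives
\[
\term_A \ =\ \psi(\C^*) - \psi\bigl(\conf(\widehat{\h}_{\G^*},D)\bigr) \ \le\ \xi\,\bigl\langle \G^*,\ \C^* - \conf(\widehat{\h}_{\G^*},D) \bigr\rangle \ =\ \xi\Bigl( \max_{\h} \langle \G^*,\conf(\h,D)\rangle - \langle \G^*,\conf(\widehat{\h}_{\G^*},D)\rangle \Bigr),
\]
the last equality because $\h_{\G^*}$ attains $\max_{\h}\langle\G^*,\conf(\h,D)\rangle$. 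Since $\psi$ is $L$-Lipschitz w.r.t.\ the $\ell_1$ norm, the entries of $\G^* = \nabla\psi(\C^*)$ are bounded by $L$, so $\widehat{\h}_{\G^*}$ is exactly the plug-in classifier covered by \Lem{lem:regret-bound-linear-psi-est-cond-prob}, which yields $\term_A \le 2L\xi\,\E_X\bigl[\|\widehat{\boldeta}_{S'}(X) - \boldeta(X)\|_1\bigr]$ --- the first term of the claimed bound.

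For $\term_B$ one conditions on $S'$ (so that $\widehat{\boldeta}_{S'}$ is a fixed function and $\H_{\widehat{\boldeta}_{S'}}$ a fixed weighted-$\argmax$ family) and applies \Lem{lem:uconvg-conf} with $\bmu = \widehat{\boldeta}_{S'}$, $\tilde{S} = S''$, and $\tilde{m} = m_2 = \lceil\alpha m\rceil$: with probability at least $1-\delta$ over the draw of $S''$,
\[
\sup_{\h \in \H_{\widehat{\boldeta}_{S'}}} \bigl\| \conf(\h,D) - \conf(\h,D_{S''}) \bigr\|_\infty \ \le\ C\sqrt{\tfrac{n^2\log(n)\log(\alpha m) + \log(n^2/\delta)}{\alpha m}}.
\]
Invoking the $L$-Lipschitzness of $\psi$ to pass from this confusion-matrix deviation to a deviation of $\Pf$ (modulo converting between the $\ell_\infty$ and $\ell_1$ norms) bounds $2\term_B$ by the second term of the claim, and a union bound over this single probabilistic event finishes the proof, since the $\term_A$ bound holds surely given $S'$. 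I expect the main obstacle to be the $\term_A$ step --- specifically the realization that $\conf(\h_{\G^*},D)$ equals the \emph{optimal} confusion matrix $\C^*$, so that the gradient of $\psi$ there is $\G^*$ itself and the convexity-like inequality collapses $\term_A$ into a purely linear regret controlled by \Lem{lem:regret-bound-linear-psi-est-cond-prob}; without this identification one is left with a residual term from the convexity-like bound that cannot be closed with the available tools.
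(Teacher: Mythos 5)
Your proposal is correct and follows essentially the same route as the paper's proof: the same decomposition into $\term_A$ and $2\,\term_B$, the convexity-like inequality applied at $\conf(\h_{\G^*},D)=\C^*$ (so that the gradient there is $\G^*$, a fact the paper uses implicitly), Lemma \ref{lem:regret-bound-linear-psi-est-cond-prob} with $\|\G^*\|_\infty \le L$ for $\term_A$, and Lemma \ref{lem:uconvg-conf} conditioned on $S'$ together with the $L$-Lipschitzness of $\psi$ for $\term_B$. Your extra remarks (scale-invariance placing $\widehat{\h}_{\G^*}$ in the searched family, and the $\ell_\infty$-to-$\ell_1$ conversion in the uniform-convergence step) are points the paper glosses over, and handling them explicitly only strengthens the argument.
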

\begin{proof}
By Theorem \ref{thm:suff-cond-equivalence}, a $\psi$-optimal classifier exists. Let $\h^*: \X \> \Delta_n$ be one such classifier and let $\G^*=\nabla\psi(\conf(\h^*,D))$. Further, let $\h_{\G^*}: \X \> \Delta_n$ be a classifier such that $\big[\h_{\G^*}(x)]_i = 1$  if $i = \overline{\argmax}_{y \in [n]} \g_{y}^{*\top} {\boldeta}(x)$; then again by Theorem \ref{thm:suff-cond-equivalence}, $\Pf_D[\h_{\G^*}] = \Pf_D[\h^*]$. Also let $\widehat\h_{\G^*}: \X \> \Delta_n$ be  such that $\big[\widehat\h_{\G^*}(x)]_i = 1$  if $i = \overline{\argmax}_{y \in [n]} \g_{y}^{*\top} {\hat\boldeta_{S'}}(x)$. Thus,
\begin{eqnarray*}
{\reg_D^\psi[\widehat{\h}_S]}
&=& \Pf_D[\h^*] \,-\, \Pf_D[\widehat{\h}_S]\\
&=& \Pf_D[\h_{\G^*}] \,-\, \Pf_D[\widehat{\h}_S]\\
&=& \Pf_D[\h_{\G^*}] \,-\, \Pf_D[\widehat{\h}_{\G^*}]  \,\,+\,\, \Pf_D[\widehat{\h}_{\G^*}] \,-\, {\Pf}_{D_{S''}}[\widehat{\h}_{\G^*}] \,\,+\,\, {\Pf}_{D_{S''}}[\widehat{\h}_{\G^*}] \,-\, \Pf_D[\widehat{\h}_S]\\
&\leq& \Pf_D[\h_{\G^*}] \,-\, \Pf_D[\widehat{\h}_{\G^*}]  \,\,+\,\, \Pf_D[\widehat{\h}_{\G^*}] \,-\, {\Pf}_{D_{S''}}[\widehat{\h}_{\G^*}] \,\,+\,\, {\Pf}_{D_{S''}}[\widehat{\h}_{S}] \,-\, \Pf_D[\widehat{\h}_S]\\
&\leq& \Pf_D[\h_{\G^*}] \,-\, \Pf_D[\widehat{\h}_{\G^*}]  \,\,+\,\,
		\sup_{\h \,\in\, \H_{\widehat{\boldeta}_{S'}}}\big(\Pf_D[\h] \,-\, {\Pf}_{D_{S''}}[\h]\big) \,\,+\,\,
		\sup_{\h \,\in\, \H_{\widehat{\boldeta}_{S'}}}\big({\Pf}_{D_{S''}}[\h] \,-\, \Pf_D[\h]\big)\\
&=& \Pf_D[\h_{\G^*}] \,-\, \Pf_D[\widehat{\h}_{\G^*}]  \,\,+\,\,
		2 \sup_{\h \,\in\, \H_{\widehat{\boldeta}_{S'}}}\,\big|\Pf_D[\h] \,-\, {\Pf}_{D_{S''}}[\h]\big| \\
&=& \psi(\conf(\h_{\G^*},D)) -  \psi(\conf(\hat\h_{\G^*},D)) \,+\,
2 \sup_{\h \,\in\, \H_{\widehat{\boldeta}_{S'}}}\,\big|\psi(\conf(\h,D) \,-\, \psi(\conf(\h,D_{S''}))\big| \\
&\leq&
\xi \left( \langle \G^*, \conf(\h_{\G^*},D) \rangle -  \langle \G^*, \conf(\widehat\h_{\G^*},D) \rangle \right) \, + \,
2L \sup_{\h \,\in\, \H_{\widehat{\boldeta}_{S'}}}\,\big|\big|\conf(\h,D) \,-\, \conf(\h,D_{S''})\big|\big|_1 \\
&\leq&
2L\xi\E_X\big[\big\|\widehat{\boldeta}_{S'}(X) \,-\, \boldeta(X)\big\|_1\big] \,+\, 
2LC\sqrt{\frac{n^2\log(n)\log(\alpha m) + \log(n^2/\delta)}{\alpha m}},
\end{eqnarray*}
where the fourth step follows by definition of $\widehat{\h}_{S}$, the previous to last step follows from the `convexity-like' assumption on $\psi$,  and the last step follows from Lemmas \ref{lem:regret-bound-linear-psi-est-cond-prob} and \ref{lem:uconvg-conf}.
\end{proof}

\subsection{Proof of Theorem \ref{thm:regret-bound-fw-idealized}}
\begin{thm*}[\textbf{Regret Bound for (Idealized) Conditional Gradient Algorithm for Concave Smooth Non-decomposable Metrics}]
Let $\psi: [0, 1]^{n\times n} \> \R_+$ be concave over $\CC_D$ and $\beta$-smooth w.r.t. the $\ell_1$-norm  over $\CC_D$ . Let ${\h}^\FW$ be the classifier learned by Algorithm \ref{algo:FW-idealized} with parameters $\kappa \in \N$ and $\epsilon > 0$. Then
\[
\reg_D^\psi[{\h}^\FW] \,\,\leq\,\,2\epsilon \,+\, \frac{8\beta}{\kappa m+2}.
\]
\end{thm*}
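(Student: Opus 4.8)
The plan is to treat Algorithm~\ref{algo:FW-idealized} as an instance of the conditional gradient (Frank--Wolfe) method applied to the concave maximization problem $\max_{\C \in \CC_D} \psi(\C)$ over the convex feasible set $\CC_D$ (convexity guaranteed by Proposition~\ref{prop:convexity-cc}), and then invoke the standard CG convergence analysis adapted to the approximate-linear-maximization setting of Jaggi~\cite{Jaggi13}. The first observation is that the iterates $\h^j$ of the algorithm correspond exactly to confusion matrices $\C^j := \conf(\h^j, D)$ that evolve by the convex-combination update $\C^j = (1-\gamma_j)\C^{j-1} + \gamma_j \V^j$, where $\gamma_j = \frac{2}{j+1}$ and $\V^j := \conf(\u^j, D) \in \CC_D$; this follows because the $\conf(\cdot, D)$ map is linear in the classifier (linearity of expectation). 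Moreover, the approximate linear maximization step ensures $\langle \G^j, \V^j \rangle \geq \max_{\C \in \CC_D} \langle \G^j, \C \rangle - \epsilon$, where $\G^j = \nabla\psi(\C^{j-1})$, i.e., $\V^j$ is an $\epsilon$-approximate solution to the linear oracle over $\CC_D$.

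Next I would run the standard descent-lemma argument. Using $\beta$-smoothness of $\psi$ w.r.t.\ $\|\cdot\|_1$ and concavity, one gets for each step
\[
\psi(\C^j) \,\geq\, \psi(\C^{j-1}) + \gamma_j \langle \G^j, \V^j - \C^{j-1}\rangle - \frac{\beta \gamma_j^2}{2}\, \|\V^j - \C^{j-1}\|_1^2.
\]
Since all confusion matrices lie in $[0,1]^{n\times n}$ with row sums summing to $1$ in total, $\|\V^j - \C^{j-1}\|_1 \leq 2$, so the curvature term is bounded by $2\beta\gamma_j^2$. For the linear term, let $\C^* \in \argmax_{\C \in \CC_D}\psi(\C)$; by the $\epsilon$-approximate oracle property and concavity ($\psi(\C^*) - \psi(\C^{j-1}) \leq \langle \G^j, \C^* - \C^{j-1}\rangle$), we have $\langle \G^j, \V^j - \C^{j-1}\rangle \geq \langle \G^j, \C^* - \C^{j-1}\rangle - \epsilon \geq (\psi(\C^*) - \psi(\C^{j-1})) - \epsilon$. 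Writing $h_j := \psi(\C^*) - \psi(\C^j)$ for the primal gap, this yields the recursion
\[
h_j \,\leq\, (1 - \gamma_j) h_{j-1} + \gamma_j \epsilon + 2\beta\gamma_j^2.
\]
A standard induction on this recursion with $\gamma_j = \frac{2}{j+1}$ gives $h_T \leq \frac{8\beta}{T+2} + 2\epsilon$; the $2\epsilon$ term arises because $\sum$ of the $\gamma_j$-weighted $\epsilon$ contributions, after the telescoping/induction, accumulates to a constant multiple of $\epsilon$ (the factor $2$ being standard for this step-size schedule). Substituting $T = \kappa m$ and noting $\reg_D^\psi[\h^\FW] = -h_T$ in absolute value, i.e.\ $\psi(\C^*) - \psi(\conf(\h^T,D)) \le \frac{8\beta}{\kappa m + 2} + 2\epsilon$, gives the claim, since $\psi(\C^*) = \cP^*_D$ by optimality of $\C^*$ over $\overline{\CC}_D$ (the sup over classifiers equals the max over $\CC_D$, which by compactness of $\overline\CC_D$ and continuity of $\psi$ equals the max over $\overline\CC_D$).

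The main subtlety — not a deep obstacle but the place requiring care — is the induction that converts the one-step recursion $h_j \leq (1-\gamma_j)h_{j-1} + \gamma_j\epsilon + 2\beta\gamma_j^2$ into the closed-form bound $\frac{8\beta}{T+2} + 2\epsilon$: one must verify the base case, handle the first iteration where $\gamma_1 = 1$ kills the $h_0$ dependence entirely (so no assumption on the initial classifier $\h^0$ is needed), and check that the $\epsilon$-accumulation does not blow up (it does not, because $\sum_j \gamma_j \prod_{k>j}(1-\gamma_k)$ telescopes to a bounded quantity for this schedule). A secondary point is justifying that $\max_{\u:\X\to\Delta_n}\langle \G^j, \conf(\u,D)\rangle = \max_{\C\in\CC_D}\langle \G^j, \C\rangle$, which is immediate from the definition of $\CC_D$, and that this max is attained — which follows from Equation~\eqref{eqn:Bayes-decomposable} exhibiting an explicit maximizing deterministic classifier. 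Everything else is bookkeeping with the $\ell_1$-diameter bound on $\CC_D$.
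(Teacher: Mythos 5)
Your proposal is correct, and it reaches the stated bound by re-deriving the conditional-gradient convergence analysis from scratch rather than invoking it as a black box, which is what the paper does: the paper's proof bounds the curvature constant $C_\psi \leq 4\beta$ (using the same $\ell_1$-diameter-$\leq 2$ observation and $\beta$-smoothness), converts the fixed additive oracle error $\epsilon$ into Jaggi's relative accuracy parameter via $\delta_\apx \leq (T+1)\epsilon/C_\psi$, and then cites Theorem 1 of \cite{Jaggi13} to get $\frac{2C_\psi}{T+2}(1+\delta_\apx) \leq \frac{8\beta}{T+2} + 2\epsilon$. You instead keep the additive error $\epsilon$ in the one-step recursion $h_j \leq (1-\gamma_j)h_{j-1} + \gamma_j\epsilon + 2\beta\gamma_j^2$ and close it by induction; this is self-contained, and in fact the induction with $\gamma_j = \frac{2}{j+1}$ yields the slightly sharper constant $h_T \leq \frac{8\beta}{T+2} + \epsilon$ (the inductive step needs only $c \geq 1$ in $h_j \leq \frac{8\beta}{j+2} + c\epsilon$, with the base case absorbed since $\gamma_1 = 1$), which of course implies the claimed $2\epsilon$ bound. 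Two small points to tighten: the descent inequality with the $\ell_1$-norm squared follows from the paper's definition of $\beta$-smoothness ($\ell_\infty$ bound on gradient differences against $\ell_1$ perturbations) via H\"older along the segment, which stays in $\CC_D$ by \Prop{prop:convexity-cc}, so that should be said; and since Theorem \ref{thm:regret-bound-fw-idealized} does not assume Assumptions A or B, the maximizer $\C^*$ over $\CC_D$ need not be attained, so the argument is cleaner if you define $h_j = \sup_{\C \in \CC_D}\psi(\C) - \psi(\C^j)$ and take a supremum over $\C' \in \CC_D$ in the concavity-plus-oracle step, which works verbatim and avoids any appeal to attainment in $\overline{\CC}_D$ (the paper's own proof has the same cosmetic looseness in writing $\max_{\C\in\CC_D}$).
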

\begin{proof}
 We use the result from \cite{Jaggi13}. To apply this result we must upper bound the `curvature constant' $C_\psi$ of $\psi$ and the approximation factor $\delta$ (which we call $\delta_\apx$).
 
 \begin{eqnarray*}
 C_\psi 
 &=&
 \sup_{\C_1,\C_2\in\CC_D, \gamma\in[0,1]} \frac{2}{\gamma^2} \Big( \psi\big(\C_1+\gamma(\C_2-\C_1)\big) -\psi\big(\C_1\big) - \gamma \big\langle \C_2-\C_1,\nabla\psi(\C_1) \big\rangle \Big)\\
 &\leq&
 \sup_{\C_1,\C_2\in\CC_D, \gamma\in[0,1]} \frac{2}{\gamma^2} \Big( \frac{\beta}{2} \gamma^2||\C_1-\C_2||^2_1 )\\
 &=&
 4\beta
\end{eqnarray*}
where the second step follows from the $\beta$-moothness of $\psi$ over $\CC_D$ w.r.t. the $\ell_1$ norm, and the last step follows from the observation that the entries of $\C_1$ and $\C_2$ sum to $1$ and are non-negative.

One can also see that the approximation factor $\delta_\apx \leq \frac{(T+1)\epsilon}{C_\psi}$.
Theorem 1 from \cite{Jaggi13} gives us
\begin{eqnarray*}
 \reg_D^\psi[{\h}^\FW] &=& \max_{\C\in\CC_D} \psi(\C) - \psi(\conf(\h^\FW)) \\
 &\leq& \frac{2C_\psi}{T+2}(1+\delta_\apx) \\
 &\leq& \frac{2C_\psi}{T+2}\left(1+\frac{(T+1)\epsilon}{C_\psi}\right) \\
 &\leq& \frac{8\beta}{T+2} + \frac{2(T+1)\epsilon}{T+2} \\
 &\leq& \frac{8\beta}{T+2} + 2\epsilon \\
 &=&\frac{8\beta}{\kappa m +2} + 2\epsilon.
\end{eqnarray*}
\end{proof}

\subsection{Proof of Lemma \ref{lem:approx-max-oracle}}

\begin{lem*}[\textbf{Approximation Factor for Linear Maximization Step in Algorithm \ref{algo:BFW}}]
Let $\psi: [0, 1]^{n\times n} \> \R_+$ satisfy the assumptions in Theorem \ref{thm:BFW-consistency}. Let $\widehat{\u}^j$ and $\widehat{\h}^{j}$ be the classifiers constructed in any given iteration $j$ of Algorithm \ref{algo:BFW} using training sample $S = (S', S'') \in (\X \times [n])^m$ and parameter $\alpha \in (0,1)$. Also, let $\G^j = \nabla \psi(\conf(\widehat{\h}^{j-1}, D))$. Then for any $\delta \in [0,1]$, we have with probability at least $1 - \delta$ (over draw of $S$ from $D^m$) for all $1\leq j\leq T$
\[
\langle {\G}^j , \conf(\widehat{\u}^j,D) \rangle \,\geq\, \underset{\u:\,\X \> \Delta_n}{\max}\,\langle {\G}^j , \conf(\u,D) \rangle \,-\, \widehat{\epsilon}_S,
\]
where
\[
\widehat{\epsilon}_S \,=\, 2L\E_X\big[\big\|\widehat{\boldeta}_{S'}(X) \,-\, \boldeta(X)\big\|_1\big] \,+\, 2C\beta n^2\sqrt{\frac{n^2\log(n)\log(\alpha m) + \log(n^2/\delta)}{\alpha m}},
\]
for a distribution-independent constant $C>0$.
\end{lem*}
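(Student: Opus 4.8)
The plan is to show that the classifier $\widehat{\u}^j$ computed in iteration $j$ of Algorithm \ref{algo:BFW} is an approximately optimal solution to the linear maximization step defined using the \emph{true} gradient $\G^j = \nabla\psi(\conf(\widehat{\h}^{j-1}, D))$, even though the algorithm actually uses the \emph{empirical} gradient $\widehat{\G}^j = \nabla\psi(\conf(\widehat{\h}^{j-1}, D_{S''}))$ computed on the held-out sample and applies it to the class probability estimator $\widehat{\boldeta}_{S'}$. There are two sources of error to control: (i) the gap between maximizing $\langle \widehat{\G}^j, \conf(\cdot, D)\rangle$ exactly and the plug-in solution $\widehat{\u}^j$ obtained from $\widehat{\boldeta}_{S'}$, and (ii) the discrepancy between $\G^j$ and $\widehat{\G}^j$ arising from estimating the confusion matrix $\conf(\widehat{\h}^{j-1}, \cdot)$ from a finite sample. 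I would first fix $j$ and bound these two terms, then take a union bound over $j = 1, \ldots, T$.

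First, for the plug-in error: since $\widehat{\G}^j$ has entries bounded by the Lipschitz constant $L$ (because $\psi$ is $L$-Lipschitz w.r.t.\ the $\ell_1$ norm over $\CC_{D_{S''}}$, its gradient has $\ell_\infty$-bounded entries on that set), Lemma \ref{lem:regret-bound-linear-psi-est-cond-prob} applies directly with gain matrix $\widehat{\G}^j$ and gives
\[
\max_{\u:\X\>\Delta_n} \langle \widehat{\G}^j, \conf(\u, D)\rangle \,-\, \langle \widehat{\G}^j, \conf(\widehat{\u}^j, D)\rangle \,\leq\, 2L\,\E_X\big[\big\|\widehat{\boldeta}_{S'}(X) - \boldeta(X)\big\|_1\big].
\]
Second, for the gradient-estimation error, I would write $\langle \G^j, \conf(\u, D)\rangle = \langle \widehat{\G}^j, \conf(\u, D)\rangle + \langle \G^j - \widehat{\G}^j, \conf(\u, D)\rangle$ for any $\u$, and bound the last term by $\|\G^j - \widehat{\G}^j\|_\infty \cdot \|\conf(\u, D)\|_1 = \|\G^j - \widehat{\G}^j\|_\infty$ (confusion matrix entries sum to $1$). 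By the $\beta$-smoothness of $\psi$, $\|\G^j - \widehat{\G}^j\|_\infty = \|\nabla\psi(\conf(\widehat{\h}^{j-1}, D)) - \nabla\psi(\conf(\widehat{\h}^{j-1}, D_{S''}))\|_\infty \leq \beta\|\conf(\widehat{\h}^{j-1}, D) - \conf(\widehat{\h}^{j-1}, D_{S''})\|_1 \leq \beta n^2 \|\conf(\widehat{\h}^{j-1}, D) - \conf(\widehat{\h}^{j-1}, D_{S''})\|_\infty$. It then remains to bound $\|\conf(\widehat{\h}^{j-1}, D) - \conf(\widehat{\h}^{j-1}, D_{S''})\|_\infty$ uniformly over all iterates. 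This is where Lemma \ref{lem:uconvg-conf} enters: each iterate $\widehat{\h}^{j-1}$ is a convex combination of plug-in (weighted-argmax) classifiers built from $\widehat{\boldeta}_{S'}$, so conditioning on $S'$, all iterates lie in the convex hull of $\H_{\widehat{\boldeta}_{S'}}$; since the confusion map is linear in the classifier and $\H_{\widehat{\boldeta}_{S'}}$ has the stated bounded complexity, the uniform bound
\[
\sup_{\h \in \H_{\widehat{\boldeta}_{S'}}} \|\conf(\h, D) - \conf(\h, D_{S''})\|_\infty \,\leq\, C\sqrt{\frac{n^2\log(n)\log(\alpha m) + \log(n^2/\delta)}{\alpha m}}
\]
holds with probability $\geq 1-\delta$ over $S''$, and carries over to the whole convex hull. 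Combining the two error terms and absorbing constants gives the claimed $\widehat{\epsilon}_S$.

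The main obstacle I anticipate is the uniformity over the $T = \kappa m$ iterations. The cleanest route is to observe that the bad event in Lemma \ref{lem:uconvg-conf} is a single event about $\sup_{\h \in \H_{\widehat{\boldeta}_{S'}}}\|\conf(\h,D) - \conf(\h, D_{S''})\|_\infty$, conditioned on $S'$ — it does not depend on $j$ at all, because every iterate $\widehat{\h}^{j-1}$ (a convex combination of classifiers in $\H_{\widehat{\boldeta}_{S'}}$) automatically inherits the bound once the supremum over the generating class is controlled. So no union bound over $j$ is actually needed; the single high-probability event suffices for all $1 \leq j \leq T$ simultaneously. One technical point to handle carefully is that $\widehat{\G}^j$ must have entries in $[-L, L]$ so that Lemma \ref{lem:regret-bound-linear-psi-est-cond-prob} applies with the right constant; this follows from the $L$-Lipschitz assumption on $\psi$ over $\CC_{D_{S''}}$, noting that $\conf(\widehat{\h}^{j-1}, D_{S''}) \in \CC_{D_{S''}}$, so the gradient there has $\ell_\infty$ norm at most $L$.
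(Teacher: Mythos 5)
Your proposal follows essentially the same route as the paper's proof: it uses Lemma \ref{lem:regret-bound-linear-psi-est-cond-prob} for the plug-in error with the empirical gradient $\widehat{\G}^j$, H\"older plus $\beta$-smoothness and the $\ell_1$--$\ell_\infty$ conversion to control $\|\G^j - \widehat{\G}^j\|_\infty$ via $\|\conf(\widehat{\h}^{j-1},D)-\conf(\widehat{\h}^{j-1},D_{S''})\|_\infty$, and the single high-probability event of Lemma \ref{lem:uconvg-conf} (extended to convex combinations of classifiers in $\H_{\widehat{\boldeta}_{S'}}$ by linearity of the confusion map) to cover all $T$ iterations without a union bound. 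The only point to make explicit when writing it out is that the gradient swap $\G^j \leftrightarrow \widehat{\G}^j$ is performed twice (once at the maximizer and once at $\widehat{\u}^j$), which is where the factor $2\beta n^2$ in $\widehat{\epsilon}_S$ comes from, exactly as in the paper.
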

\begin{proof}
 Let $1\leq j \leq T$.
 Let $\widehat \G^j= \nabla\psi(\conf(\widehat\h^{j-1},D_{S''}))$. Also let $\u^*\in\argmax_{\u:\X\>\Delta_n} \langle \G^j,\conf(\u,D) \rangle$. We then have by the definition of $\widehat\u^j$ and Lemma \ref{lem:regret-bound-linear-psi-est-cond-prob} that
 
 \begin{eqnarray}
\langle \widehat \G^j , \conf(\u^*,D) \rangle - \langle \widehat \G^j , \conf(\widehat\u^j,D) \rangle
&\leq&
\max_{\u:\X\>\Delta_n}  \langle \widehat \G^j , \conf(\u,D) \rangle - \langle \widehat \G^j , \conf(\widehat\u^j,D) \rangle \nonumber  \\
&\leq&
2L\E_X\big[\big\|\widehat{\boldeta}_{S'}(X) \,-\, \boldeta(X)\big\|_1\big] \label{eqn:lem-apx-factor-1}
 \end{eqnarray}

 Also
\begin{eqnarray}
\big|\big| \widehat \G^j - \G^j \big|\big|_\infty 
&=&
\big|\big| \nabla\psi(\conf(\widehat\h^{j-1},D_{S''})) -\nabla\psi(\conf(\widehat\h^{j-1},D)) \big|\big|_\infty \nonumber\\
&\leq&
\beta \big|\big| \conf(\widehat\h^{j-1},D_{S''}) -\conf(\widehat\h^{j-1},D) \big|\big|_1 \nonumber\\
&\leq&
\beta n^2 \big|\big| \conf(\widehat\h^{j-1},D_{S''}) -\conf(\widehat\h^{j-1},D) \big|\big|_\infty \nonumber\\
&\leq&
\beta n^2 \max_{k\in[j-1]} \big|\big| \conf(\widehat\u^{k},D_{S''}) -\conf(\widehat\u^{k},D) \big|\big|_\infty \nonumber\\
&\leq&
\beta n^2 \sup_{\h\in\H_{\hat\boldeta_{S'}}} \big|\big| \conf(\h,D_{S''}) -\conf(\h,D) \big|\big|_\infty 
\label{eqn:lem-apx-factor-2}
\end{eqnarray}

We then have
\begin{eqnarray}
\lefteqn{\underset{\u:\,\X \> \Delta_n}{\max}\,\langle {\G}^j , \conf(\u,D) \rangle - \langle {\G}^j , \conf(\widehat{\u}^j,D) \rangle} \nonumber \\
&=&
\langle {\G}^j , \conf(\u^*,D) \rangle - \langle {\G}^j , \conf(\widehat{\u}^j,D) \rangle \nonumber \\
&=&
  \langle {\G}^j , \conf(\u^*,D) \rangle   -  \langle {\widehat\G}^j , \conf(\u^*,D) \rangle
+ \langle {\widehat\G}^j , \conf(\u^*,D) \rangle   -  \langle {\G}^j , \conf(\widehat{\u}^j,D) \rangle \nonumber \\
&\leq&
\big|\big| \G^j - \widehat\G^j \big|\big|_\infty  \big|\big|\conf(\u^*,D)\big|\big|_1
+ \langle {\widehat\G}^j , \conf(\u^*,D) \rangle   -  \langle {\G}^j , \conf(\widehat{\u}^j,D) \rangle \nonumber \\
&=& \big|\big| \G^j - \widehat\G^j \big|\big|_\infty 
+ \langle {\widehat\G}^j , \conf(\u^*,D) \rangle   -  \langle {\G}^j , \conf(\widehat{\u}^j,D) \rangle \nonumber \\
&=& \big|\big| \G^j - \widehat\G^j \big|\big|_\infty 
+ \langle {\widehat\G}^j , \conf(\u^*,D) \rangle  - \langle \widehat \G^j , \conf(\widehat\u^j,D) \rangle
+ \langle \widehat \G^j , \conf(\widehat\u^j,D) \rangle  -  \langle {\G}^j , \conf(\widehat{\u}^j,D) \rangle \nonumber \\
&\leq&
\big|\big| \G^j - \widehat\G^j \big|\big|_\infty 
+ 2L\E_X\big[\big\|\widehat{\boldeta}_{S'}(X) \,-\, \boldeta(X)\big\|_1\big]
+ \langle \widehat \G^j , \conf(\widehat\u^j,D) \rangle  -  \langle {\G}^j , \conf(\widehat{\u}^j,D) \rangle \nonumber \\
&\leq&
\big|\big| \G^j - \widehat\G^j \big|\big|_\infty 
+ 2L\E_X\big[\big\|\widehat{\boldeta}_{S'}(X) \,-\, \boldeta(X)\big\|_1\big]
+ \big|\big| \G^j - \widehat\G^j \big|\big|_\infty  \big|\big|\conf(\widehat \u^j,D)\big|\big|_1 \nonumber \\
&=&
2\big|\big| \G^j - \widehat\G^j \big|\big|_\infty 
+ 2L\E_X\big[\big\|\widehat{\boldeta}_{S'}(X) \,-\, \boldeta(X)\big\|_1\big]  \nonumber \\
&\leq&
2\beta n^2 \sup_{\h\in\H_{\hat\boldeta_{S'}}} \big|\big| \conf(\h,D_{S''}) -\conf(\h,D) \big|\big|_\infty 
+ 2L\E_X\big[\big\|\widehat{\boldeta}_{S'}(X) \,-\, \boldeta(X)\big\|_1\big]  \label{eqn:lem-apx-factor-3} 
\end{eqnarray}
where the first and third inequalities in the above are due to the Holder's inequality, the second inequality is due to Equation \ref{eqn:lem-apx-factor-1} and the last inequality is due to Equation \ref{eqn:lem-apx-factor-2}.

Applying Lemma \ref{lem:uconvg-conf} to Equation \ref{eqn:lem-apx-factor-3} the proof is complete.
\end{proof}

\subsection{Proof of Theorem \ref{thm:BFW-consistency-non-smooth}}
\begin{thm*}[\textbf{Regret Bound for Sample-based Conditional Gradient Algorithm for a Larger Family of Non-decomposable Metrics}]
Let $S = (S', S'') \in (\X \times [n])^m$ be the given training sample drawn i.i.d. from distribution $D$. Let $\psi: [0, 1]^{n\times n} \> \R_+$ be such that for any $\rho \in (0,1)$, there exists $\psi_\rho: [0, 1]^{n\times n} \> \R_+$ which is concave over $\CC_D$, $L_\rho$-Lipschitz w.r.t. the $\ell_1$ norm over $\CC_{D_{S''}}$ and $\beta_\rho$-smooth w.r.t. the $\ell_1$-norm, with 
$$\sup_{\C\,\in\,\CC_D} |\psi(\C) - \psi_\rho(\C)| \leq \theta(\rho),$$ for some strictly increasing function $\theta: \R_+ \> \R_+$. Let $\widehat{\h}^{\FW,\rho}_S$ be the classifier learned by Algorithm \ref{algo:BFW} when applied to $\psi_\rho$ with training sample $S$ and parameters $\kappa \in \N$ and $\alpha \in (0,1)$. Then for any $\delta \in [0,1]$, we have with probability at least $1-\delta$ (over draw of $S$ from $D^m$)
\begin{eqnarray*}
\reg_D^\psi[\widehat{\h}^{\FW,\rho}_{S}] &\leq& 4L_\rho\E_X\big[\big\|\widehat{\boldeta}_{S'}(X) \,-\, \boldeta(X)\big\|_1\big] \,+\,  4\beta_\rho n^2C\sqrt{\frac{n^2\log(n)\log(\alpha m) + \log(n^2/\delta)}{\alpha m}}\\ 
&& \hspace{9.5cm}
+\, \frac{8\beta_\rho}{\kappa m+2} \,+\, 2\theta(\rho),
\end{eqnarray*}
where $C > 0$ is a distribution-independent constant. 
\end{thm*}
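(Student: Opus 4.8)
The plan is to obtain this regret bound as a direct corollary of \Thm{thm:BFW-consistency} applied to the smoothed metric $\psi_\rho$, together with the uniform approximation guarantee $\sup_{\C\in\CC_D}|\psi(\C)-\psi_\rho(\C)|\le\theta(\rho)$. First I would record a completely general ``approximation transfer'' inequality: for \emph{any} randomized classifier $\h:\X\>\Delta_n$ one has $\conf(\h,D)\in\CC_D$, and since the optimal value of a performance metric over randomized classifiers is exactly its supremum over $\CC_D$ (because $\CC_D=\{\conf(\h,D):\h\}$ by definition), the bound $\theta(\rho)$ gives
\[
\sup_{\C\in\CC_D}\psi(\C) \,\le\, \sup_{\C\in\CC_D}\psi_\rho(\C) + \theta(\rho)
\qquad\text{and}\qquad
\psi(\conf(\h,D)) \,\ge\, \psi_\rho(\conf(\h,D)) - \theta(\rho).
\]
Subtracting yields $\reg_D^\psi[\h]\le\reg_D^{\psi_\rho}[\h]+2\theta(\rho)$ for every $\h$, and in particular for $\h=\widehat{\h}^{\FW,\rho}_S$, whose confusion matrix $\conf(\widehat{\h}^{\FW,\rho}_S,D)$ indeed lies in $\CC_D$.

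Second, I would invoke \Thm{thm:BFW-consistency} with $\psi_\rho$ playing the role of $\psi$: by hypothesis $\psi_\rho$ is concave over $\CC_D$, $L_\rho$-Lipschitz over $\CC_{D_{S''}}$, and $\beta_\rho$-smooth w.r.t.\ the $\ell_1$ norm, which are exactly the conditions that theorem requires. Since $\widehat{\h}^{\FW,\rho}_S$ is by definition the output of Algorithm~\ref{algo:BFW} run on $\psi_\rho$ with parameters $\alpha\in(0,1)$ and $\kappa\in\N$, the theorem gives, with probability at least $1-\delta$ over the draw of $S\sim D^m$,
\[
\reg_D^{\psi_\rho}[\widehat{\h}^{\FW,\rho}_S] \,\le\, 4L_\rho\,\E_X\big[\big\|\widehat{\boldeta}_{S'}(X)-\boldeta(X)\big\|_1\big] + 4\beta_\rho n^2C\sqrt{\frac{n^2\log(n)\log(\alpha m)+\log(n^2/\delta)}{\alpha m}} + \frac{8\beta_\rho}{\kappa m+2}.
\]
Combining this with the transfer inequality on the same event of probability at least $1-\delta$ gives precisely the claimed bound.

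The argument is essentially bookkeeping, and I do not expect a genuinely hard step: all of the analytic work --- the approximate linear-maximization bound of \Lem{lem:approx-max-oracle}, the CG convergence estimate of \Thm{thm:regret-bound-fw-idealized}, and the uniform convergence of confusion matrices of \Lem{lem:uconvg-conf} --- has already been carried out inside \Thm{thm:BFW-consistency}, and the present statement merely pays an additive $2\theta(\rho)$ to carry that guarantee over from $\psi_\rho$ to $\psi$. The only points that need care are (i) that $\theta(\rho)$ is assumed to control $|\psi-\psi_\rho|$ only over $\CC_D$, so one must verify that both the (limiting) $\psi$-optimal confusion matrix and the confusion matrix of the learned classifier lie in $\CC_D$, which holds since both are confusion matrices of randomized classifiers under $D$; and (ii) that the Lipschitz constant $L_\rho$ enters \Thm{thm:BFW-consistency} only through $\CC_{D_{S''}}$, so no hypothesis relating $\psi$ and $\psi_\rho$ outside $\CC_D$ is needed. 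The bound is stated with $\rho$ fixed; the subsequent corollary then chooses $\rho=\rho(m)\to0$ slowly enough (using the stated forms of $\theta,L_\rho,\beta_\rho$ in \Tab{tab:perf-metrics-smoothed}) that every term vanishes.
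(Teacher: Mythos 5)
Your proposal is correct and follows essentially the same route as the paper's proof: apply \Thm{thm:BFW-consistency} to $\psi_\rho$ and then transfer the bound to $\psi$ at an additive cost of $2\theta(\rho)$ using the uniform approximation over $\CC_D$. If anything, your version is slightly cleaner, since by working with suprema over $\CC_D$ directly you avoid the paper's ``assume for simplicity the maximizer of $\psi(\conf(\h,D))$ exists'' step, and you correctly note that the transfer inequality is deterministic so the probability-$1-\delta$ event is unchanged.
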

\begin{proof}
 From Theorem \ref{thm:BFW-consistency} we have that
 \[
\reg_D^{\psi_\rho}[\widehat{\h}^{\FW,\rho}_{S}] \,\,\leq\,\, 4L_\rho\E_X\big[\big\|\widehat{\boldeta}_{S'}(X) \,-\, \boldeta(X)\big\|_1\big] \,+\, 4\beta_\rho n^2C\sqrt{\frac{n^2\log(n)\log(\alpha m) + \log(n^2/\delta)}{\alpha m}} \,+\, \frac{8\beta_\rho}{\kappa m+2}\;.
\]
 
 For simplicity assume that the maximizer of $\psi(\conf(\h,D))$ over $\h:\X\>\Delta_n$ exists. Let $\h^*\in \argmax_{\h:\X\>\Delta_n} \psi(\conf(\h,D))$. We then have that
\begin{eqnarray*}
  \reg_D^{\psi}[\widehat{\h}^{\FW,\rho}_{S}] 
  &=& 
  \sup_{\h:\X\>\Delta_n} \psi(\conf(\h,D)) - \psi(\conf(\widehat{\h}^{\FW,\rho}_{S},D)) \\
  &=&
  \psi(\conf(\h^*,D)) - \psi(\conf(\widehat{\h}^{\FW,\rho}_{S},D)) \\
  &\leq&
  \psi_\rho(\conf(\h^*,D)) - \psi_\rho(\conf(\widehat{\h}^{\FW,\rho}_{S},D)) + 2\theta(\rho) \\
  &\leq&
  \max_{\h:\X\>\Delta_n} \psi_\rho(\conf(\h,D)) - \psi_\rho(\conf(\widehat{\h}^{\FW,\rho}_{S},D)) + 2\theta(\rho) \\
  &=&
  \reg_D^{\psi_\rho}[\widehat{\h}^{\FW,\rho}_{S}] +  2\theta(\rho)
\end{eqnarray*}
\end{proof}

\section{Details of Calculations for Smoothed Performance Metrics in Table \ref{tab:perf-metrics-smoothed}}
\label{sec:smoothed-metrics}
We now give details of derivation of the function $\theta$ (defined in \Thm{thm:BFW-consistency-non-smooth}), Lipschitz constant $L_\rho$, and the smoothness parameter $\beta_\rho$ for the smoothed performance metric $\psi_\rho$. In each case, we make use the fact that the Lipschitz constant can be obtained by bounding the maximum absolute entry ($\ell_\infty$ norm) of the gradient of $\psi$ and the smoothness parameter is obtained by bounding the maximum absolute entry ($\ell_\infty$ norm) of its Hessian.
 
\noindent \textbf{H-mean.} For the H-mean, $\psi^\text{H}(\C) = n\Big({\sum_{y=1}^n \frac{\sum_{\widehat{y} = 1}^n C_{y,\widehat{y}}}{C_{y,y}}}\Big)^{-1}$ is $\frac{n}{\pi_{\min}}$-Lipschitz over $\CC_D$. Hence we have $\theta(\rho) = \frac{n}{\pi_{\min}}\rho$. The gradient of $\psi^\text{H}_\rho$ is given by:
\begin{eqnarray*} 
\nabla_{C_{uu'}} \psi^\text{H}_\rho (\C) \,=\, 
\begin{cases}
\frac{n\frac{\sum_{\hy \ne u} C_{u,\hy}}{(C_{u,u}+\rho)^2}}{\Big(\sum_{y=1}^n \frac{\sum_{\hy=1}^n C_{y,\hy} \,+\, \rho}{C_{y,y}+\rho}\Big)^2} 
&\text{if $u = u'$}\\
-\frac{n\frac{1}{C_{u,u}+\rho}}{\Big(\sum_{y=1}^n \frac{\sum_{\hy=1}^n C_{y,\hy} \,+\, \rho}{C_{y,y}+\rho}\Big)^2} &\text{otherwise}
\end{cases}.
\end{eqnarray*}
\noindent The Lipschitz constant $L_\rho$ for $\psi^\text{H}_\rho$ is then given by a bound on the $\ell_\infty$ norm of the above gradient. 
\begin{eqnarray*}
\|\nabla \psi^\text{H}_\rho (\C)\|_\infty &\leq& \max_{u \,\in\, [n]}\, \frac{n\frac{\sum_{\hy = 1}^n C_{u,\hy} + \rho}{(C_{u,u}+\rho)^2}}{\Big(\sum_{y=1}^n \frac{\sum_{\hy=1}^n C_{y,\hy} \,+\, \rho}{C_{y,y}+\rho}\Big)^2}\\
&\leq & \max_{u \,\in\, [n]}\, \frac{\frac{n}{\sum_{\hy = 1}^n C_{u,\hy} + \rho}\Big(\frac{\sum_{\hy = 1}^n C_{u,\hy} + \rho}{C_{u,u}+\rho}\Big)^2}{\Big(\sum_{y=1}^n \frac{\sum_{\hy=1}^n C_{y,\hy} \,+\, \rho}{C_{y,y}+\rho}\Big)^2}\\
&=& \max_{u \,\in\, [n]}\, \frac{n}{\pi_u + \rho}\Bigg(\frac{\frac{\sum_{\hy = 1}^n C_{u,\hy} + \rho}{C_{u,u}+\rho}}{\sum_{y=1}^n \frac{\sum_{\hy=1}^n C_{y,\hy} \,+\, \rho}{C_{y,y}+\rho}}\Bigg)^2 \\
&\leq& \max_{u \,\in\, [n]}\, \frac{n}{\pi_u + \rho}\\
& \leq & \frac{n}{\rho}.
\end{eqnarray*}
\noindent Next, we calculate the smoothness parameter $\beta_\rho$ of $\psi^{\text{H}}_\rho$ by computing the Hessian of $\psi^{\text{H}}_\rho$:
\begin{eqnarray*}
\lefteqn{\nabla^2_{C_{uu'}, C_{vv'}} \psi^\text{H}_\rho(\C) \,=\,}\\
&\hspace{2cm} 
\begin{cases}
\frac{-2n\sum_{\hy \ne u} C_{u,\hy}}{(C_{u,u}+\rho)^2}\,\frac{\Big(\sum_{y=1}^n \frac{\sum_{\hy=1}^n C_{y,\hy} \,+\, \rho}{C_{y,y}+\rho}\Big)\frac{1}{C_{u,u}+\rho} \,-\, 1}{\Big(\sum_{y=1}^n \frac{\sum_{\hy=1}^n C_{y,\hy} \,+\, \rho}{C_{y,y}+\rho}\Big)^3}
&\text{if $u = u' = v = v'$}\\
\frac{-2n}{C_{u,u}+\rho}\,\frac{\Big(\sum_{y=1}^n \frac{\sum_{\hy=1}^n C_{y,\hy} \,+\, \rho}{C_{y,y}+\rho}\Big)\frac{\sum_{\hy \ne u} C_{u,\hy}}{(C_{u,u}+\rho)^2} \,+\, 1}{\Big(\sum_{y=1}^n \frac{\sum_{\hy=1}^n C_{y,\hy} \,+\, \rho}{C_{y,y}+\rho}\Big)^3}
&\text{if $u = u' = v \ne v'$}\\
\frac{2n}{(C_{u,u}+\rho)^2}\,\frac{\Big(\sum_{y=1}^n \frac{\sum_{\hy=1}^n C_{y,\hy} \,+\, \rho}{C_{y,y}+\rho}\Big) \,-\, \sum_{\hy \ne u} C_{u,\hy}}{\Big(\sum_{y=1}^n \frac{\sum_{\hy=1}^n C_{y,\hy} \,+\, \rho}{C_{y,y}+\rho}\Big)^3}
&\text{if $u \ne u' = v = v'$}\\
\frac{-2n}{C_{u,u}+\rho}\,\frac{\Big(\sum_{y=1}^n \frac{\sum_{\hy=1}^n C_{y,\hy} \,+\, \rho}{C_{y,y}+\rho}\Big)\frac{1}{C_{u,u}+\rho} \,-\, 1}{\Big(\sum_{y=1}^n \frac{\sum_{\hy=1}^n C_{y,\hy} \,+\, \rho}{C_{y,y}+\rho}\Big)^3}
&\text{if $u \ne u' = v \ne v'$}\\
\frac{-2n\frac{\sum_{\hy \ne u} C_{u,\hy}}{(C_{u,u}+\rho)^2}\frac{\sum_{\hy \ne v} C_{v,\hy}}{(C_{v,v}+\rho)^2}}{\Big(\sum_{y=1}^n \frac{\sum_{\hy=1}^n C_{y,\hy} \,+\, \rho}{C_{y,y}+\rho}\Big)^3} 
&\text{if $u = u' \ne v = v'$}
\\
\frac{2n\frac{\sum_{\hy \ne u} C_{u,\hy}}{(C_{u,u}+\rho)^2}\frac{1}{C_{v,v}+\rho}}{\Big(\sum_{y=1}^n \frac{\sum_{\hy=1}^n C_{y,\hy} \,+\, \rho}{C_{y,y}+\rho}\Big)^3} 
&\text{if $u = u' \ne v \ne v'$}\\
\frac{2n\frac{1}{C_{u,u}+\rho}\frac{\sum_{\hy \ne v} C_{v,\hy}}{(C_{v,v}+\rho)^2}}{\Big(\sum_{y=1}^n \frac{\sum_{\hy=1}^n C_{y,\hy} \,+\, \rho}{C_{y,y}+\rho}\Big)^3} 
&\text{if $u \ne u' \ne v = v'$}\\
\frac{-2n\frac{1}{C_{u,u}+\rho}\frac{1}{C_{v,v}+\rho}}{\Big(\sum_{y=1}^n \frac{\sum_{\hy=1}^n C_{y,\hy} \,+\, \rho}{C_{y,y}+\rho}\Big)^3} 
&\text{otherwise}.
\end{cases}
\end{eqnarray*}

\noindent Bounding the entry of the Hessian matrix corresponding to confusion matrix entries $C_{u,u'}$ and $C_{v,v'}$ for $u' = u = v = v'$, we get
\begin{eqnarray*}
\lefteqn{|\nabla^2_{C_{uu'},C_{vv'}} \psi^\text{Q}_\rho(\C)|}\\
&\leq &
\frac{2n\sum_{\hy \ne u} C_{u,\hy}}{(C_{u,u}+\rho)^2}\,\frac{\Big(\sum_{y=1}^n \frac{\sum_{\hy=1}^n C_{y,\hy} \,+\, \rho}{C_{y,y}+\rho}\Big)\frac{1}{C_{u,u}+\rho} \,+\, 1}{\Big(\sum_{y=1}^n \frac{\sum_{\hy=1}^n C_{y,\hy} \,+\, \rho}{C_{y,y}+\rho}\Big)^3}\\
&\leq &
2n
\Bigg[
\frac{\frac{\sum_{\hy = 1}^n C_{u,\hy} \,+\, \rho}{C_{u,u}+\rho}}{\sum_{y=1}^n \frac{\sum_{\hy=1}^n C_{y,\hy} \,+\, \rho}{C_{y,y}+\rho}}
\Bigg]^3\,
\frac{C_{u,u}+\rho}{\big(\sum_{\hy = 1}^n C_{u,\hy} \,+\, \rho\big)^2}\,
\Bigg[
\Bigg(\sum_{y=1}^n \frac{\sum_{\hy=1}^n C_{y,\hy} \,+\, \rho}{C_{y,y}+\rho}\Bigg)\frac{1}{C_{u,u}+\rho} \,+\, 1
\Bigg]\\
&\leq &
2n
\Bigg[
\Bigg(\sum_{y=1}^n \frac{\sum_{\hy=1}^n C_{y,\hy} \,+\, \rho}{C_{y,y}+\rho}\Bigg)\frac{1}{\big(\sum_{\hy = 1}^n C_{u,\hy} \,+\, \rho\big)^2} \,+\, \frac{C_{u,u}+\rho}{\big(\sum_{\hy = 1}^n C_{u,\hy} \,+\, \rho\big)^2}\,
\Bigg]\\
&\leq &
2n
\bigg[
n\frac{1+\rho}{\rho^3} \,+\, \frac{1+\rho}{\rho^2}\,
\bigg]\\
& \leq &
4n^2\bigg(\frac{1+\rho}{\rho^3}\bigg).
\end{eqnarray*}
The above bound can be shown to hold for all entries for which $u = v$. We next consider the case when $u' \ne u \ne v \ne v'$; assuming w.l.o.g. that $C_{u,u} < C_{v,v}$, we have
\begin{eqnarray*}
{|\nabla^2_{C_{uu'},C_{vv'}} \psi^\text{Q}_\rho(\C)|}
&\leq &
2n\Bigg[\frac{\frac{\sum_{\hy=1}^n C_{u,\hy} \,+\, \rho}{C_{u,u}+\rho}}{\sum_{y=1}^n \frac{\sum_{\hy=1}^n C_{y,\hy} \,+\, \rho}{C_{y,y}+\rho}}\Bigg]^3\frac{1}{(\sum_{\hy=1}^n C_{u,\hy} \,+\, \rho)^3}\\
&\leq & \frac{2n}{\rho^3},
\end{eqnarray*}
where the same bound holds for all Hessian entries corresponding to $u \ne v$. The smoothness parameter in Table \ref{tab:perf-metrics-smoothed} then follows from the above bounds.


\noindent \textbf{Q-mean.} For the Q-mean, $\psi^\text{Q}(\C) = 1-\sqrt{\frac{1}{n}\sum_{y=1}^n\Big(1-\frac{C_{y,y}}{\sum_{\widehat{y} = 1}^n C_{y,\widehat{y}}}\Big)^2}$ is $\frac{1}{\pi_{\min}\sqrt{n}}$-Lipschitz over $\CC_D$. Hence we have $\theta(\rho) = \frac{1}{\sqrt{n}\pi_{\min}}\rho$. The gradient and Hessian for $\psi^\text{Q}_\rho$ are given by:

\[ 
\nabla_{C_{uu'}} \psi^\text{Q}_\rho (\C) \,=\, 
\begin{cases}
\frac{1}{\sqrt{n}}
\frac{\frac{(\sum_{\widehat{y} \ne u} C_{u,\widehat{y}}\,+\,\rho)^2}{(\sum_{\widehat{y} = 1}^n C_{u,\widehat{y}}\,+\,\rho)^3}}
{\sqrt{\sum_{y=1}^n\Big(\frac{\sum_{\widehat{y} \ne y} C_{y,\widehat{y}}\,+\,\rho}{\sum_{\widehat{y} = 1}^n C_{y,\widehat{y}}\,+\,\rho}\Big)^2}}
&\text{if $u = u'$}\\
-\frac{1}{\sqrt{n}}
\frac{\frac{(\sum_{\widehat{y} \ne u} C_{u,\widehat{y}}\,+\,\rho)C_{u,u}}{(\sum_{\widehat{y} = 1}^n C_{u,\widehat{y}}\,+\,\rho)^3}}
{\sqrt{\sum_{y=1}^n\Big(\frac{\sum_{\widehat{y} \ne y} C_{y,\widehat{y}}\,+\,\rho}{\sum_{\widehat{y} = 1}^n C_{y,\widehat{y}}\,+\,\rho}\Big)^2}}
 &\text{otherwise}
\end{cases}.
\] 
\noindent We next calculate the Lipschitz constant $L_\rho$ for $\psi^\text{Q}_\rho$ by bounding the $\ell_\infty$ norm of its gradient.
\begin{eqnarray*}
\|\nabla \psi^\text{Q}_\rho (\C)\|_\infty 
&\leq& 
\max_{u \,\in\, [n]}\, \frac{1}{\sqrt{n}}
\frac{\frac{1}{\sum_{\widehat{y} = 1}^n C_{u,\widehat{y}}\,+\,\rho}}
{\sqrt{\sum_{y=1}^n\Big(\frac{\sum_{\widehat{y} \ne y} C_{y,\widehat{y}}\,+\,\rho}{\sum_{\widehat{y} = 1}^n C_{y,\widehat{y}}\,+\,\rho}\Big)^2}}\\
&=&
\max_{u \,\in\, [n]}\, \frac{1}{\sqrt{n}}
\frac{1}{\sum_{\widehat{y} \ne y} C_{y,\widehat{y}}\,+\,\rho} \frac{\frac{\sum_{\widehat{y} \ne y} C_{y,\widehat{y}}\,+\,\rho}{\sum_{\widehat{y} = 1}^n C_{u,\widehat{y}}\,+\,\rho}}{\sqrt{\sum_{y=1}^n\Big(\frac{\sum_{\widehat{y} \ne y} C_{y,\widehat{y}}\,+\,\rho}{\sum_{\widehat{y} = 1}^n C_{y,\widehat{y}}\,+\,\rho}\Big)^2}}\\
&\leq& \max_{u \,\in\, [n]}\, \frac{1}{\sqrt{n}}
\frac{1}{\sum_{\widehat{y} \ne y} C_{y,\widehat{y}}\,+\,\rho}\\
& \leq & \frac{1}{\sqrt{n}\rho}.
\end{eqnarray*}

\noindent We next calculate the Hessian and bound its norm.
\begin{eqnarray*}
\lefteqn{\nabla^2_{C_{uu'},C_{vv'}} \psi^\text{Q}_\rho(\C) \,\,=\,}\\
&
\begin{cases}
-\frac{1}{\sqrt{n}}
\frac{\frac{
(\sum_{\widehat{y} \ne y} C_{u,\widehat{y}}\,+\,\rho)^2}{(\sum_{\widehat{y} = 1}^n C_{u,\widehat{y}}\,+\,\rho)^3}
\Big(3{\sum_{y=1}^n\Big(\frac{\sum_{\widehat{y} \ne y} C_{y,\widehat{y}}\,+\,\rho}{\sum_{\widehat{y} = 1}^n C_{y,\widehat{y}}\,+\,\rho}\Big)^2 
\frac{1}{\sum_{\widehat{y} = 1}^n C_{u,\widehat{y}}\,+\,\rho}
\,+\, 1
\Big)}
}
{\Big({\sum_{y=1}^n\Big(\frac{\sum_{\widehat{y} \ne y} C_{y,\widehat{y}}\,+\,\rho}{\sum_{\widehat{y} = 1}^n C_{y,\widehat{y}}\,+\,\rho}\Big)^2\Big)^{3/2}}}
& \text{if $u' = u = v = v'$}\\
\frac{1}{\sqrt{n}}
\frac{\frac{
\sum_{\widehat{y} \ne y} C_{u,\widehat{y}}\,+\,\rho}{(\sum_{\widehat{y} = 1}^n C_{u,\widehat{y}}\,+\,\rho)^3}
\Big({\sum_{y=1}^n\Big(\frac{\sum_{\widehat{y} \ne y} C_{y,\widehat{y}}\,+\,\rho}{\sum_{\widehat{y} = 1}^n C_{y,\widehat{y}}\,+\,\rho}\Big)^2 
\frac{-\sum_{\widehat{y} \ne y} C_{y,\widehat{y}}\,-\,\rho \,+\, 2C_{u,u}}{\sum_{\widehat{y} = 1}^n C_{u,\widehat{y}}\,+\,\rho}
\,+\, C_{u,u}
\Big)}
}
{\Big({\sum_{y=1}^n\Big(\frac{\sum_{\widehat{y} \ne y} C_{y,\widehat{y}}\,+\,\rho}{\sum_{\widehat{y} = 1}^n C_{y,\widehat{y}}\,+\,\rho}\Big)^2\Big)^{3/2}}}
& \text{if $u' = u = v \ne v'$}\\
-\frac{1}{\sqrt{n}}
\frac{\frac{
\sum_{\widehat{y} \ne u} C_{u,\widehat{y}}\,+\,\rho}{(\sum_{\widehat{y} = 1}^n C_{u,\widehat{y}}\,+\,\rho)^3}
\Big({\sum_{y=1}^n\Big(\frac{\sum_{\widehat{y} \ne y} C_{y,\widehat{y}}\,+\,\rho}{\sum_{\widehat{y} = 1}^n C_{y,\widehat{y}}\,+\,\rho}\Big)^2 
\frac{\sum_{\widehat{y} \ne y} C_{y,\widehat{y}}\,+\,\rho \,-\, 2C_{u,u}}{\sum_{\widehat{y} = 1}^n C_{u,\widehat{y}}\,+\,\rho}
\,-\, \sum_{\widehat{y} \ne u} C_{u,\widehat{y}} \,-\, \rho
\Big)}
}
{\Big({\sum_{y=1}^n\Big(\frac{\sum_{\widehat{y} \ne y} C_{y,\widehat{y}}\,+\,\rho}{\sum_{\widehat{y} = 1}^n C_{y,\widehat{y}}\,+\,\rho}\Big)^2\Big)^{3/2}}}
& \text{if $u' \ne u = v = v'$}\\
-\frac{1}{\sqrt{n}}
\frac{\frac{
C_{u,u}}{(\sum_{\widehat{y} = 1}^n C_{u,\widehat{y}}\,+\,\rho)^3}
\Big({\sum_{y=1}^n\Big(\frac{\sum_{\widehat{y} \ne y} C_{y,\widehat{y}}\,+\,\rho}{\sum_{\widehat{y} = 1}^n C_{y,\widehat{y}}\,+\,\rho}\Big)^2 
\frac{-2(\sum_{\widehat{y} \ne u} C_{u,\widehat{y}}\,+\,\rho) \,+\, C_{u,u}}{\sum_{\widehat{y} = 1}^n C_{u,\widehat{y}}\,+\,\rho}
\,+\, \sum_{\widehat{y} \ne u} C_{u,\widehat{y}}\,+\,\rho
\Big)}
}{\Big({\sum_{y=1}^n\Big(\frac{\sum_{\widehat{y} \ne y} C_{y,\widehat{y}}\,+\,\rho}{\sum_{\widehat{y} = 1}^n C_{y,\widehat{y}}\,+\,\rho}\Big)^2\Big)^{3/2}}}
& \text{if $u' \ne u = v \ne v'$}\\
\frac{1}{\sqrt{n}}
\frac{\frac{(\sum_{\widehat{y} \ne u} C_{u,\widehat{y}}\,+\,\rho)^2}{(\sum_{\widehat{y} = 1}^n C_{u,\widehat{y}}\,+\,\rho)^3}
\frac{(\sum_{\widehat{y} \ne y} C_{v,\widehat{y}}\,+\,\rho)^2}{(\sum_{\widehat{y} = 1}^n C_{v,\widehat{y}}\,+\,\rho)^3}}
{\Big({\sum_{y=1}^n\Big(\frac{\sum_{\widehat{y} \ne y} C_{y,\widehat{y}}\,+\,\rho}{\sum_{\widehat{y} = 1}^n C_{y,\widehat{y}}\,+\,\rho}\Big)^2\Big)^{3/2}}}
& \text{if $u' = u \ne v = v'$}
\\
-\frac{1}{\sqrt{n}}
\frac{\frac{(\sum_{\widehat{y} \ne u} C_{u,\widehat{y}}\,+\,\rho)C_{u,u}}{(\sum_{\widehat{y} = 1}^n C_{u,\widehat{y}}\,+\,\rho)^3}
\frac{(\sum_{\widehat{y} \ne y} C_{v,\widehat{y}}\,+\,\rho)^2}{(\sum_{\widehat{y} = 1}^n C_{v,\widehat{y}}\,+\,\rho)^3}}
{\Big({\sum_{y=1}^n\Big(\frac{\sum_{\widehat{y} \ne y} C_{y,\widehat{y}}\,+\,\rho}{\sum_{\widehat{y} = 1}^n C_{y,\widehat{y}}\,+\,\rho}\Big)^2\Big)^{3/2}}}
& \text{if $u' \ne u \ne v = v'$}\\
-\frac{1}{\sqrt{n}}
\frac{\frac{(\sum_{\widehat{y} \ne u} C_{u,\widehat{y}}\,+\,\rho)^2}{(\sum_{\widehat{y} = 1}^n C_{u,\widehat{y}}\,+\,\rho)^3}
\frac{(\sum_{\widehat{y} \ne y} C_{v,\widehat{y}}\,+\,\rho)C_{v,v}}{(\sum_{\widehat{y} = 1}^n C_{v,\widehat{y}}\,+\,\rho)^3}}
{\Big({\sum_{y=1}^n\Big(\frac{\sum_{\widehat{y} \ne y} C_{y,\widehat{y}}\,+\,\rho}{\sum_{\widehat{y} = 1}^n C_{y,\widehat{y}}\,+\,\rho}\Big)^2\Big)^{3/2}}}
& \text{if $u' = u \ne v \ne v'$}\\
\frac{1}{\sqrt{n}}
\frac{\frac{(\sum_{\widehat{y} \ne u} C_{u,\widehat{y}}\,+\,\rho)C_{u,u}}{(\sum_{\widehat{y} = 1}^n C_{u,\widehat{y}}\,+\,\rho)^3}
\frac{(\sum_{\widehat{y} \ne y} C_{v,\widehat{y}}\,+\,\rho)C_{v,v}}{(\sum_{\widehat{y} = 1}^n C_{v,\widehat{y}}\,+\,\rho)^3}}
{\Big({\sum_{y=1}^n\Big(\frac{\sum_{\widehat{y} \ne y} C_{y,\widehat{y}}\,+\,\rho}{\sum_{\widehat{y} = 1}^n C_{y,\widehat{y}}\,+\,\rho}\Big)^2\Big)^{3/2}}}
& \text{otherwise}.
\end{cases}
\end{eqnarray*}

We now obtain the smoothness parameter $\beta_\rho$ for $\psi^\text{Q}_\rho$. We start by bounding the entry of the Hessian matrix corresponding to confusion matrix entries $C_{u,u'}$ and $C_{v,v'}$ where $u' = u = v = v'$; the same bound can be shown to hold for all entries for which $u = v$.
\begin{eqnarray*}
\lefteqn{|\nabla^2_{C_{uu'},C_{vv'}} \psi^\text{Q}_\rho(\C)|}\\
&= &
\frac{1}{\sqrt{n}}
\Bigg[
\frac{
\Big(
\frac{\sum_{\widehat{y} \ne u} C_{u,\widehat{y}}\,+\,\rho}{\sum_{\widehat{y} = 1}^n C_{u,\widehat{y}}\,+\,\rho}\Big)^2}
{{\sum_{y=1}^n\Big(\frac{\sum_{\widehat{y} \ne y} C_{y,\widehat{y}}\,+\,\rho}{\sum_{\widehat{y} = 1}^n C_{y,\widehat{y}}\,+\,\rho}\Big)^2}}
\Bigg]^{3/2}
\frac{1}{\sum_{\widehat{y} = 1}^n C_{u,\widehat{y}}\,+\,\rho}
\Bigg[3{\sum_{y=1}^n\Bigg(\frac{\sum_{\widehat{y} \ne y} C_{y,\widehat{y}}\,+\,\rho}{\sum_{\widehat{y} = 1}^n C_{y,\widehat{y}}\,+\,\rho}\Bigg)^2 
\frac{1}{\sum_{\widehat{y} = 1}^n C_{u,\widehat{y}}\,+\,\rho}
\,+\, 1
\Bigg]}\\
&\leq &
\frac{1}{\sqrt{n}}\,
\frac{1}{\sum_{\widehat{y} = 1}^n C_{u,\widehat{y}}\,+\,\rho}\,
\Bigg[3{\sum_{y=1}^n\Bigg(\frac{\sum_{\widehat{y} \ne y} C_{y,\widehat{y}}\,+\,\rho}{\sum_{\widehat{y} = 1}^n C_{y,\widehat{y}}\,+\,\rho}\Bigg)^2 
\frac{1}{\sum_{\widehat{y} = 1}^n C_{u,\widehat{y}}\,+\,\rho}
\,+\, 1
\Bigg]}\\
&\leq &
\frac{1}{\sqrt{n}}\,
\frac{1}{\sum_{\widehat{y} = 1}^n C_{u,\widehat{y}}\,+\,\rho}\,
\Bigg[\frac{3n}{\sum_{\widehat{y} = 1}^n C_{u,\widehat{y}}\,+\,\rho}
\,+\, 1
\Bigg]\\
&\leq &
\frac{1}{\sqrt{n}}\,
\frac{1}{\rho}\,
\Bigg[\frac{3n}{\rho}
\,+\, 1
\Bigg]\\
& \leq &
\frac{4\sqrt{n}}{\rho^2}.
\end{eqnarray*}

\noindent We next bound the entry of the Hessian matrix corresponding to $u' \ne u \ne v \ne v'$; the same bound can be shown to hold for all entries where $u \ne v$. Assuming w.l.o.g. that $\sum_{\widehat{y} \ne u} C_{u,\widehat{y}} \,>\, \sum_{\widehat{y} \ne v} C_{v,\widehat{y}}$,
\begin{eqnarray*}
|\nabla^2_{C_{uu'},C_{vv'}} \psi^\text{Q}_\rho(\C)|
&\leq&
\frac{1}{\sqrt{n}}
\Bigg[
\frac{
\Big(\frac{\sum_{\widehat{y} \ne u} C_{u,\widehat{y}}\,+\,\rho}{\sum_{\widehat{y} = 1}^n C_{u,\widehat{y}}\,+\,\rho}\Big)^2}
{{\sum_{y=1}^n\Big(\frac{\sum_{\widehat{y} \ne y} C_{y,\widehat{y}}\,+\,\rho}{\sum_{\widehat{y} = 1}^n C_{y,\widehat{y}}\,+\,\rho}\Big)^2}}
\Bigg]^{3/2}
\frac{C_{u,u}}{\sum_{\widehat{y} \ne u} C_{u,\widehat{y}}\,+\,\rho}\,\,\frac{C_{v,v}}{(\sum_{\widehat{y} = 1}^n C_{v,\widehat{y}}\,+\,\rho)^3}\\
&\leq &
\frac{1}{\sqrt{n}}\,\,
\frac{C_{u,u}}{\sum_{\widehat{y} \ne u} C_{u,\widehat{y}}\,+\,\rho}\,\,\frac{C_{v,v}}{(\sum_{\widehat{y} = 1}^n C_{v,\widehat{y}}\,+\,\rho)^3}\\
&\leq &
\frac{1}{\sqrt{n}}\,\,
\frac{C_{u,u}}{\sum_{\widehat{y} \ne u} C_{u,\widehat{y}}\,+\,\rho}\,\,\frac{1}{(\sum_{\widehat{y} = 1}^n C_{v,\widehat{y}}\,+\,\rho)^2}\\
&\leq &
\frac{1}{\sqrt{n}}\,
\frac{1}{\rho^3}.
\end{eqnarray*}
\noindent The smoothness parameter $\beta_\rho$ in \Tab{tab:perf-metrics-smoothed} then follows from the above bounds.



\noindent \textbf{G-mean}. For the G-mean performance metric, $\psi^\text{G}(\C) \,=\,  \Big(\prod_{y=1}^n \frac{C_{y,y}}{\sum_{\widehat{y} = 1}^n C_{y,\widehat{y}}}\Big)^{1/n}$ is not Lipschitz over $\CC_D$. We now explicitly derive the form of $\theta$ for this performance metric.
\begin{eqnarray*}
\psi^\text{G}_\rho (\C) \,-\, \psi^\text{G}(\C)
&=& \bigg(\prod_{y=1}^n \frac{C_{y,y} + \rho}{\sum_{\widehat{y} = 1}^n C_{y,\widehat{y}}+\rho}\bigg)^{1/n}
\,-\, \bigg(\prod_{y=1}^n \frac{C_{y,y}}{\sum_{\widehat{y} = 1}^n C_{y,\widehat{y}}}\bigg)^{1/n}\\
&\leq& \bigg(\prod_{y=1}^n \bigg(\frac{C_{y,y}}{\sum_{\widehat{y} = 1}^n C_{y,\widehat{y}}}+\rho\bigg)\bigg)^{1/n}
\,-\, \bigg(\prod_{y=1}^n \frac{C_{y,y}}{\sum_{\widehat{y} = 1}^n C_{y,\widehat{y}}}\bigg)^{1/n}\\
&\leq& \bigg(\prod_{y=1}^n \frac{C_{y,y}}{\sum_{\widehat{y} = 1}^n C_{y,\widehat{y}}} \,+\, \sum_{y = 1}^n{n \choose y}\rho^{y}\bigg)^{1/n}
\,-\, \bigg(\prod_{y=1}^n \frac{C_{y,y}}{\sum_{\widehat{y} = 1}^n C_{y,\widehat{y}}}\bigg)^{1/n}\\
&\leq& \bigg(\prod_{y=1}^n \frac{C_{y,y}}{\sum_{\widehat{y} = 1}^n C_{y,\widehat{y}}} \,+\, \sum_{y = 1}^n{n \choose y}\rho\bigg)^{1/n}
\,-\, \bigg(\prod_{y=1}^n \frac{C_{y,y}}{\sum_{\widehat{y} = 1}^n C_{y,\widehat{y}}}\bigg)^{1/n}\\
&\leq& \bigg(\prod_{y=1}^n \frac{C_{y,y}}{\sum_{\widehat{y} = 1}^n C_{y,\widehat{y}}}+ 2^n\rho\bigg)^{1/n}
\,-\, \bigg(\prod_{y=1}^n
\frac{C_{y,y}}{\sum_{\widehat{y} = 1}^n C_{y,\widehat{y}}}\bigg)^{1/n}\\
&\leq& \bigg(\prod_{y=1}^n\frac{C_{y,y}}{\sum_{\widehat{y} = 1}^n C_{y,\widehat{y}}}\bigg)^{1/n} \,+\, 2\rho^{1/n} \,-\, \bigg(\prod_{y=1}^n\frac{C_{y,y}}{\sum_{\widehat{y} = 1}^n C_{y,\widehat{y}}}\bigg)^{1/n}\\
&=& 2\rho^{1/n},
\end{eqnarray*}
which gives us $\theta(\rho) \,\leq\, 2\rho^{1/n}$. Next, we provide the gradient $\psi^{\text{G}}_\rho$.
\[
\nabla_{C_{uu'}} \psi^\text{G}_\rho (\C) \,=\, 
\begin{cases}
\frac{1}{n}
\Big(\frac{C_{u,y}+\rho}{\sum_{\widehat{y} = 1}^n C_{u,\widehat{y}}+\rho}\Big)^{\frac{1}{n}-1}\frac{\sum_{\widehat{y} \ne u} C_{u,\widehat{y}}}{(\sum_{\widehat{y} = 1}^n C_{u,\widehat{y}}  + \rho)^2}\prod_{y \ne u}\Big(\frac{C_{y,y}+\rho}{\sum_{\widehat{y} = 1}^n C_{y,\widehat{y}} + \rho}\Big)^{1/n}
&\text{if $u = u'$}\\
-\frac{1}{n}
\Big(\frac{C_{u,y}+\rho}{\sum_{\widehat{y} = 1}^n C_{u,\widehat{y}}+\rho}\Big)^{\frac{1}{n}-1}\frac{C_{u,u}}{(\sum_{\widehat{y} = 1}^n C_{u,\widehat{y}}  + \rho)^2}\prod_{y \ne u}\Big(\frac{C_{y,y}+\rho}{\sum_{\widehat{y} = 1}^n C_{y,\widehat{y}} + \rho}\Big)^{1/n}
&\text{otherwise}
\end{cases}.
\]

\noindent The Lipschitz constant for $\psi^{\text{G}}_\rho$ is then obtained by bounding the norm of the above gradient.
\begin{eqnarray*}
\|\nabla \psi^\text{G}_\rho (\C)\|_\infty 
&\leq& 
\max_{u \,\in\, [n]}\, \frac{1}{n}
\bigg(\frac{C_{u,u}+\rho}{\sum_{\widehat{y} = 1}^n C_{u,\widehat{y}}+\rho}\bigg)^{\frac{1}{n}-1}\,
\frac{\max\{\sum_{\widehat{y} \ne u} C_{u,\widehat{y}}, \, C_{u,u}\}}{(\sum_{\widehat{y} = 1}^n C_{u,\widehat{y}}  + \rho)^2}\\
&\leq& 
\max_{u \,\in\, [n]}\, \frac{1}{n}
\bigg(\frac{C_{u,u}+\rho}{\sum_{\widehat{y} = 1}^n C_{u,\widehat{y}}+\rho}\bigg)^{\frac{1}{n}-1}
\frac{\sum_{\widehat{y} = 1}^n C_{u,\widehat{y}} + \rho}{(\sum_{\widehat{y} = 1}^n C_{u,\widehat{y}}  + \rho)^2}\\
&=& 
\max_{u \,\in\, [n]}\, \frac{1}{n}
\bigg(\frac{C_{u,u}+\rho}{\sum_{\widehat{y} = 1}^n C_{u,\widehat{y}}+\rho}\bigg)^{\frac{1}{n}-1}
\,\frac{1}{\sum_{\widehat{y} = 1}^n C_{u,\widehat{y}}  + \rho}\\
& \leq & 
\frac{1}{n} \bigg(\frac{\rho}{1+\rho}\bigg)^{\frac{1}{n}-1}\frac{1}{\rho}\\
& = & 
\frac{1}{n} \bigg(\frac{1+\rho}{\rho}\bigg)^{1-\frac{1}{n}}\frac{1}{\rho}.
\end{eqnarray*}

\noindent The Hessian for $\psi^\text{G}_\rho$ takes the form:
\begin{eqnarray*}
\lefteqn{\nabla^2_{C_{uu'},C_{vv'}} \psi^\text{G}_\rho(\C) \,\,=\,}\\
&
\begin{cases}
-\frac{1}{n}\Big(1-\frac{1}{n}\Big)
\Big(\frac{C_{u,u}+\rho}{\sum_{\widehat{y} = 1}^n C_{u,\widehat{y}}+\rho}\Big)^{\frac{1}{n}-2}
\Big(\frac{\sum_{\widehat{y} \ne u} C_{u,\widehat{y}}}{(\sum_{\widehat{y} = 1}^n C_{u,\widehat{y}}  + \rho)^2}\Big)^2\prod_{y \ne u}\Big(\frac{C_{y,y}+\rho}{\sum_{\widehat{y} = 1}^n C_{y,\widehat{y}} + \rho}\Big)^{1/n}
\\
\hspace{7cm} \text{if $u' = u = v = v'$}\\
\frac{1}{n}\Big(1-\frac{1}{n}\Big)
\Big(\frac{C_{u,u}+\rho}{\sum_{\widehat{y} = 1}^n C_{u,\widehat{y}}+\rho}\Big)^{\frac{1}{n}-2}
\frac{C_{u,u}\sum_{\widehat{y} \ne u} C_{u,\widehat{y}}}{(\sum_{\widehat{y} = 1}^n C_{u,\widehat{y}}  + \rho)^4}
\prod_{y \ne u}\Big(\frac{C_{y,y}+\rho}{\sum_{\widehat{y} = 1}^n C_{y,\widehat{y}} + \rho}\Big)^{1/n}
\\
\hspace{7cm}\text{if\, $u' = u = v \ne v'$ \,or\, $u' \ne u = v = v'$ \,or\, $u' \ne u = v \ne v'$}
\\
\frac{1}{n^2}
\Big(\frac{C_{u,u}+\rho}{\sum_{\widehat{y} = 1}^n C_{u,\widehat{y}}+\rho}\Big)^{\frac{1}{n}-1}
\Big(\frac{C_{v,v}+\rho}{\sum_{\widehat{y} = 1}^n C_{v,\widehat{y}}+\rho}\Big)^{\frac{1}{n}-1}
\frac{\sum_{\widehat{y} \ne u} C_{u,\widehat{y}}}{(\sum_{\widehat{y} = 1}^n C_{u,\widehat{y}}  + \rho)^2}
\frac{\sum_{\widehat{y} \ne v} C_{v,\widehat{y}}}{(\sum_{\widehat{y} = 1}^n C_{v,\widehat{y}}  + \rho)^2}
\underset{y \ne u, y \ne v}{\prod}\Big(\frac{C_{y,y} + \rho}{\sum_{\widehat{y} = 1}^n C_{y,\widehat{y}} + \rho}\Big)^{1/n}
\\\hspace{7cm}\text{if $u' = u \ne v = v'$}
\\
-\frac{1}{n^2}
\Big(\frac{C_{u,u}+\rho}{\sum_{\widehat{y} = 1}^n C_{u,\widehat{y}}+\rho}\Big)^{\frac{1}{n}-1}
\Big(\frac{C_{v,v}+\rho}{\sum_{\widehat{y} = 1}^n C_{v,\widehat{y}}+\rho}\Big)^{\frac{1}{n}-1}
\frac{\sum_{\widehat{y} \ne u} C_{u,\widehat{y}}}{(\sum_{\widehat{y} = 1}^n C_{u,\widehat{y}}  + \rho)^2}
\frac{C_{v,v}}{(\sum_{\widehat{y} = 1}^n C_{v,\widehat{y}}  + \rho)^2}
\underset{y \ne u, y \ne v}{\prod}\Big(\frac{C_{y,y} + \rho}{\sum_{\widehat{y} = 1}^n C_{y,\widehat{y}} + \rho}\Big)^{1/n}
\\\hspace{7cm}\text{if $u' = u \ne v \ne v'$}
\\
-\frac{1}{n^2}
\Big(\frac{C_{u,u}+\rho}{\sum_{\widehat{y} = 1}^n C_{u,\widehat{y}}+\rho}\Big)^{\frac{1}{n}-1}
\Big(\frac{C_{v,v}+\rho}{\sum_{\widehat{y} = 1}^n C_{v,\widehat{y}}+\rho}\Big)^{\frac{1}{n}-1}
\frac{C_{u,u}}{(\sum_{\widehat{y} = 1}^n C_{u,\widehat{y}}  + \rho)^2}
\frac{\sum_{\widehat{y} \ne v} C_{v,\widehat{y}}}{(\sum_{\widehat{y} = 1}^n C_{v,\widehat{y}}  + \rho)^2}
\underset{y \ne u, y \ne v}{\prod}\Big(\frac{C_{y,y} + \rho}{\sum_{\widehat{y} = 1}^n C_{y,\widehat{y}} + \rho}\Big)^{1/n}
\\\hspace{7cm}\text{if $u' \ne u \ne v = v'$}
\end{cases}\\
&
\hspace{-1.8cm}
\begin{cases}
\frac{1}{n^2}
\Big(\frac{C_{u,u}+\rho}{\sum_{\widehat{y} = 1}^n C_{u,\widehat{y}}+\rho}\Big)^{\frac{1}{n}-1}
\Big(\frac{C_{v,v}+\rho}{\sum_{\widehat{y} = 1}^n C_{v,\widehat{y}}+\rho}\Big)^{\frac{1}{n}-1}
\frac{C_{u,u}}{(\sum_{\widehat{y} = 1}^n C_{u,\widehat{y}}  + \rho)^2}
\frac{C_{v,v}}{(\sum_{\widehat{y} = 1}^n C_{v,\widehat{y}}  + \rho)^2}
\underset{y \ne u, y \ne v}{\prod}\Big(\frac{C_{y,y} + \rho}{\sum_{\widehat{y} = 1}^n C_{y,\widehat{y}} + \rho}\Big)^{1/n}
\\\hspace{7cm}\text{otherwise}.
\end{cases}
\end{eqnarray*}

\noindent The smoothness parameter $\beta_\rho$ is then given by the following bound on the norm of the Hessian:
\begin{eqnarray*}
\lefteqn{\|\nabla^2 \psi^\text{G}_\rho(\C)\|_\infty}\\
&\leq& \max_{u, v \,\in\, [n]}\, 
\frac{1}{n^2}
\Big(\frac{C_{u,u}+\rho}{\sum_{\widehat{y} = 1}^n C_{u,\widehat{y}}+\rho}\Big)^{\frac{1}{n}-1}
\Big(\frac{C_{v,v}+\rho}{\sum_{\widehat{y} = 1}^n C_{v,\widehat{y}}+\rho}\Big)^{\frac{1}{n}-1}
\,\frac{\max\{C_{u,u},\,\sum_{\widehat{y} \ne u} C_{u,\widehat{y}}\}}{(\sum_{\widehat{y} = 1}^n C_{u,\widehat{y}}  + \rho)^2}
\frac{\max\{C_{v,v},\,\sum_{\widehat{y} \ne v} C_{v,\widehat{y}}\}}{(\sum_{\widehat{y} = 1}^n C_{v,\widehat{y}}  + \rho)^2}\\
&\leq& 
\max_{u \,\in\, [n]}\, \frac{1}{n^2}
\bigg(\frac{C_{u,u}+\rho}{\sum_{\widehat{y} = 1}^n C_{u,\widehat{y}}+\rho}\bigg)^{\frac{2}{n}-2}
\bigg(\frac{\sum_{\widehat{y} = 1}^n C_{u,\widehat{y}} + \rho}{(\sum_{\widehat{y} = 1}^n C_{u,\widehat{y}}  + \rho)^2}\bigg)^2\\
&=& 
\max_{u \,\in\, [n]}\, \frac{1}{n^2}
\bigg(\frac{C_{u,u}+\rho}{\sum_{\widehat{y} = 1}^n C_{u,\widehat{y}}+\rho}\bigg)^{\frac{2}{n}-2}
\bigg(\frac{1}{\sum_{\widehat{y} = 1}^n C_{u,\widehat{y}}  + \rho}\bigg)^2\\
&\leq& 
\frac{1}{n^2}
\bigg(\frac{\rho}{1+\rho}\bigg)^{\frac{2}{n}-2}
\frac{1}{\rho^2}\\
&=& 
\frac{1}{n^2}
\bigg(\frac{1+\rho}{\rho}\bigg)^{2-\frac{2}{n}}
\frac{1}{\rho^2}.
\end{eqnarray*}

\end{document}